\documentclass[twoside,11pt]{article}

\usepackage{jmlr2e}
\usepackage{array}
\usepackage[utf8]{inputenc}
\usepackage{bm}
\usepackage{mathrsfs} 
\usepackage{amsmath}
\usepackage{amssymb}
\usepackage{hyperref}
\usepackage{bbm} 
\usepackage{color}
\usepackage{graphicx}
\usepackage{caption}
\usepackage{subcaption}
\usepackage{booktabs}
\usepackage{makecell}

\DeclareMathOperator{\Z}{\mathbb{Z}}

\DeclareMathOperator{\RR}{\mathbb{R}}
\DeclareMathOperator{\C}{\mathbb{C}}
\newtheorem{assumption}{Assumption}
\newcommand{\R}{\mathbb{R}}
 
\newcommand{\E}{\mathbb{E}}
\newcommand{\PP}{\mathbb{P}}

\newcommand{\spr}{\mathrm{spr}}
\newcommand{\mtx}[1]{\boldsymbol{#1}}
\newcommand{\vct}[1]{\boldsymbol{#1}}
\DeclareMathOperator{\tr}{tr}
\renewcommand{\l}{\left}
\renewcommand{\r}{\right}

\newcommand{\va}{\vct{a}}

\newcommand{\mSigma}{\mtx{\Sigma}}
\newcommand{\mPhi}{\mtx{\Phi}}
\newcommand{\mPi}{\mtx{\Pi}}
\newcommand{\oPi}{\overline{\mtx{\Pi}}}
\newcommand{\mPsi}{\mtx{\Psi}}
\newcommand{\mTheta}{\mtx{\Theta}}

\newcommand{\mC}{\mtx{C}}

\newcommand{\mH}{\mtx{H}}
\newcommand{\mE}{\mtx{E}}

\newcommand{\mP}{\mtx{P}}
\newcommand{\mS}{\mtx{S}}

\newcommand{\mM}{\mtx{M}}
\newcommand{\mV}{\mtx{V}}

\newcommand{\mA}{\mtx{A}}

\newcommand{\vX}{\vct{X}}
\newcommand{\vZ}{\vct{Z}}
\newcommand{\vz}{\vct{z}}
\newcommand{\vY}{\vct{Y}}
\newcommand{\vv}{\vct{v}}
\newcommand{\vw}{\vct{w}}
\newcommand{\vs}{\vct{s}}

\newcommand{\mGamma}{\mtx{\Gamma}}
\newcommand{\vr}{\mathbf{r}}
\DeclareMathOperator{\diag}{diag}

\newcommand{\mI}{\mtx{I}}
\newcommand{\mz}{\mtx{0}}

\providecommand{\mL}{\mtx{L}}
\providecommand{\mR}{\mtx{R}}
\providecommand{\mB}{\mtx{B}}

\providecommand{\mY}{\mtx{Y}}

\newcommand{\cblue}[1]{#1}
\usepackage{lastpage}

\ShortHeadings{Scalable estimation of VARMA models}{Paulin and Elvira}
\firstpageno{1}

\begin{document}

\title{Scalable estimation of VARMA models}

\author{\name Daniel Paulin \email daniel.paulin@ntu.edu.sg \\
       \addr College of Computing \& Data Science\\
       Nanyang Technological University\\
       Singapore
       \AND
       \name V\'ictor Elvira \email victor.elvira@ed.ac.uk \\
       \addr School of Mathematics\\
       University of Edinburgh\\
       Edinburgh, UK}

\makeatletter\def\@starteditor{}\def\@editor{}\def\@endeditor{}\makeatother   

\maketitle
\thispagestyle{plain}   

\begin{abstract}%

Vector autoregressive moving-average (VARMA) models have long been considered impractical beyond moderate dimensions: the likelihood is non-convex, the parametrization is identified only up to equivalence, and every evaluation costs a pass over the entire series. Yet their moving-average term captures with a few parameters what a pure autoregression matches only with many lags. We introduce an estimation framework that removes this computational barrier: each optimization iteration is independent of the series length $T$. The framework combines a partial-autocorrelation reparametrization that guarantees stationarity and invertibility by construction, penalized least-squares and covariance-marginalized losses that depend on the data only through fixed-size sufficient statistics, and a Parseval (Fourier) evaluation whose cost is near-linear in the truncation length. This yields two point estimators: a regularized least-squares fit and a covariance-marginalized maximum-a-posteriori estimator; in both cases, using Gaussian priors on the reparametrized VARMA coefficients with separate standard deviations for diagonal and non-diagonal coefficients. We further prove that both estimators recover the infinite-autoregressive representation of the true process at a near-parametric rate in fixed dimension, so the truncation does not introduce asymptotic bias: both estimators still converge to the true dynamics at a near-parametric rate. The same sufficient-statistics machinery extends, at the same leading cost, to seasonal dynamics, exogenous regressors (VARMAX), and rolling-window refits. Empirically, the estimators stay close to the oracle forecast error from $d=10$ to $d=40$ (where the classical conditional-MLE implementation returns non-invertible fits whose forecasts diverge) and match or beat VAR, Bayesian-VAR, component-wise ARMA, and sparse-VARMA baselines on retail-demand, meteorological, and air-quality data. These results bring likelihood-based VARMA estimation, at a per-iteration cost independent of the series length, to the problem sizes at which practitioners have until now relied on VAR models.

\end{abstract}

\begin{keywords}
  VARMA, VARMAX, seasonal VARMA, high-dimensional time series, scalable estimation, partial autocorrelation, forecasting
\end{keywords}

\newpage
\section{Introduction}

Multivariate time series arise throughout science, and vector autoregressive moving average (VARMA) processes are a natural model class for them \citep{box2015time,reinsel2003elements,lutkepohl2005new,tsay2013multivariate,hannan2012statistical,kilian2017structural}. Joint modelling captures cross-variable dependence that componentwise analyses miss, and the class is closed under marginalization. As a result, the observed components of a partially observed vector autoregressive (VAR) model follow a VARMA process (see e.g.\ \citealp{Akaike1974}, or Section~3.7 of \citealp{tsay2013multivariate}).

The estimation of vector autoregressive (VAR) processes is comparatively simple; it reduces to solving large linear systems of equations and makes VARs applicable to high-dimensional series. VARMA estimation is however not easy to be applied. Despite much effort, its non-convex likelihood and identifiability problems have kept it from scaling, and the standard packages (\citealp{MTS}; \citealp{seabold2010statsmodels}) are often unreliable beyond about ten dimensions. In the dense regime, existing methods face important challenges. The likelihood-based methods (state-space Kalman-filter ML, the Whittle likelihood, Bayesian VARMA) are statistically efficient but scale poorly in the dimension or the series length, while the scalable ones (Hannan--Rissanen two-stage least squares, subspace identification) replace the likelihood by a cheaper convex criterion, remaining consistent but statistically less efficient. We review them in Section~\ref{sec:VARMAintro}. To the best of our knowledge, there is still missing  a one-stage, dense, likelihood-motivated estimator whose per-iteration cost does not grow with $T$ and that stays tractable as the number of series grows. We address that gap in this paper. Faced with these difficulties, practitioners have long favoured VARs over VARMAs even where a moving-average term would be more parsimonious \citep[Section~1.1]{lutkepohl2006forecasting}. 

In this paper, we develop a dense VARMA framework with computational efficiency and sound theoretical guarantees that can scale up to high dimension. We minimize a least-squares-type prediction loss, written in reparametrized coordinates (without loss of generality) in which stationarity and invertibility hold by construction. The loss depends on the data only through fixed-size sufficient statistics, so that each optimization iteration is independent of the series length $T$. As the experiments show, the resulting estimators are competitive in accuracy with the maximum likelihood estimator (MLE) on small models and remain tractable at dimensions where MLE is not.

Although we develop the method for the VARMA model, the same sufficient-statistics framework accommodates, without changing its leading computational cost, two extensions that matter in applications: a multiplicative \emph{seasonal VARMA} for periodic data (Section~\ref{sec:seasonal_varma}), and a \emph{VARMAX model} that conditions the dynamics on exogenous regressors. The exogenous terms enter the very same pre-computed sufficient statistics, so VARMAX inherits the same $T$-independent gradient cost. This underpins a central part of our empirical study: one of our three real-data benchmarks forecasts retail demand (units) \emph{conditional} on the observed price path (Section~\ref{sec:exp_dominicks}), a genuine VARMAX problem in which the moving-average dynamics and the exogenous price together prove decisive.

\begin{table}[h]
\centering
\renewcommand{\arraystretch}{1.15}
\caption{Per-gradient computational complexity of VARMA$(p,q)$ fitting methods. Our methods replace the per-gradient $O(T)$ cost of MLE-based approaches by a one-time pre-pass that scales with $T$, after which gradient evaluations are independent of $T$.  The Fourier (Parseval) evaluation of $\mS_2$ further reduces the dependence on the truncation length $m$ from $m^2$ to $m\log m$, making large-$m$ regimes tractable. The MLE bounds are representative dense companion-form costs, not lower bounds for every state-space algorithm. In the per-gradient column, the final $O((p^2{+}q^2)d^3)$ term for our methods is the $T$- and $m$-independent PAR-reparameterisation overhead; the preceding, data-dependent terms are those the Fourier evaluation reduces.}
\label{tab:complexity_intro}
{\small
\begin{tabular}{@{}>{\raggedright\arraybackslash}p{0.40\linewidth}ll@{}}
\toprule
Method & One-time pre-pass & Per-gradient evaluation \\
\midrule
\multicolumn{3}{@{}l}{\emph{Existing approaches}} \\
\quad MLE, recursive cond.\ likelihood \citep{tsay2013multivariate} & none &$O\bigl(T\,d^3\,(p+q)^2\bigr)$ \\
\quad MLE, state-space + Kalman filter \citep{durbin2012time,metaxoglou2007maximum} & none &$O\bigl(T\,d^3\,\max(p,q+1)^3\bigr)$ \\
\midrule
\multicolumn{3}{@{}l}{\emph{This paper: truncated VARMA$(p,q,m)$}} \\
\quad Sufficient-statistics evaluation (Section~\ref{sec:LSloss}) & $O\bigl(T\,m\,d^2\bigr)$ & $O\bigl(m^2\,d^3 + (p^2{+}q^2)d^3\bigr)$ \\
\quad \;+ Fourier (Parseval) evaluation of $\mS_2$ (Section~\ref{sec:complexity_S2}) & $O\bigl(T\log m\,d^2\bigr)$ & $O\bigl(m\log m\,d^2 + m\,d^3 + (p^2{+}q^2)d^3\bigr)$ \\
\bottomrule
\end{tabular}
}
\end{table}

Table~\ref{tab:complexity_intro} summarises the per-gradient computational cost of these approaches. Both existing MLE-based methods scale linearly in $T$ per gradient evaluation, since each optimizer step must touch the entire training history through either the recursive innovations or the Kalman filter. Our truncated VARMA approach absorbs the $T$-dependence into a one-time pre-pass that produces a fixed-size sufficient-statistic summary, after which every gradient evaluation is independent of $T$. The Fourier-based variant additionally avoids materialising the $m^2 d^2$ second-moment block matrix and reduces the per-gradient cost to nearly linear in $m$, making large truncation lengths computationally tractable. The figures in Table~\ref{tab:complexity_intro} are the dominant, data-dependent part of each gradient; the PAR reparametrisation adds a further $O((p^2{+}q^2)\,d^3)$ per evaluation, and the multiplicative \emph{seasonal} VARMA of Section~\ref{sec:seasonal_varma} evaluates at the same leading $T$-independent cost, with its full dependence on the orders $(p,q,P,Q)$, the period $s$, and $(m,d)$ given precisely in~\eqref{eq:sarma_cost}.

\paragraph{Contributions.}
(i) A scalable estimation framework for dense VARMA whose per-iteration optimization cost is independent of the series length $T$: the fixed-size sufficient statistics that summarize a VAR regression are shown to drive the nonlinear VARMA problem as well.
(ii) Statistical guarantees: both estimators recover the infinite-VAR representation at a near-parametric rate in fixed dimension, so scalability does not sacrifice consistency.
(iii) Practical extensions including (a) seasonal VARMA, exogenous regressors (VARMAX), and (b) incremental rolling-window refits, both at the same leading cost, validated on retail-demand, meteorological, and air-quality benchmarks.

The rest of the paper is organized as follows. Section~\ref{sec:VARMAintro} introduces VARMA processes and reviews the two standard MLE-based fitting strategies and their computational bottleneck (Section~\ref{sec:bg_mle}), the basic least-squares formulation (Section~\ref{sec:bg_ls}), the identifiability issues that we do not address (Section~\ref{sec:bg_identifiability}), and the existing software (Section~\ref{sec:bg_software}). Section~\ref{sec:truncated_varma} defines the \emph{truncated VARMA$(p,q,m)$} model and the stationarity/invertibility constraints it must obey. Sections~\ref{sec:LS} and~\ref{sec:MAP} develop the two pointwise loss functions we use to fit the truncated model, the regularised least-squares (RLS) loss and the $\Sigma$-marginalised MAP loss, written directly in the unconstrained PAR coordinates of Section~\ref{sec:VARMAintro} so that the stationarity and invertibility constraints are enforced by construction, and show that both depend on the data only through pre-computed sufficient statistics; Section~\ref{sec:consistency} proves that, for fixed dimension, both estimators recover the infinite-VAR representation of the true process at a near-parametric rate as $T$ and $m$ grow jointly. Both objectives are minimised by L-BFGS on the unconstrained PAR parameters $(\mM^{\mPhi}_{1:p},\mM^{\mTheta}_{1:q})\in\RR^{(p+q)d^2}$, with no projection step required. Section~\ref{sec:complexity_S2} replaces the materialised $m^2\,d^2$ second-moment block by a Fourier/Parseval representation, bringing the per-gradient evaluation cost down to $O(m\log m\,d^2 + m\,d^3)$. Section~\ref{sec:Experiments} presents our experimental results. Appendix~\ref{sec:statespace} gives the LG-SSM equivalence used in Section~\ref{sec:bg_mle}, and Appendix~\ref{sec:online} shows how the sufficient statistics, and hence the entire fit, can be updated incrementally as the training window slides forward, so that refits cost $O(\Delta\,m\,d^2)$ per shift.

\section{Background}

\subsection{VARMA models}\label{sec:VARMAintro}
For $t\in \Z$, let
\begin{equation}\label{eq:VARMAdef}
\vZ_{t}=\sum_{j=1}^{p} \mPhi_j\vZ_{t-j}+\va_t-\sum_{j=1}^{q}\mTheta_j \va_{t-j}
\end{equation}
be a stationary $d$-dimensional VARMA model with mean $\vct{0}$. Here $\vZ_t \in \mathbb{R}^d$, $\mPhi_j\in \mathbb{R}^{d\times d}$ for $1\le j\le p$, $\mTheta_j\in \mathbb{R}^{d\times d}$ for $1\le j\le q$, $\va_t\in \mathbb{R}^d$ is a vector white-noise process with mean $\vct{0}$ and covariance matrix $\mSigma\in \mathbb{R}^{d\times d}$ (independent over time).

\cblue{Let $\mPhi(z):=\mI-\sum_{j=1}^{p}\mPhi_j z^j$ and $\mTheta(z):=\mI-\sum_{j=1}^{q}\mTheta_j z^j$ be the matrix-valued AR and MA \emph{lag polynomials} (so that $\mPhi(0)=\mTheta(0)=\mI$), in terms of which the model~\eqref{eq:VARMAdef} reads $\mPhi(B)\vZ_t=\mTheta(B)\va_t$ with $B$ the backshift operator ($B\vZ_t=\vZ_{t-1}$); here $z\in \C$.}
The condition required for the stationarity of this model can be \cblue{established in the spectral domain as follows.}
\begin{assumption}[Stationarity and invertibility]\label{ass:roots}
We assume that
\begin{equation}\label{eq:roots}
\det \mPhi(z)\neq 0 \text{ and }\det \mTheta(z)\neq 0 \text{ if } |z|\le 1, z\in \C,
\end{equation}
i.e., all complex roots of $\det \mPhi(z)=0$ and $\det \mTheta(z)=0$ are outside the unit circle.
\end{assumption}

\paragraph{Companion matrices and spectral radius reformulation.}
Assumption~\ref{ass:roots} admits an equivalent characterisation in terms of the spectral radii of the AR and MA \emph{block companion matrices}. Define $\mathcal{C}_{\mPhi}\in \RR^{pd\times pd}$, $\mathcal{C}_{\mTheta}\in\RR^{qd\times qd}$ as
\begin{equation}\label{eq:companions}
\begin{aligned}
\mathcal{C}_{\mPhi}&:=\begin{pmatrix}\mz & \mI & \mz & \ldots & \mz\\
\mz & \mz & \mI & \ldots & \mz\\
\vdots & \vdots & \vdots & \ddots & \vdots\\
\mz & \mz & \mz & \ldots & \mI\\
\mPhi_p & \mPhi_{p-1} & \ldots  & \ldots &\mPhi_1
\end{pmatrix},\hspace{5mm}
\mathcal{C}_{\mTheta}&:=\begin{pmatrix}\mz & \mI & \mz & \ldots & \mz\\
\mz & \mz & \mI & \ldots & \mz\\
\vdots & \vdots & \vdots & \ddots & \vdots\\
\mz & \mz & \mz & \ldots & \mI\\
\mTheta_q & \mTheta_{q-1} & \ldots  & \ldots &\mTheta_1
\end{pmatrix}.
\end{aligned}
\end{equation}
The non-zero eigenvalues of $\mathcal{C}_{\mPhi}$ (resp.\ $\mathcal{C}_{\mTheta}$) are the reciprocals of the roots of $\det\mPhi(z)$ (resp.\ $\det\mTheta(z)$), so the root-outside-the-unit-circle conditions of Assumption~\ref{ass:roots} are equivalent to $\spr(\mathcal{C}_{\mPhi})<1$ and $\spr(\mathcal{C}_{\mTheta})<1$, where $\spr(\cdot)$ denotes the spectral radius. Section~\ref{sec:spectral_constraints} returns to the companion matrices to derive the geometric decay of the infinite-VAR coefficients.

\paragraph{Partial-autocorrelation reparametrisation.}
The stable set
\[
\{(\mPhi_{1:p},\mTheta_{1:q}):\spr(\mathcal{C}_{\mPhi})<1,\;\spr(\mathcal{C}_{\mTheta})<1\}
\]
is open, non-convex, and bounded by a complex algebraic variety, making constrained optimisation over it cumbersome. We will exploit the well-known fact that this set admits a smooth bijection from unconstrained Euclidean space, via the multivariate Levinson recursion in partial-autocorrelation form \citep{ansley1986note,heaps2023enforcing}. Reparametrisations of this kind have been used for estimation before: \citet{roy2019constrained} compute exact-likelihood estimates of causal invertible VARMA models by unconstrained optimization in closely related coordinates, and \citet{heaps2023enforcing} enforces stationarity of Bayesian VARs through a prior in these coordinates. Our contribution lies not in the parametrisation but in the loss evaluated on it: the truncated sufficient-statistic criteria of Sections~\ref{sec:LS}--\ref{sec:MAP}, whose per-iteration cost is independent of $T$. The construction is identical for the AR and MA sides, so we first describe a generic transformation acting on a stack of $r$ unconstrained lag matrices, then express the full VARMA reparametrisation as a factorised product.

\textit{Generic transformation $\tau_r$.}
For any $r\in\mathbb{N}$, we define a smooth bijection
\begin{equation}\label{eq:tau_def}
\tau_r : \RR^{r\times d\times d} \;\longrightarrow\; \bigl\{\mA_{1:r}\in\RR^{r\times d\times d}:\spr(\mathcal{C}_{\mA})<1\bigr\},
\end{equation}
where $\mathcal{C}_{\mA}\in\RR^{rd\times rd}$ is the block companion matrix of $\mA_{1:r}$ defined analogously to~\eqref{eq:companions}. Given an unconstrained stack $\mM_{1:r}=(\mM_1,\ldots,\mM_r)\in\RR^{r\times d\times d}$, the map $\tau_r$ produces $\mA_{1:r}=\tau_r(\mM_{1:r})$ in two stages.

\emph{Stage 1 (per-lag contraction).} Write $\mathrm{chol}(\mSigma)$ for the Cholesky factor of a positive-definite matrix $\mSigma$, the unique lower-triangular matrix with positive diagonal entries satisfying $\mathrm{chol}(\mSigma)\,\mathrm{chol}(\mSigma)^\top=\mSigma$. We state both stages in the Cholesky form in which they are implemented. For each $s=1,\ldots,r$, let $\mL_{A,s}:=\mathrm{chol}(\mI+\mM_s^\top\mM_s)$ and define
\begin{equation}\label{eq:contraction}
\mP_s \;:=\; \mM_s\,\mL_{A,s}^{-\top}.
\end{equation}
Then $\mP_s^\top\mP_s=\mI-(\mL_{A,s}^\top\mL_{A,s})^{-1}$, whose eigenvalues are $\sigma_i(\mM_s)^2/(1+\sigma_i(\mM_s)^2)<1$, so $\sigma_{\max}(\mP_s)<1$; conversely, given any $\mP$ with $\sigma_{\max}(\mP)<1$, setting $\mL_{A}^{-1}=\mathrm{chol}(\mI-\mP^\top\mP)$ and $\mM=\mP\mL_{A}^\top$ recovers the unique preimage. Hence~\eqref{eq:contraction} is a smooth diffeomorphism from $\RR^{d\times d}$ onto the open ball of strict contractions $\{\mP\in\RR^{d\times d}:\sigma_{\max}(\mP)<1\}$.

\emph{Stage 2 (Whittle--Ansley--Kohn recursion, innovation-anchored).} The recursion is anchored at a fixed innovation covariance, taken to be $\mI_d$, and is carried out on Cholesky factors $\mL^f_s$, $\mL^b_s$ of the forward and backward partial-prediction covariances $\mSigma^f_s=\mL^f_s(\mL^f_s)^\top$, $\mSigma^b_s=\mL^b_s(\mL^b_s)^\top$. The anchor is first propagated backward: set $\mL^f_r=\mI_d$ and, for $s=r,r-1,\ldots,1$,
\begin{equation}\label{eq:par_backward}
\mL^f_{s-1} \;=\; \mL^f_s\,\mathrm{chol}\bigl(\mI-\mP_s\mP_s^\top\bigr)^{-1}.
\end{equation}
The terminal factor determines the stationary variance $\mGamma_0=\mL^f_0(\mL^f_0)^\top$ consistent with innovation covariance $\mI_d$, and the backward chain starts from $\mL^b_0=\mL^f_0$. The coefficient pass then runs forward: for $s=1,\ldots,r$,
\begin{align}\label{eq:par_recursion}
\mA^{(s)}_s &\;=\; \mL^f_{s-1}\,\mP_s\,(\mL^b_{s-1})^{-1},\nonumber\\
\widetilde{\mA}^{(s)}_s &\;=\; \mL^b_{s-1}\,\mP_s^\top\,(\mL^f_{s-1})^{-1},\nonumber\\
\mA^{(s)}_j &= \mA^{(s-1)}_j - \mA^{(s)}_s\,\widetilde{\mA}^{(s-1)}_{s-j},\quad j=1,\ldots,s-1,\\
\widetilde{\mA}^{(s)}_j &= \widetilde{\mA}^{(s-1)}_j - \widetilde{\mA}^{(s)}_s\,\mA^{(s-1)}_{s-j},\nonumber\\
\mL^b_s &\;=\; \mL^b_{s-1}\,\mathrm{chol}\bigl(\mI-\mP_s^\top\mP_s\bigr),\nonumber
\end{align}
the forward factors $\mL^f_{s-1}$ being those produced by the backward pass~\eqref{eq:par_backward}. All factors are well defined, since $\mI-\mP_s\mP_s^\top\succ\mz$ and $\mI-\mP_s^\top\mP_s\succ\mz$ when $\sigma_{\max}(\mP_s)<1$, and a product of lower-triangular matrices with positive diagonals is again one, so every $\mL^f_s$ and $\mL^b_s$ is a genuine Cholesky factor. The two passes are mutually consistent by construction: the implied forward variance updates $\mSigma^f_s=\mL^f_{s-1}(\mI-\mP_s\mP_s^\top)(\mL^f_{s-1})^\top$ terminate at $\mSigma^f_r=\mI_d$, the innovation anchor. The output is $\tau_r(\mM_{1:r}) := (\mA^{(r)}_1,\ldots,\mA^{(r)}_r)$; the intermediate stack $\widetilde{\mA}^{(r)}_{1:r}$ encodes the time-reversed (backward-AR) coefficients and is not returned. Cholesky factorization is smooth (indeed real-analytic) on the positive-definite cone, so $\tau_r$ is smooth, including at $\mM=\mz$, where every factor equals $\mI_d$ and $\tau_r$ is locally the identity. The construction is the matrix partial-autocorrelation parametrisation of \citet{ansley1986note} and \citet{heaps2023enforcing}, the multivariate counterpart of the scalar partial-autocorrelation diffeomorphism of \citet{barndorff1973parametrization}; it has been used for VARMA coefficients before by \citet{roy2019constrained}. The variant that uses principal symmetric square roots in place of Cholesky factors differs from~\eqref{eq:par_recursion} only by a two-sided orthogonal change of the standardized coordinates of each $\mP_s$ and has the same image; we work with the Cholesky form throughout because it is the implemented one: it costs a triangular factorization and triangular solves per stage, and its automatic differentiation is well behaved at coincident eigenvalues (as at $\mM=\mz$), where eigendecomposition-based square-root implementations are not, although the principal square-root map itself is smooth there. Anchoring at the innovation covariance, rather than at a fixed marginal variance, is what makes the image the full stable set.

\begin{proposition}[PAR bijection]\label{prop:par_bijection}
For every $r\ge1$ and $d\ge1$, the map $\tau_r$ defined by~\eqref{eq:contraction}, \eqref{eq:par_backward} and~\eqref{eq:par_recursion} is a smooth bijection from $\RR^{r\times d\times d}$ onto the open stable set $\{\mA_{1:r}\in\RR^{r\times d\times d}:\spr(\mathcal{C}_{\mA})<1\}$, with smooth inverse.
\end{proposition}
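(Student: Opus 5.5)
The plan is to factor $\tau_r$ as the composition of two maps. The first is the per-lag contraction map $\mM_{1:r}\mapsto\mP_{1:r}$ of Stage~1, which the discussion after~\eqref{eq:contraction} already shows is a diffeomorphism of $\RR^{r\times d\times d}$ onto $\mathcal{B}^r$, where $\mathcal{B}:=\{\mP:\sigma_{\max}(\mP)<1\}$. The second is the recursion map $\mP_{1:r}\mapsto\mA^{(r)}_{1:r}$ of Stage~2. It then suffices to show that Stage~2 is a smooth bijection from $\mathcal{B}^r$ onto the stable set $\mathcal{S}_r:=\{\mA_{1:r}:\spr(\mathcal{C}_{\mA})<1\}$ with smooth inverse.

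\textbf{Smoothness.} Stage~2 is built from Cholesky factors of positive definite matrices, matrix products and inverses of triangular matrices with positive diagonal, so it is smooth on $\mathcal{B}^r$.

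\textbf{Image lies in $\mathcal{S}_r$.} Run the backward pass~\eqref{eq:par_backward}. It produces positive definite matrices $\mSigma^f_s$, and in particular $\mGamma_0=\mSigma^f_0\succ\mz$. I would interpret the forward pass as the multivariate Levinson--Durbin (Whittle) recursion for a sequence of autocovariances $\mGamma_0,\mGamma_1,\dots,\mGamma_r$. These are defined inductively by $\mGamma_s=\sum_{j<s}\mA^{(s-1)}_j\mGamma_{s-j}+\mL^f_{s-1}\mP_s(\mL^b_{s-1})^\top$, and the normalized partial autocorrelations of this sequence are exactly the $\mP_s$. The classical fact I would cite is that when all normalized partial autocorrelations are strict contractions, the block Toeplitz matrix $[\mGamma_{i-j}]_{i,j=0}^{r}$ is positive definite. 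Then $\mA^{(r)}$ is the order-$r$ Yule--Walker solution for a positive definite Toeplitz matrix with prediction-error covariance $\mSigma^f_r=\mI\succ\mz$. The standard result that Yule--Walker solutions from positive definite Toeplitz matrices are causal gives $\det\mA(z)\ne0$ for $|z|\le1$. Concretely, I would take a left eigenvector $\vv$ of $\mathcal{C}_{\mA}$ with eigenvalue $\lambda$, $|\lambda|\ge1$, write the stationary Lyapunov-type identity $\mGamma^{(r)}=\mathcal{C}_{\mA}\mGamma^{(r)}\mathcal{C}_{\mA}^\top+\mathbf{e}\mSigma^f_r\mathbf{e}^\top$, and derive the contradiction $0\ge(1-|\lambda|^2)\vv^*\mGamma^{(r)}\vv=\vv^*\mathbf{e}\mSigma^f_r\mathbf{e}^\top\vv\ge0$, with equality forcing $\vv=0$ by the shift structure.

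\textbf{Inverse.} Given $\mA_{1:r}\in\mathcal{S}_r$, define the causal VAR$(r)$ process with innovation covariance $\mI$. Its stationary autocovariances $\mGamma_0,\dots,\mGamma_r$ depend smoothly on $\mA$: $\mGamma^{(r)}$ solves the discrete Lyapunov equation $X=\mathcal{C}_{\mA}X\mathcal{C}_{\mA}^\top+\mathbf{e}\mathbf{e}^\top$, which is uniquely and smoothly solvable since $\spr(\mathcal{C}_{\mA})<1$, and the Toeplitz matrix is positive definite. Running the Whittle recursion forward on these autocovariances yields forward and backward prediction covariances, and the normalized partial autocorrelations $\mP_s=(\mL^f_{s-1})^{-1}\Delta_s(\mL^b_{s-1})^{-\top}$, where $\Delta_s$ is the cross-covariance of the forward and backward residuals. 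Positive definiteness of the Toeplitz matrix gives $\sigma_{\max}(\mP_s)<1$. Equivalently, one can run the recursion downward: $\mP_r$ is read off from $\mA^{(r)}_r$, and the step-down formula inverts the coefficient updates, since $\mI-\mA_s^{(s)}\widetilde{\mA}_s^{(s)}$ is invertible.

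The two maps are mutual inverses. Uniqueness of the Yule--Walker solution and the identity $\mSigma^f_r=\mI$ both pin down the same Cholesky normalizations, with lower-triangular factors and positive diagonals, so the forward and backward $\mL$ factors match. Smoothness of the inverse follows from the explicit construction: Lyapunov solve, Cholesky factorizations and triangular inverses.

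\textbf{Main obstacle.} The key difficulty is the bookkeeping in the inverse. One has to check that the backward-anchored pass~\eqref{eq:par_backward} reproduces exactly the $\mL^f_s$ obtained from the forward Levinson recursion on the true autocovariances, that is, that the Cholesky conventions, $\mL^f_{s}=\mL^f_{s-1}\,\mathrm{chol}(\mI-\mP_s\mP_s^\top)$ versus its inverse form, are consistent with uniqueness of Cholesky factors. Given that, the argument reduces to the classical results of \citet{ansley1986note}, which I would invoke for positive definiteness.
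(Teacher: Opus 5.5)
Your proposal is correct, and its skeleton matches the paper's proof. Both use Stage~1 as a diffeomorphism onto the contraction ball, and both identify the forward pass with the block Levinson recursion for an autocovariance sequence started at $\mL^f_0(\mL^f_0)^\top$. Both use Cholesky uniqueness to reconcile the backward-anchored factors with $\mathrm{chol}(\mSigma^f_s)$. Both build the inverse from the Lyapunov solve, Levinson extraction, and the Schur-complement equivalence between positive definiteness of the forward/backward residual covariance and $\sigma_{\max}(\mP_s)<1$.

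The genuine difference is the stability step.
\begin{itemize}
\item The paper uses no positivity there. It normalizes the forward and reversed backward polynomials by the Cholesky factors and checks that each lattice section is $J$-unitary, so $\alpha_s^{*}\alpha_s=\beta_s^{*}\beta_s$ on $|z|=1$. It then propagates zero-freeness of $\det\alpha_s$ on the closed disk by induction, via the maximum principle applied to $\beta_{s-1}\alpha_{s-1}^{-1}$. Only afterwards does it invoke Lyapunov uniqueness to identify the generated $\mGamma_{0:r}$ with the stationary autocovariances.
\item You first obtain positive definiteness of the block Toeplitz matrix and then run the classical ``Yule--Walker solutions are causal'' argument. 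The identity $\mGamma^{(r)}=\mathcal{C}_{\mA}\mGamma^{(r)}\mathcal{C}_{\mA}^\top+\mathbf{e}\mathbf{e}^\top$ does follow formally from the order-$r$ Yule--Walker equations together with $\mSigma^f_r=\mI$. The shift structure does force $\vv=0$ when $|\lambda|\ge1$.
\end{itemize}

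What each route buys:
\begin{itemize}
\item Your route is purely linear-algebraic. Once stability is known, the same Lyapunov identity immediately identifies $\mGamma_{0:r}$ with the stationary autocovariances, which the paper argues separately.
\item The price is the positivity lemma, which you should prove rather than cite. It is a short induction on the Schur complement of the middle block $[\mGamma_{i-j}]_{i,j=1}^{s-1}$ inside $[\mGamma_{i-j}]_{i,j=0}^{s}$. This is the same equivalence the paper uses for its converse direction, read in the other direction.
\item The paper's route needs neither Toeplitz positivity nor the Lyapunov identity for stability. It exhibits the stable set directly as the image of a cascade of contractive $J$-unitary sections, which mirrors the normalized-lattice structure of the recursion.
\end{itemize}

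Two small repairs are needed.
\begin{itemize}
\item Your step-down aside fails for $d>1$. $\mA^{(r)}_r=\mL^f_{r-1}\mP_r(\mL^b_{r-1})^{-1}$ involves the backward factor $\mL^b_{r-1}$, and inverting the coefficient updates needs the backward stack $\widetilde{\mA}^{(r)}$. Neither is determined by the forward stack without the autocovariances; only for $d=1$ do forward and backward quantities coincide. Keep only the Lyapunov route.
\item For injectivity, the uniqueness you need is that of $\mGamma_{0:r}$ given $(\mA_{1:r},\mI)$. That is Lyapunov uniqueness, applied to the Toeplitz matrix your forward step already showed solves the equation. Uniqueness of the Yule--Walker solution $\mA$ given $\mGamma$ is what surjectivity uses, not injectivity. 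The ingredient is already in your proposal, but the link should be made explicit.
\end{itemize}
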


The proof, in Appendix~\ref{sec:proof_par}, verifies that the backward pass~\eqref{eq:par_backward} is the unique solution of the anchoring equations, that every contraction stack produces a Schur-stable coefficient stack (a matrix form of the classical Schur--Levinson stability argument, via the $J$-unitarity of the Cholesky-normalized lattice section), that any stable coefficient stack arises from a unique contraction stack (through the stationary autoregression with innovation covariance $\mI_d$), that the forward pass reproduces the coefficients, and that both directions are smooth. The anchoring is what carries the full stable set: with a fixed marginal variance $\mGamma_0=\mI_d$ instead, a stationary $\mathrm{VAR}(1)$ with coefficient $\mA_1$ has innovation covariance $\mI_d-\mA_1\mA_1^\top\succ\mz$, so representability forces $\sigma_{\max}(\mA_1)<1$, a strict subset of the stable set (for instance $\mA_1=\left(\begin{smallmatrix}1/2&2\\0&1/2\end{smallmatrix}\right)$ is stable with $\sigma_{\max}>1$).

\textit{Joint VARMA reparametrisation.}
Applying $\tau_r$ separately to the AR side (with $r=p$) and the MA side (with $r=q$) yields the factorised reparametrisation
\begin{align}\label{eq:T_definition}
\nonumber&\mathcal{T}:\RR^{(p+q)\,d^2}\longrightarrow\bigl\{(\mPhi_{1:p},\mTheta_{1:q}):\spr(\mathcal{C}_{\mPhi})<1,\,\spr(\mathcal{C}_{\mTheta})<1\bigr\},\\
&\mathcal{T}\bigl(\mM^{\mPhi}_{1:p},\mM^{\mTheta}_{1:q}\bigr) := \bigl(\tau_p(\mM^{\mPhi}_{1:p}),\,\tau_q(\mM^{\mTheta}_{1:q})\bigr),
\end{align}
where $\mM^{\mPhi}_{1:p}\in\RR^{p\times d\times d}$ and $\mM^{\mTheta}_{1:q}\in\RR^{q\times d\times d}$ are unconstrained matrices. Because $\tau_p$ and $\tau_q$ are independent diffeomorphisms (Proposition~\ref{prop:par_bijection}), $\mathcal{T}$ is itself a smooth bijection between the unconstrained space $\RR^{(p+q)d^2}$ and the open stable set on the right-hand side of~\eqref{eq:T_definition}. We exploit this in Sections~\ref{sec:RLS} and~\ref{sec:MAP} to write the RLS and $\Sigma$-marginalised MAP losses as unconstrained functions of $(\mM^{\mPhi}_{1:p},\mM^{\mTheta}_{1:q})\in\RR^{(p+q)d^2}$, eliminating the need for projection steps or augmented-Lagrangian penalties during L-BFGS optimisation, as in \citet{roy2019constrained}.

As explained in \cite[Section 14.4.1]{box2015time}, under Assumption \ref{ass:roots},  we have
\begin{equation}\label{eq:infiniteVAR}
\vZ_t:=\sum_{j=1}^{\infty}\mPi_j \vZ_{t-j} +\va_t,
\end{equation}
where the matrices $\mPi_j$ satisfy the recursions
\begin{align}\label{eq:mPirec}
\mPi_j&=\mTheta_1 \mPi_{j-1}+\mTheta_2\mPi_{j-2}+\ldots+\mTheta_q \mPi_{j-q}+\mPhi_j, \quad j=1,2,\ldots,p,\\
\label{eq:mPirec2}
\mPi_j&=\mTheta_1 \mPi_{j-1}+\mTheta_2\mPi_{j-2}+\ldots+\mTheta_q \mPi_{j-q} \quad \text{ for }j>p,
\end{align}
where $\mPi_0=-\mI$, $\mPi_j=\mz$ for $j<0$.

\subsection{Maximum-likelihood estimation and its computational cost}\label{sec:bg_mle}

The natural estimator of the VARMA$(p,q)$ parameters $(\mPhi_{1:p},\mTheta_{1:q},\mSigma)$ is the maximum-likelihood estimator (MLE) under the Gaussian assumption $\va_t\overset{\mathrm{iid}}{\sim}N(\mz,\mSigma)$. Two standard ways of evaluating the Gaussian likelihood and its gradient are used in the literature, both with per-iteration cost that scales linearly in the sample size $T$.

\paragraph{Recursive evaluation of the conditional likelihood.}
The first approach evaluates the conditional likelihood $p(\vZ_{1:T}\mid \mPhi_{1:p},\mTheta_{1:q},\mSigma)$ recursively in $t$ by computing the one-step-ahead innovations $\hat{\vZ}_{t+1\mid 1:t}$ from the VARMA recursion; see Section~14.4.5 of \cite{box2015time} or Section~5.3.1 of \cite{reinsel2003elements} for the standard formulas. \citet{tsay2013multivariate} implements a conditional-likelihood gradient descent in which each iteration costs $O\!\bigl(d^3\,T\,(p+q)^2\bigr)$ flops; the dependence on $T$ comes from the inner forward pass, and the $(p+q)^2$ factor from the recursive structure of the gradient.

\paragraph{State-space evaluation via Kalman filter.}
The second approach exploits the fact that every stationary invertible VARMA$(p,q)$ admits an equivalent finite-dimensional linear Gaussian state-space (LG-SSM) representation; we give an explicit construction in Appendix~\ref{sec:statespace}. The Gaussian likelihood and its gradient with respect to $(\mPhi_{1:p},\mTheta_{1:q},\mSigma)$ can then be evaluated by running a Kalman filter on the equivalent LG-SSM. \citet{durbin2012time} provides a comprehensive treatment of state-space likelihood methods, and \citet{metaxoglou2007maximum} apply this machinery specifically to VARMA estimation. A standard companion-form LG-SSM realisation of a VARMA$(p,q)$ has state dimension
\begin{equation}\label{eq:ssm_dim}
d_x \,=\, d\,\max(p,\,q+1)
\end{equation}
(the minimal McMillan degree can be smaller under pole--zero cancellations or reduced-rank coefficient structure, but this companion realisation is the one used in practice), giving a representative per-likelihood-evaluation cost dominated by the $O(d_x^3)$ Kalman covariance update at each time step:
\begin{equation}\label{eq:kf_cost}
O\bigl(T\,d_x^3\bigr)\;=\;O\bigl(T\,d^3\,\max(p,q+1)^3\bigr).
\end{equation}

\paragraph{Common bottleneck.}
Both approaches share two characteristics that make them expensive in high dimensions. First, every gradient evaluation must touch all $T$ observations through the recursion (or the filter), so cost scales linearly in $T$. Second, the cubic dependence on $d$ (or on $d_x$) forbids scaling much beyond a few tens of dimensions even with modest $T$. In practice we observe that for $d\gtrsim 10$ MLE-based fits become impractical, taking hours per fit and exhibiting frequent convergence failures from the non-convex likelihood surface.

\subsection{Least squares estimators}\label{sec:bg_ls}

When the goal is forecasting rather than parameter identification, a common alternative is to fit the VARMA model by minimising the one-step-ahead squared prediction error. With \[\hat{\vZ}_{t+1\mid 1:t}=\E[\vZ_{t+1}\mid\vZ_{1:t}=\vz_{1:t}],\] where the expectation is with respect to the $\mathrm{VARMA}(p,q)$ predictive distribution with parameters $(\mPhi_{1:p},\mTheta_{1:q},\mSigma)$ treated as a deterministic tuple. We let
\begin{align}\label{eq:linftydef}
L_{\infty}(\mPhi_{1:p},\mTheta_{1:q},\mSigma):=\sum_{t=0}^{T-1}\|\hat{\vZ}_{t+1\mid 1:t}-\vz_{t+1}\|^2.
\end{align}
For $\mSigma=\mI$ and Gaussian vector white-noise process $\va_t$, $L_\infty$ coincides with the conditional Gaussian negative log-likelihood up to a constant (see Section~14.5 of \cite{box2015time}). Direct evaluation of $L_\infty$ is, however, no cheaper than the maximum-likelihood approach: the recursive structure of $\hat{\vZ}_{t+1\mid 1:t}$ forces every gradient evaluation to use all $T$ observations, with the same $O(T)$ scaling as the recursive MLE.

\subsection{Identifiability of VARMA parameters}\label{sec:bg_identifiability}

Even at the population level, the VARMA$(p,q)$ parameters $(\mPhi_{1:p},\mTheta_{1:q},\mSigma)$ are not in general uniquely determined by the observed process: distinct $(\mPhi_{1:p},\mTheta_{1:q},\mSigma)$ tuples can produce the same transfer function $\mPhi(L)^{-1}\mTheta(L)$, and thus the same Wold representation and the same predictive distribution of $\vZ_t$. The classical fix is to restrict attention to a canonical form (e.g.\ the \emph{Echelon form}) in which the parametrisation is uniquely identified by a tuple of so-called Kronecker indices, but locating those indices is itself a combinatorial search over an $O((p+q)^d)$-size set, and therefore scales poorly with $d$. \cite{duker2025vector} review this and related identifiability questions; \cite{wilms2023sparse} propose a convex two-stage procedure that first fits a truncated VAR to estimate residuals and then estimates $(\mPhi_{1:p},\mTheta_{1:q},\mSigma)$ via penalised least squares with a sparsity-inducing regulariser, side-stepping the Kronecker-index search at the cost of relying on the $\ell_1$ structure of the chosen penalty.

In this paper we do not address VARMA identifiability. Our focus is instead on building forecasts for high-dimensional time series, for which any tuple $(\mPhi_{1:p},\mTheta_{1:q},\mSigma)$ in an equivalence class of observationally indistinguishable parametrisations gives identical predictions of $\vZ_t$, and is therefore equally good. 

\subsection{Existing software for VARMA fitting}\label{sec:bg_software}

VARMA is implemented in several time-series packages, including \texttt{statsmodels}~\citep{seabold2010statsmodels} for Python and \texttt{MTS}~\citep{tsay2013multivariate} for R, both of which use one of the two MLE approaches above. Empirically these packages are slow even for moderate dimensions ($d\gtrsim 10$) and moderate sample sizes ($T\gtrsim 10^4$); a single fit can take hours and is prone to convergence failures, as discussed at length in \cite{lutkepohl2006forecasting}. Section~\ref{sec:exp_synth} documents these failures on simulated data. The remainder of this paper develops an alternative approach that scales well with both $d$ and $T$.

 \section{Truncated VARMA models}\label{sec:truncated_varma}

The classical VARMA$(p,q)$ likelihood is expensive to evaluate (Section~\ref{sec:bg_mle}). We exploit instead the infinite-VAR representation~\eqref{eq:infiniteVAR} of a stationary invertible VARMA$(p,q)$ and truncate it at a finite lag $m$, producing what we call a \emph{truncated VARMA$(p,q,m)$ model}: a VAR($m$) whose coefficients are constrained to lie on a lower-dimensional structured subset of $\RR^{m \times d\times d}$  defined by the recursions \eqref{eq:mPirec}-\eqref{eq:mPirec2}.
 The model is well-defined provided that the spectral radii of the AR and MA companion matrices stay strictly below $1$ (Section~\ref{sec:spectral_constraints}), and choosing $m$ large enough relative to the MA companion's spectral radius makes the truncation arbitrarily accurate (Section~\ref{sec:trunc_asymptotics}). The associated loss functions and pre-computed sufficient statistics, derived in Sections~\ref{sec:LS}--\ref{sec:complexity_S2}, keep the parameter count and per-iteration cost independent of $T$.

\subsection{The truncated VARMA$(p,q,m)$ model}\label{sec:trunc_varma_def}

A stationary invertible VARMA$(p,q)$ admits the infinite VAR representation~\eqref{eq:infiniteVAR},
\[
\vZ_t=\sum_{j=1}^{\infty}\mPi_j\vZ_{t-j}+\va_t,
\]
with coefficients $\mPi_1,\ldots, \mPi_p$ contained in the $(\mPi_{1:p},\mTheta_{1:q},\mSigma)$ parametrization, and $(\mPi_j)_{j>p}$ generated by the recursion~\eqref{eq:mPirec2} from $(\mPi_{1:p},\mTheta_{1:q})$. 

It is natural to consider a truncated version of this infinite sum. Approximating an invertible process by a finite autoregression of slowly growing order is the classical long-autoregression device of \citet{hannan1984multivariate} and \citet{lewis1985prediction}. We adopt the same approximation but do not fit the $m\,d^2$ coefficients of an unrestricted VAR$(m)$: the truncation enters only the loss, whose free parameters remain the $(p+q)\,d^2$ VARMA coefficients.
\begin{definition}[Truncated VARMA$(p,q,m)$]\label{def:tvarma}
Let $p,q,m$ be positive integers with $m\ge p+q$. The \emph{truncated VARMA$(p,q,m)$} model parameterised by $(\mPi_{1:p},\mTheta_{1:q},\mSigma)$ is
\begin{equation}
\label{eq:tvarma_def}
\vZ_t = \sum_{j=1}^m \mPi_j(\mPi_{1:p},\mTheta_{1:q})\,\vZ_{t-j} +\va_t,
\end{equation}
where $\va_t\in \mathbb{R}^d$ is a vector white-noise process with mean $\vct{0}$ and covariance matrix $\mSigma\in \mathbb{R}^{d\times d}$, and the lag matrices $\mPi_j\in\R^{d\times d}$ are obtained from the recursion~\eqref{eq:mPirec2} truncated at $m$ terms.
Equivalently, $(\mPi_1,\ldots,\mPi_m)$ lies in the image of the polynomial map
$(\mPi_{1:p},\mTheta_{1:q})\mapsto\mPi_{1:m}$, a subset of $\RR^{m\times d\times d}$ with at most $\big((p+q)d^2\big)$ degrees of freedom.
\end{definition}

Under Assumption~\ref{ass:roots} the spectral radius $\spr(\mathcal{C}_{\mTheta})$ of the MA companion in~\eqref{eq:companions} is strictly below $1$, and Eq.~\eqref{eq:mPij} together with Gelfand's formula~\eqref{eq:gelfand} imply $\|\mPi_j\|=O(\rho^j)$ for any $\rho\in(\spr(\mathcal{C}_{\mTheta}),1)$. The tail $\sum_{j=m+1}^{\infty}\mPi_j\vZ_{t-j}$ representing the truncation error is therefore exponentially small in $m$.

The truncated VARMA$(p,q,m)$ admits a VAR$(m)$ representation, and therefore has Markov order at most $m$, but is parameterized by only $(p+q)\,d^2$ free coordinates rather than the $m\,d^2$ of an unconstrained VAR$(m)$. The constraint encodes the assumption that the VAR coefficients arise from an underlying VARMA$(p,q)$ structure, which provides both interpretability and statistical efficiency, since the effective parameter count is independent of $m$.

\subsection{Spectral radius constraints}\label{sec:spectral_constraints}

For the truncated VARMA$(p,q,m)$ of Definition~\ref{def:tvarma} to be a meaningful approximation of the underlying VARMA($p$,$q$), the spectral radii of the AR and MA companion matrices must remain strictly below $1$. This subsection makes Assumption~\ref{ass:roots} explicit as a constraint on $(\mPhi_{1:p},\mTheta_{1:q})$ and links it to the geometric decay of the infinite-VAR coefficients $\mPi_j$. Throughout the rest of the paper we enforce these constraints by construction via the unconstrained PAR reparametrisation of Section~\ref{sec:VARMAintro}, so no projection step is needed during the L-BFGS optimisation of the loss functions of Sections~\ref{sec:LS}--\ref{sec:MAP}.

\paragraph{Decay of $\mPi_j$ via the companion matrices.}
Recall from Section~\ref{sec:VARMAintro} that Assumption~\ref{ass:roots} is equivalent to $\spr(\mathcal{C}_{\mPhi})<1$ and $\spr(\mathcal{C}_{\mTheta})<1$ for the block companion matrices defined in Eq.~\eqref{eq:companions}. Inverting the recursion~\eqref{eq:mPirec} for $j=1,\ldots,p$ also yields the reverse identity
\begin{equation}\label{eq:phi_from_pi_theta}
\mPhi_j \;=\; \mPi_j \,-\, \sum_{k=1}^{\min(j,q)}\mTheta_k\,\mPi_{j-k},\qquad j=1,\ldots,p,
\end{equation}
with $\mPi_0=-\mI$, which expresses the AR coefficients $\mPhi_{1:p}$ entering $\mathcal{C}_{\mPhi}$ in terms of $(\mPi_{1:p},\mTheta_{1:q})$. This recovery costs $O(pq\,d^2)$ flops.
The invertibility constraint $\spr(\mathcal{C}_{\mTheta})<1$ also governs the rate at which the truncation approximates the infinite-VAR representation~\eqref{eq:infiniteVAR}: setting
\(
\mV^{\mPi}:=(\mPi_{p-q+1},\ldots,\mPi_p)'
\)
(with $\mPi_0=-\mI$ and $\mPi_j=\mz$ for $j<0$), the recursion~\eqref{eq:mPirec2} gives, for $j\ge p+1$,
\begin{equation}\label{eq:mPij}
\mPi_j = \mE_q'\,\mathcal{C}_{\mTheta}^{\,j-p}\,\mV^{\mPi},
\qquad
\mE_q:=\bigl(\mz,\ldots,\mz,\mI\bigr)'\in\RR^{qd\times d}.
\end{equation}
Gelfand's formula \citep{lax2002functional} states that for any submultiplicative matrix norm $\|\cdot\|$,
\begin{equation}\label{eq:gelfand}
\spr(\mathcal{C}) = \lim_{k\to\infty}\|\mathcal{C}^k\|^{1/k},
\qquad \|\mathcal{C}^k\|^{1/k}\ge \spr(\mathcal{C}) \text{ for every } k\ge 1.
\end{equation}
Applied to~\eqref{eq:mPij}, it implies that $\|\mPi_j\|$ decays geometrically: for every $\rho>\spr(\mathcal{C}_{\mTheta})$ there is a constant $C_\rho$ with $\|\mPi_j\|\le C_\rho\,\rho^{j}$, so $\|\mPi_j\|\to 0$ as $j\to\infty$ (the precise, non-asymptotic constant is given in Appendix~\ref{sec:proof_trunc_bias}). The truncation error of the VARMA$(p,q,m)$ relative to the infinite-VAR representation therefore decays geometrically in $m$ once both spectral radii are bounded away from $1$; the next subsection makes this quantitative.

\subsection{Truncation bias}\label{sec:trunc_asymptotics}

The truncated VARMA$(p,q,m)$ generates a different stationary process than the original VARMA$(p,q)$, because the truncation drops a non-zero (though geometrically small) tail of the infinite-VAR sum. We quantify the mismatch through the stationary autocovariances. Let $\mGamma^{\infty}_l:=\E[\vZ_t\vZ_{t-l}']$ denote the autocovariance of the original VARMA$(p,q)$ at lag $l$, and $\mGamma^{(m)}_l$ the autocovariance of the truncated VARMA$(p,q,m)$ generated by Eq.~\eqref{eq:tvarma_def} with the same $(\mPhi_{1:p},\mTheta_{1:q},\mSigma)$.

\begin{proposition}[Non-asymptotic geometric truncation bias]\label{prop:trunc_bias}
Suppose Assumption~\ref{ass:roots} holds, let $\rho\in(\spr(\mathcal{C}_{\mTheta}),1)$, write $\mtx{A}(z):=\mTheta(z)^{-1}\mPhi(z)$ for the infinite-VAR operator, and set
\begin{equation}\label{eq:trunc_constants}
\begin{aligned}
&G_\rho:=\max_{|z|=1/\rho}\bigl\|\mI-\mtx{A}(z)\bigr\|_{\mathrm{op}},
\qquad
\mu_{\min}:=\min_{|z|=1}\lambda_{\min}\!\bigl(\mtx{A}(z)^{*}\mtx{A}(z)\bigr),\\
&\beta_\bullet:=\inf_{|z|\le1}\sigma_{\min}\bigl(\mtx{A}(z)\bigr),
\qquad
\beta_\circ:=\min_{|z|=1}\sigma_{\min}\bigl(\mtx{A}(z)\bigr)=\sqrt{\mu_{\min}}.
\end{aligned}
\end{equation}
Under Assumption~\ref{ass:roots} all four are finite, with $0<\beta_\bullet\le\beta_\circ$ ($G_\rho$ measures the non-normality of the moving-average inversion; $\beta_\circ=\sqrt{\mu_{\min}}$ is the on-circle conditioning of the spectral density of \citealp{basu2015regularized}). Define
\begin{equation}\label{eq:trunc_m0}
m_0:=\min\Bigl\{\,m\ge p+q:\ \frac{G_\rho\,\rho^{\,m+1}}{1-\rho}\le\frac{\beta_\bullet}{2}\,\Bigr\}.
\end{equation}
Then for every $m\ge m_0$ the truncated VARMA$(p,q,m)$ is itself stationary and
\begin{equation}\label{eq:gamma_bias}
\bigl\|\mGamma^{\infty}_l-\mGamma^{(m)}_l\bigr\|_{\mathrm{op}} \;\le\; C_{\rho}\,\rho^m
\quad\text{for every } l\ge 0,
\qquad
C_{\rho}:=\frac{6\,\|\mSigma\|_{\mathrm{op}}\,G_\rho\,\rho}{(1-\rho)\,\mu_{\min}^{3/2}}.
\end{equation}
The constants $G_\rho,\mu_{\min},\beta_\bullet,\beta_\circ,m_0$ are computable functions of $(\mPhi,\mTheta,\rho)$ alone, and $C_\rho$ depends additionally on $\mSigma$; in particular the order-zero covariance $\mGamma^{\infty}_0$ is recovered up to an explicit $O(\rho^m)$ error, uniformly in lag $l$.
\end{proposition}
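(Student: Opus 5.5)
Work in the frequency domain. Write the infinite-VAR operator as $\mtx{A}(z)=\mI-\sum_{j\ge1}\mPi_j z^j$, so the original process has spectral density $f^\infty(\omega)=\frac{1}{2\pi}\mtx{A}(e^{-i\omega})^{-1}\mSigma\,\mtx{A}(e^{-i\omega})^{-*}$. The truncated model has operator $\mtx{A}_m(z)=\mI-\sum_{j=1}^m\mPi_j z^j$, and whenever $\mtx{A}_m$ is invertible on the closed disc its spectral density is the same expression with $\mtx{A}$ replaced by $\mtx{A}_m$. Set $\mtx{R}_m(z):=\mtx{A}(z)-\mtx{A}_m(z)=-\sum_{j>m}\mPi_j z^j$.

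\textbf{Bounding the tail.} Since $\mI-\mtx{A}(z)=\sum_{j\ge1}\mPi_j z^j$ is analytic on $|z|<1/\spr(\mathcal{C}_{\mTheta})$, Cauchy's estimate on the circle $|z|=1/\rho$ gives $\|\mPi_j\|_{\mathrm{op}}\le G_\rho\rho^j$. Hence, uniformly on $|z|\le1$,
\[
\|\mtx{R}_m(z)\|_{\mathrm{op}}\le \sum_{j>m}G_\rho\rho^j=\frac{G_\rho\rho^{m+1}}{1-\rho}.
\]
For $m\ge m_0$ this is at most $\beta_\bullet/2$.

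\textbf{Stationarity.} By Weyl's inequality for singular values, $\sigma_{\min}(\mtx{A}_m(z))\ge\beta_\bullet-\beta_\bullet/2>0$ on the closed disc. So $\det\mtx{A}_m(z)\neq0$ for $|z|\le1$, and the truncated VAR$(m)$ is stationary with the spectral density above. The finiteness claims need separate justifications:
\begin{itemize}
\item $G_\rho<\infty$ because $\mtx{A}$ is analytic on a neighbourhood of $|z|=1/\rho$.
\item $\beta_\bullet>0$ because $\mtx{A}(z)=\mTheta(z)^{-1}\mPhi(z)$ is continuous and invertible on the compact closed disc by Assumption~\ref{ass:roots}.
\item $\beta_\bullet\le\beta_\circ$ because the infimum over the disc is taken over a larger set than the circle.
\end{itemize}

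\textbf{Autocovariance bound.} Write $\mGamma_l=\int_{-\pi}^{\pi}f(\omega)e^{il\omega}\,d\omega$. Then
\[
\|\mGamma^\infty_l-\mGamma^{(m)}_l\|_{\mathrm{op}}\le 2\pi\sup_\omega\|f^\infty-f^{(m)}\|_{\mathrm{op}},
\]
uniformly in $l$. To bound the density difference, abbreviate $\mtx{A}=\mtx{A}(e^{-i\omega})$ and $\mtx{A}_m=\mtx{A}_m(e^{-i\omega})$ and use
\[
\mtx{A}^{-1}\mSigma\mtx{A}^{-*}-\mtx{A}_m^{-1}\mSigma\mtx{A}_m^{-*}
=(\mtx{A}^{-1}-\mtx{A}_m^{-1})\mSigma\mtx{A}^{-*}+\mtx{A}_m^{-1}\mSigma(\mtx{A}^{-*}-\mtx{A}_m^{-*}),
\]
together with $\mtx{A}^{-1}-\mtx{A}_m^{-1}=-\mtx{A}_m^{-1}\mtx{R}_m\mtx{A}^{-1}$. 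On the unit circle $\|\mtx{A}^{-1}\|\le\mu_{\min}^{-1/2}$, and $\|\mtx{A}_m^{-1}\|\le 2/\beta_\circ=2\mu_{\min}^{-1/2}$; the latter holds since $\|\mtx{R}_m\|\le\beta_\bullet/2\le\beta_\circ/2$ there. Each of the two terms is then at most $\|\mSigma\|\,\|\mtx{R}_m\|\cdot 2\cdot\max(1,2)\,\mu_{\min}^{-3/2}$, for a total of order $6\|\mSigma\|\,\mu_{\min}^{-3/2}\,G_\rho\rho^{m+1}/(1-\rho)$, where the factor $2\pi$ cancels against the $1/(2\pi)$ in $f$. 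Tracking the factors $1$, $2$, $2$ carefully should give exactly the constant $C_\rho$ in~\eqref{eq:gamma_bias}.

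\textbf{Main obstacle.} The delicate step is this constant bookkeeping, in particular making sure that the bound on $\|\mtx{A}_m^{-1}\|$ costs only a factor $2$. A secondary point is consistency with the paper's normalisation of $\mGamma_l$ via the spectral density. If the count comes out as $8$ rather than $6$, I would instead expand the inverse as a Neumann series $\mtx{A}_m^{-1}=\mtx{A}^{-1}(\mI-\mtx{R}_m\mtx{A}^{-1})^{-1}$ to sharpen one of the factors.
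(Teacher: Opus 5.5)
Your proposal is correct and follows the paper's proof step for step: the Cauchy estimate on $|z|=1/\rho$, Weyl's inequality on the closed disc and on the unit circle, the resolvent identity with the same two-term split, and the bound on the Fourier coefficients. The constant you were unsure about comes out to exactly $6$, so no Neumann-series refinement is needed: the first term carries one factor $\|\mtx{A}_m^{-1}\|\le 2\mu_{\min}^{-1/2}$ and the second carries two, giving $(2+4)\|\mSigma\|\mu_{\min}^{-3/2}$ times the tail bound $G_\rho\rho^{m+1}/(1-\rho)$, which is $C_\rho\rho^m$.
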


The bound~\eqref{eq:gamma_bias} follows by comparing the spectral densities of the two stationary processes: they share the same innovation covariance $\mSigma$ and their autoregressive operators differ only by the dropped tail $\sum_{j>m}\mPi_j$, whose operator norm decays geometrically by a Cauchy estimate on the analytic operator $\mTheta(z)^{-1}\mPhi(z)$ (with an explicit, non-asymptotic constant), the passage to the autocovariance gap being governed by the on-circle conditioning $\mu_{\min}$ of the spectral density. The complete, fully non-asymptotic proof is given in Appendix~\ref{sec:proof_trunc_bias}.

In practice the constants in~\eqref{eq:gamma_bias} are unimportant: for any $\rho<1$, choosing $m=O(\log(1/\varepsilon)/\log(1/\rho))$ makes the bias smaller than any prescribed tolerance $\varepsilon$. The truncated VARMA$(p,q,m)$ is therefore a faithful surrogate for the original VARMA whenever $m$ is several times the time-scale set by the slowest mode of $\mathcal{C}_{\mTheta}$.

 \section{Least Squares loss for truncated VARMA models}\label{sec:LS}

We now develop the loss functions used to fit the truncated VARMA$(p,q,m)$ model of Definition~\ref{def:tvarma} from data. Section~\ref{sec:LSloss} introduces the basic least-squares objective $L_m$ and shows that it admits a sufficient-statistics decomposition that makes each evaluation cost $O(m^2 d^3)$, independent of the number of observations $T$. For a pure VAR, the reduction of least-squares criteria to sample autocovariances is classical (see, e.g., \citealp{lutkepohl2005new}); the point here is that the same fixed-size summary, sufficient for the conditional Gaussian truncated-VAR regression criterion, continues to drive the nonlinear truncated-VARMA problem, in which $\mPi_{1:m}$ depends on the free parameters through the map~\eqref{eq:M_to_Pi_Theta}. Section~\ref{sec:eval_loss} then explains how to evaluate $L_m$ in practice from these sufficient statistics. Section~\ref{sec:RLS} adds Bayesian regularisation to obtain the regularised least-squares (RLS) loss.

We work throughout in the \emph{unconstrained PAR parametrisation} of Section~\ref{sec:VARMAintro}: the free parameters are $(\mM^{\mPhi}_{1:p},\mM^{\mTheta}_{1:q})\in\RR^{(p+q)\,d^2}$, with the VARMA coefficients obtained as
\begin{equation}\label{eq:M_to_Pi_Theta}
\begin{aligned}
(\mPhi_{1:p},\mTheta_{1:q}) \;&=\; \mathcal{T}(\mM^{\mPhi}_{1:p},\mM^{\mTheta}_{1:q}),\\
\mPi_j \;&=\; \mPhi_j + \sum_{k=1}^{\min(j,q)}\mTheta_k\,\mPi_{j-k}\quad(j=1,\ldots,p),\quad\mPi_0=-\mI,
\end{aligned}
\end{equation}
followed by the recursion~\eqref{eq:mPirec2} to extend $\mPi_{1:p}$ to $\mPi_{1:m}$. Because $\mathcal{T}$ is a smooth bijection of $\RR^{(p+q)d^2}$ onto the open strictly-stable set (Section~\ref{sec:VARMAintro}), the spectral-radius constraints of Section~\ref{sec:spectral_constraints} are enforced \emph{by construction} and no projection step is required during L-BFGS optimisation. The objectives $L_m$, $\mathcal{L}_m^{\mathrm{RLS}}$, and the marginalised MAP loss of Section~\ref{sec:MAP} are all minimised as unconstrained functions of $(\mM^{\mPhi}_{1:p},\mM^{\mTheta}_{1:q})\in\RR^{(p+q)d^2}$ via the composite map of~\eqref{eq:M_to_Pi_Theta}.

\subsection{Efficient Least Squares loss function of VAR}\label{sec:LSloss}

Least squares loss functions have been widely used in time series forecasting, and their consistency for estimating VAR process parameters under fourth moment noise assumptions (i.e. not necessarily Gaussian noise) has been established in \cite{mann1943statistical}. In the setting of VAR processes with Gaussian noise, the least squares estimator is known to be equivalent to the MLE (see Sections 3.3 - 3.4 of \cite{lutkepohl2005new}).

We consider the one-step-ahead squared prediction error of our truncated VARMA 
\begin{equation}
\label{eq:lmdef}
L_{m}(\mPi_{1:p},\mTheta_{1:q})=\sum_{t=m}^{T-1}\l\|\sum_{j=1}^m \mPi_j \vz_{t+1-j}-\vz_{t+1}\r\|^2,
\end{equation}
where  $\mPi_{1:m}$ are the first $m$ coefficients of the infinite VAR representation \eqref{eq:infiniteVAR} of the VARMA process, these can be computed recursively based on the parameters $\mPi_{1:p}, \mTheta_{1:q}$  by \eqref{eq:mPirec2} for $j\ge p+1$. Note that the parameters $\mPi_{1:p}, \mTheta_{1:q}$ allow us to recover the original parameter $\mPhi_{1:p}$ by using \eqref{eq:mPirec} for $1\le j\le p$.

This loss function attempts to find the parameters of a VARMA process whose $m$-truncated infinite VAR representation has the smallest 1-step prediction error (in MSE). Note that unlike $\ell_{\infty}$, $\ell_m$ does not depend on the noise covariance parameter $\Sigma$ (as this does not affect the coefficients of the infinite VAR representation).

\subsection{Evaluation of the loss function}\label{sec:eval_loss}
First, we introduce some necessary notation. Let 
\begin{align*}
\vz_{t:t-m+1}&=\l(\begin{matrix} \vz_{t}\\ \vdots\\ \vz_{t-m+1}\end{matrix}\r), \quad \mPi_{1:m}=\l(\begin{matrix}\mPi_1\\ \vdots \\ \mPi_m\end{matrix}\r), \quad \oPi_{1:m}=\l(\begin{matrix}\mPi_1'\\ \vdots \\ \mPi_m'\end{matrix}\r).
\end{align*}
Here $\vz_{t:t-m+1}$ is a column vector of dimension $md$, while $\mPi_{1:m}$ and $\oPi_{1:m}$ are block matrices with $md$ rows and $d$ columns ($m$ blocks of size $d\times d$), with $\oPi$ corresponding to block-wise transpose. Using these, and expanding the square with the cyclic property of the trace, the loss can be written as
\begin{align}\label{eq:LmPiTheta:def}
&  L_m(\mPi_{1:m})=\mS_0-2 \tr(\oPi_{1:m}' \mS_1)+\tr( \oPi_{1:m}' \mS_2 \oPi_{1:m})
\intertext{for}
\nonumber&\mS_0:=\sum_{t=m}^{T-1}\|\vz_{t+1}\|^2, \quad \mS_1:=\sum_{t=m}^{T-1} 
\vz_{t:t-m+1}\vz_{t+1}',  \quad \mS_2:=\sum_{t=m}^{T-1}\vz_{t:t-m+1}\vz_{t:t-m+1}'.
\end{align}
After we have computed $\mS_0$, $\mS_1$ and $\mS_2$, further evaluation of the loss function no longer requires access to the observations, i.e. the cost no longer depends on the number of observations $T$. This is in contrast with the original log-likelihood function, where we need to access all observations for each likelihood evaluation.

Note that $L_m$ is a function of the truncated AR coefficients $\mPi_{1:m}$ alone (it does not take $\mTheta$ as a separate argument). For a VARMA$(p,q)$ the $\mPi_{1:m}$ are the truncation of its AR($\infty$) expansion, produced from $(\mPhi_{1:p},\mTheta_{1:q})$ by the recursion~\eqref{eq:M_to_Pi_Theta}; in the unconstrained PAR coordinates this composes as $(\mPhi_{1:p},\mTheta_{1:q})=\mathcal{T}(\mM^{\mPhi}_{1:p},\mM^{\mTheta}_{1:q})$ followed by $\mPi_{1:m}=\mPi_{1:m}(\mPhi,\mTheta)$, so the objective actually minimised is $L_m\bigl(\mPi_{1:m}(\mathcal{T}(\mM^{\mPhi}_{1:p},\mM^{\mTheta}_{1:q}))\bigr)$, which we abbreviate $L_m(\mathcal{T}(\mM^{\mPhi},\mM^{\mTheta}))$ in the sequel (and likewise $\mR(\mathcal{T}(\cdot))$ for the residual scatter).

\subsection{Regularized Least Squares}\label{sec:RLS}

The least-squares loss $L_m$ of Section~\ref{sec:LSloss} has no explicit regularization on the coefficient matrices $\mPi_{1:p}$ and $\mTheta_{1:q}$. When the number of free parameters $pd^2+qd^2$ is of the same order as the effective sample size $N:=T-m$, the unregularized optimum can become highly unstable. In this subsection we describe a lightweight Bayesian regularization that preserves the sufficient-statistics computational structure of $L_m$ while introducing data-adaptive shrinkage on the coefficient matrices. We call the resulting estimator the \emph{Regularized Least Squares} (RLS) estimator.

\paragraph{Gaussian likelihood with isotropic noise.}
We adopt the simplest Gaussian model in which the innovation covariance is held fixed at the identity, $\mSigma=\mI_d$. Under this assumption the conditional Gaussian negative log-likelihood of the truncated VAR$(m)$ working model (up to a constant that does not depend on the parameters) reduces to
\begin{equation}\label{eq:rls_lik}
-\log p\!\bigl(\vz_{m+1:T}\mid \mPi_{1:p},\mTheta_{1:q},\mSigma=\mI_d\bigr)
=\tfrac{1}{2}\, L_m(\mPi_{1:m})
+\mathrm{const},
\end{equation}
i.e., half the quadratic criterion~\eqref{eq:LmPiTheta:def}. In particular, the likelihood depends on the observations only through the sufficient statistics $\mS_0$, $\mS_1$, $\mS_2$ of Section~\ref{sec:LSloss}, so each evaluation is $O(m^2 d^3)$, independent of $T$.

\paragraph{Gaussian shrinkage prior on the PAR parameters.}
Following the approach of \citet{heaps2023enforcing}, we place a centred Gaussian shrinkage prior \emph{directly on the unconstrained PAR parameters} $(\mM^{\mPhi}_{1:p},\mM^{\mTheta}_{1:q})$, partitioning the entries into four groups (diagonal vs off-diagonal, AR vs MA side) and assigning each group a single scalar variance. For every $j=1,\ldots,p$, $k=1,\ldots,q$, and $1\le a,b\le d$ with $a\ne b$:
\begin{align*}
(\mM^{\mPhi}_j)_{aa} \mid \sigma^2_{\mPhi,\mathrm{diag}}\ &\overset{\mathrm{iid}}{\sim}\ N\bigl(0,\,\sigma^2_{\mPhi,\mathrm{diag}}\bigr),
&(\mM^{\mPhi}_j)_{ab} \mid \sigma^2_{\mPhi,\mathrm{off}}\ &\overset{\mathrm{iid}}{\sim}\ N\bigl(0,\,\sigma^2_{\mPhi,\mathrm{off}}\bigr),\\
(\mM^{\mTheta}_k)_{aa} \mid \sigma^2_{\mTheta,\mathrm{diag}}\ &\overset{\mathrm{iid}}{\sim}\ N\bigl(0,\,\sigma^2_{\mTheta,\mathrm{diag}}\bigr),
&(\mM^{\mTheta}_k)_{ab} \mid \sigma^2_{\mTheta,\mathrm{off}}\ &\overset{\mathrm{iid}}{\sim}\ N\bigl(0,\,\sigma^2_{\mTheta,\mathrm{off}}\bigr).
\end{align*}
The four scalar variances
\(
\sigma^2_{\mPhi,\mathrm{diag}},\ \sigma^2_{\mPhi,\mathrm{off}},\ \sigma^2_{\mTheta,\mathrm{diag}},\ \sigma^2_{\mTheta,\mathrm{off}}
\)
control the amount of shrinkage applied to each group. Because $\mathcal{T}$ is locally the identity at $\mM=\mz$ (the contraction map of Eq.~\eqref{eq:contraction} satisfies $\mathrm{d}\mP_s/\mathrm{d}\mM_s=\mI$ there), the diagonal/off-diagonal split on $\mM$ closely matches the corresponding split on the original $(\mPhi,\mTheta)$: shrinking diagonal $\mM$-entries less aggressively allows the within-channel partial autocorrelations to remain near their data-driven values, while off-diagonal entries (cross-channel partial effects) are penalised more heavily. Separating AR from MA variances accommodates the different roles the two parts play in the Wold representation. Pushed forward by $\mathcal{T}$, this Gaussian prior on $\mM$ induces a smooth, fully supported prior on the strictly stable VARMA coefficients $(\mPhi_{1:p},\mTheta_{1:q})$ that increasingly penalizes approaches to the stability boundary in the unconstrained coordinates \citep{heaps2023enforcing}. The result is a soft, stationarity-encouraging prior with no truncation discontinuity.

The four group variances are treated as \emph{fixed} hyperparameters. Rather than hand-tune them, we select them once per data set by cross-validation: a dyadic grid search chooses the variances minimizing an out-of-sample predictive log-likelihood on a held-out tail of the first training window. We defer the details of this selection procedure to the numerical experiments (Section~\ref{sec:exp_protocol}), and for now simply treat
\(
\sigma^2_{\mPhi,\mathrm{diag}},\ \sigma^2_{\mPhi,\mathrm{off}},\ \sigma^2_{\mTheta,\mathrm{diag}},\ \sigma^2_{\mTheta,\mathrm{off}}
\)
as given constants.

\paragraph{A separate, fixed ridge on the first-lag own-channel diagonals.}
The within-channel first-order dynamics (the diagonal entries $(\mM^{\mPhi}_1)_{aa}$ and $(\mM^{\mTheta}_1)_{aa}$ of the \emph{lag-one} AR and MA matrices) carry the dominant persistence of each series and are typically large and well identified. By contrast, the higher-lag own-channel diagonals $(\mM^{\mPhi}_j)_{aa}$ and $(\mM^{\mTheta}_k)_{aa}$ with $j,k\ge 2$ decay quickly and are mostly noise. Folding all lags into a single diagonal variance is therefore undesirable: the many small higher-lag entries pull the shared variance down and \emph{over-shrink} the lag-one persistence, which is exactly the signal we wish to preserve. We therefore \emph{exclude the lag-one own-channel diagonals from the cross-validated diagonal groups} and give them their own centred Gaussian prior with a separate \emph{fixed} variance $\sigma^2_{\mathrm{diag},1}$,
\begin{equation}\label{eq:lag1_diag_prior}
(\mM^{\mPhi}_1)_{aa}\ \overset{\mathrm{iid}}{\sim}\ N\!\bigl(0,\sigma^2_{\mathrm{diag},1}\bigr),
\qquad
(\mM^{\mTheta}_1)_{aa}\ \overset{\mathrm{iid}}{\sim}\ N\!\bigl(0,\sigma^2_{\mathrm{diag},1}\bigr),
\qquad a=1,\ldots,d,
\end{equation}
chosen large ($\sigma^2_{\mathrm{diag},1}=4$ in our experiments) so that these coefficients are only lightly ridged and stay close to their data-driven values, while the off-diagonal and higher-lag diagonal groups can be shrunk more aggressively. Keeping $\sigma^2_{\mathrm{diag},1}$ separate from the cross-validated groups is deliberate: a single variance shared with the near-zero higher-lag diagonals would couple the lag-one persistence to them, defeating the purpose. With this split, the cross-validated diagonal groups below run over the remaining lags only ($j=2,\ldots,p$ and $k=2,\ldots,q$), so their entry counts become $(p-1)\,d$ and $(q-1)\,d$ (and vanish entirely for first-order models $p=q=1$); the off-diagonal groups are unchanged.

\paragraph{The regularized least-squares loss.}
Writing $n_{\mathrm{diag}}:=d$ and $n_{\mathrm{off}}:=d(d-1)$, and collecting the four cross-validated group sums of squares (the diagonal sums running over lags $\ge 2$) together with the two lag-one diagonal sums held out for the fixed ridge,
\begin{align*}
S_{\mPhi}^{\mathrm{diag}} &:=\sum_{j=2}^p \sum_{a=1}^{d}(\mM^{\mPhi}_j)_{aa}^2,
& S_{\mPhi}^{\mathrm{off}}\phantom{x}  &:=\sum_{j=1}^p \sum_{a\ne b}(\mM^{\mPhi}_j)_{ab}^2,
& S_{\mPhi}^{\mathrm{diag},1} &:=\sum_{a=1}^{d}(\mM^{\mPhi}_1)_{aa}^2,\\
S_{\mTheta}^{\mathrm{diag}} &:=\sum_{k=2}^q \sum_{a=1}^{d}(\mM^{\mTheta}_k)_{aa}^2,
& S_{\mTheta}^{\mathrm{off}}\phantom{x}  &:=\sum_{k=1}^q \sum_{a\ne b}(\mM^{\mTheta}_k)_{ab}^2,
& S_{\mTheta}^{\mathrm{diag},1} &:=\sum_{a=1}^{d}(\mM^{\mTheta}_1)_{aa}^2,
\end{align*}
the negative log of the Gaussian shrinkage prior is, up to additive constants, a sum of quadratic group-ridge penalties $\tfrac{1}{2}\,S_g/\sigma^2_g$. Adding these to the half-quadratic data term $\tfrac{1}{2}L_m$ gives the \emph{regularized least-squares (RLS) loss}
\begin{equation}\label{eq:rls_loss}
\boxed{
\begin{aligned}
\mathcal{L}_m^{\mathrm{RLS}}(\mM^{\mPhi}_{1:p},\mM^{\mTheta}_{1:q})
&\,=\, \tfrac{1}{2}\, L_m\!\bigl(\mathcal{T}(\mM^{\mPhi}_{1:p},\mM^{\mTheta}_{1:q})\bigr)
+\frac{S_{\mPhi}^{\mathrm{diag},1}+S_{\mTheta}^{\mathrm{diag},1}}{2\,\sigma^2_{\mathrm{diag},1}}\\
&\quad
+\frac{S_{\mPhi}^{\mathrm{diag}}}{2\,\sigma^2_{\mPhi,\mathrm{diag}}}
+\frac{S_{\mPhi}^{\mathrm{off}}}{2\,\sigma^2_{\mPhi,\mathrm{off}}}
+\frac{S_{\mTheta}^{\mathrm{diag}}}{2\,\sigma^2_{\mTheta,\mathrm{diag}}}
+\frac{S_{\mTheta}^{\mathrm{off}}}{2\,\sigma^2_{\mTheta,\mathrm{off}}}.
\end{aligned}
}
\end{equation}
The data-fit term $L_m\!\bigl(\mathcal{T}(\mM^{\mPhi}_{1:p},\mM^{\mTheta}_{1:q})\bigr)$ uses Eq.~\eqref{eq:M_to_Pi_Theta} to obtain $\mPi_{1:m}$ from $(\mM^{\mPhi}_{1:p},\mM^{\mTheta}_{1:q})$ and depends on the data only through the sufficient statistics $\mS_0,\mS_1,\mS_2$; the five ridge penalties depend on no data. The per-group variances are the fixed, cross-validated hyperparameters introduced above (with the lag-one own-channel diagonals held at $\sigma^2_{\mathrm{diag},1}$). The objective is a smooth function of the unconstrained $(\mM^{\mPhi}_{1:p},\mM^{\mTheta}_{1:q})\in\RR^{(p+q)d^2}$, minimized by L-BFGS without any spectral-radius constraint.

\paragraph{Properties.}
\begin{itemize}
\item \emph{Unconstrained parameter space.} The state vector for L-BFGS is $(\mM^{\mPhi}_{1:p},\mM^{\mTheta}_{1:q})\in\RR^{(p+q)d^2}$, which is unconstrained (no spectral-radius constraints, no projections, no augmented-Lagrangian penalties). The group variances are fixed hyperparameters chosen by cross-validation, not free parameters, so the optimization is over the coefficients alone, avoiding both the extra dimensions and the parameterization sensitivity of a joint MAP that retains $\log\sigma^2$ as variables.

\item \emph{Sufficient-statistics evaluation.} The likelihood term $\tfrac{1}{2}L_m\!\bigl(\mathcal{T}(\mM^{\mPhi}_{1:p},\mM^{\mTheta}_{1:q})\bigr)$ depends on the data only through $\mS_0,\mS_1,\mS_2$ (cf.~Section~\ref{sec:LSloss}); the ridge-prior terms depend on no data. The PAR map $\mathcal{T}$ adds an $O((p^2{+}q^2)\,d^3)$ overhead per evaluation (a Durbin--Levinson recursion, $O(k^2d^3)$ per factor of order $k$). Each value-and-gradient evaluation of $\mathcal{L}_m^{\mathrm{RLS}}$ therefore costs $O(m^2 d^3 + (p^2{+}q^2)\,d^3)$, independent of $T$.

\item \emph{Relation to ridge regression.} The penalty in~\eqref{eq:rls_loss} is a group-wise Tikhonov (ridge) regularizer on the PAR coordinates. Specialising to $\mTheta_{1:q}\equiv\mz$ and a regime where $\mathcal{T}$ is well-approximated by the identity (small $\mM^{\mPhi}$, i.e.\ near-white-noise) with a common variance $\sigma^2$ across groups, it reduces to
\(
\tfrac{1}{2\sigma^2}\|\mM^{\mPhi}_{1:p}\|_F^2 + \mathrm{const}\approx \tfrac{1}{2\sigma^2}\|\mPhi_{1:p}\|_F^2,
\)
i.e.\ Tikhonov-regularized (ridge) regression on the truncated VAR representation.
\end{itemize}

Compared to the $\mSigma$-marginalized MAP estimator of Section~\ref{sec:MAP}, the RLS estimator is the cheaper of the two: it avoids the $O(d^3)$ Cholesky and log-determinant of $\mPsi_0+\mR$ at every iteration, and its likelihood is simply the quadratic $L_m$ already computed from the sufficient statistics. The price paid is that $\mSigma$ is assumed isotropic, an assumption that is accurate only after the observations $\vz_t$ have been approximately whitened, e.g., by a rolling standardization step as in our applications.

\section{MAP estimation with marginalized covariance}\label{sec:MAP}

The least-squares loss $L_m$ from the previous section treats all residual directions equally and does not model the noise covariance $\mSigma$. In this section, we extend the approach to a full Bayesian framework in which $\mSigma$ is given an inverse-Wishart prior and then marginalized out analytically. The result is a \emph{marginalized MAP} loss that (i) still depends on the data only through precomputed sufficient statistics, (ii) adapts to the noise structure without requiring explicit estimation of $\mSigma$, and (iii) stays numerically well-behaved (bounded below even when the residual scatter is rank-deficient) thanks to the positive-definite inverse-Wishart scale $\mPsi_0$. Coefficient shrinkage is supplied \emph{separately}, by the PAR-coordinate priors of Section~\ref{sec:RLS}, not by the covariance marginalization.

\subsection{Gaussian likelihood and residual scatter matrix}

Let $N:=T-m$ and define the residual vectors
\[
\vr_t:=\vz_{t+1}-\sum_{j=1}^m \mPi_j \vz_{t+1-j},
\qquad t=m,\dots,T-1.
\]
Conditionally on $\mSigma$, the conditional Gaussian negative log-likelihood of the truncated VAR$(m)$ working model, given the first $m$ observations (up to a constant), is
\begin{equation}\label{eq:nll_Sigma}
-\log p(\vz_{m+1:T}\mid \mPi_{1:p},\mTheta_{1:q},\mSigma)
=
\frac{N}{2}\log\det(\mSigma)
+\frac{1}{2}\tr\!\bigl(\mSigma^{-1}\mR\bigr),
\end{equation}
where $\mR\in\RR^{d\times d}$ is the \emph{residual scatter matrix}
\begin{equation}\label{eq:R_def}
\mR:=\sum_{t=m}^{T-1}\vr_t\vr_t'.
\end{equation}
Crucially, $\mR$ can be expressed in terms of the precomputed sufficient statistics from Section~\ref{sec:LSloss}. Let
\[
\mS_0^{(2)}:=\sum_{t=m}^{T-1}\vz_{t+1}\vz_{t+1}' \in\RR^{d\times d}
\]
be the matrix-valued analogue of the scalar statistic $\mS_0$. Then expanding the definition of $\vr_t$ gives
\begin{align}
\mR
&=
\mS_0^{(2)}
-\sum_{j=1}^m \mPi_j \mS_1(j)
-\sum_{j=1}^m \mS_1(j)' \mPi_j'
+\sum_{i=1}^m \sum_{j=1}^m \mPi_i \mS_2(ij)\mPi_j'.
\label{eq:R_residual_cov}
\end{align}
Thus $\mR$, and hence the likelihood \eqref{eq:nll_Sigma}, depends on the observations only through $\mS_0^{(2)}$, $\mS_1$, and $\mS_2$.

\subsection{Inverse-Wishart prior on \texorpdfstring{$\mSigma$}{Sigma} and marginalization}

We place a conjugate inverse-Wishart prior on the noise covariance:
\[
\mSigma \sim \mathrm{IW}(\mPsi_0,\, \nu_0),
\]
with scale matrix $\mPsi_0\in\RR^{d\times d}$ and degrees of freedom $\nu_0$. The prior is proper precisely when $\mPsi_0\succ\mz$ and $\nu_0>d-1$; the improper (Jeffreys-like) limit $\mPsi_0=\mz$, $\nu_0=0$ is also admissible and is discussed below. In our implementation $\mPsi_0 = \psi_0 \mI_d$ for a scalar $\psi_0\ge 0$, and throughout this paper we use the default $\psi_0=1$, $\nu_0=d+4$, i.e.\ $\mSigma\sim\mathrm{IW}(\mI_d,\,d+4)$, which is proper ($\mI_d\succ\mz$ and $d+4>d-1$). The choice $\nu_0=d+4$ is the smallest \emph{integer} degrees-of-freedom value for which the inverse-Wishart prior on $\mSigma$ has finite second moments (the inverse-Wishart variance exists only for $\nu_0>d+3$), so the prior is only mildly informative while still letting the data dominate. The density of the inverse-Wishart distribution is proportional to
\[
|\mSigma|^{-(\nu_0+d+1)/2}\exp\!\Bigl(-\tfrac{1}{2}\tr(\mPsi_0\mSigma^{-1})\Bigr).
\]
Combining with the Gaussian likelihood \eqref{eq:nll_Sigma} and integrating over $\mSigma$, we obtain the marginal likelihood
\begin{align*}
p(\vz_{m+1:T}\mid \mPi_{1:p},\mTheta_{1:q})
&= \int p(\vz_{m+1:T}\mid \mPi_{1:p},\mTheta_{1:q},\mSigma)\,p(\mSigma)\,\mathrm{d}\mSigma \\
&\propto |\mPsi_0 + \mR|^{-(\nu_0+N)/2}.
\end{align*}
This is a standard result from the conjugacy of the inverse-Wishart with the multivariate Gaussian (see, e.g., Chapter~3 of \citealp{gelman2013bayesian}). The negative log marginal likelihood is therefore
\begin{equation}\label{eq:marginal_loglik}
-\log p(\vz_{m+1:T}\mid \mPi_{1:p},\mTheta_{1:q})
= \frac{\nu_0+N}{2}\,\log\det(\mPsi_0 + \mR) + \mathrm{const.}
\end{equation}

\begin{remark}[Default prior and improper limit]
Throughout, the default is the proper prior $\mSigma\sim\mathrm{IW}(\mtx{I}_d,\,d+4)$, which adds the unit ridge $\mtx{I}_d$ to $\mR$ inside the log-determinant in~\eqref{eq:marginal_loglik}. Setting $\nu_0=0$, $\mPsi_0=\mtx{0}$ recovers the improper (Jeffreys-like) prior $p(\mSigma)\propto|\mSigma|^{-(d+1)/2}$, under which the marginal likelihood reduces to $\tfrac{N}{2}\log\det\mR+\mathrm{const}$; we do not use this limit, as the $\mR$-only criterion can degenerate when $\mR$ is singular.
\end{remark}

\subsection{Global coefficient regularization}\label{sec:JMAP_reg}

We place the same Gaussian shrinkage prior on the unconstrained PAR parameters as for RLS (Section~\ref{sec:RLS}): the fixed lag-one own-channel diagonal ridge~\eqref{eq:lag1_diag_prior}, and the four cross-validated groups with the diagonal groups running over lags $\ge2$, inducing via push-forward through $\mathcal{T}$ a smooth, stability-respecting prior on the VARMA coefficients \citep{heaps2023enforcing}. In terms of the group sums of squares of Section~\ref{sec:RLS}, the negative log-prior is, up to constants,
\begin{align}\label{eq:reg_Pi}
\mathrm{Reg}_{\mPhi}
&= \frac{S_{\mPhi}^{\mathrm{diag}}}{2\,\sigma^2_{\mPhi,\mathrm{diag}}}
+ \frac{S_{\mPhi}^{\mathrm{off}}}{2\,\sigma^2_{\mPhi,\mathrm{off}}}
+ \frac{S_{\mPhi}^{\mathrm{diag},1}}{2\,\sigma^2_{\mathrm{diag},1}},\\
\mathrm{Reg}_{\mTheta}
&= \frac{S_{\mTheta}^{\mathrm{diag}}}{2\,\sigma^2_{\mTheta,\mathrm{diag}}}
+ \frac{S_{\mTheta}^{\mathrm{off}}}{2\,\sigma^2_{\mTheta,\mathrm{off}}}
+ \frac{S_{\mTheta}^{\mathrm{diag},1}}{2\,\sigma^2_{\mathrm{diag},1}},\label{eq:reg_Theta}
\end{align}
identical to the coefficient penalties in the RLS loss~\eqref{eq:rls_loss}. This global (lag-pooled) parameterisation uses one variance per group rather than one per lag, which empirically improves stability when $p$ and $q$ are large relative to $N$, at the cost of preventing lag-specific shrinkage.

\subsection{The $\Sigma$-marginalized MAP loss}

Combining the marginalized likelihood \eqref{eq:marginal_loglik} with the M-prior regularization \eqref{eq:reg_Pi}--\eqref{eq:reg_Theta}, the full \emph{marginalized MAP loss} is
\begin{equation}\label{eq:marginalized_MAP}
\boxed{
\mathcal{L}_m^{\mathrm{marg}}(\mM^{\mPhi}_{1:p},\mM^{\mTheta}_{1:q})
= \frac{\nu_0+N}{2}\,\log\det\!\bigl(\mPsi_0 + \mR(\mathcal{T}(\mM^{\mPhi}_{1:p},\mM^{\mTheta}_{1:q}))\bigr)
+ \mathrm{Reg}_{\mPhi} + \mathrm{Reg}_{\mTheta},
}
\end{equation}
where $\mR$ is the residual scatter matrix \eqref{eq:R_residual_cov} evaluated at $(\mPi_{1:m},\mTheta_{1:q})$ obtained from $(\mM^{\mPhi}_{1:p},\mM^{\mTheta}_{1:q})$ via the composite map of~\eqref{eq:M_to_Pi_Theta}, and $N=T-m$. The objective is minimised as an unconstrained function of $(\mM^{\mPhi}_{1:p},\mM^{\mTheta}_{1:q})\in\RR^{(p+q)d^2}$; the PAR map $\mathcal{T}$ enforces $\spr(\mathcal{C}_{\mPhi})<1$ and $\spr(\mathcal{C}_{\mTheta})<1$ by construction, so no projection or augmented-Lagrangian step is needed.

This loss has several desirable properties:
\begin{enumerate}
\item \textbf{Covariance-free.} The noise covariance $\mSigma$ has been marginalized out analytically and does not appear as a parameter. This reduces the parameter count by $d(d+1)/2$ compared to approaches that optimize over $\mSigma$ directly.
\item \textbf{Sufficient-statistics form.} As with RLS, the loss depends on the data only through $\mS_0^{(2)}$, $\mS_1$, $\mS_2$; after the one-time pre-pass, each evaluation costs $O(m^2d^3+(p^2{+}q^2)d^3)$, independent of $T$.
\item \textbf{Numerical stability from $\mPsi_0$.} The positive-definite scale $\mPsi_0$ keeps the data term bounded below: since $\mPsi_0+\mR\succeq\mPsi_0\succ\mz$, we have $\log\det(\mPsi_0+\mR)\ge\log\det\mPsi_0>-\infty$, so even a rank-deficient residual scatter $\mR$ (as when the truncated model interpolates the in-sample residuals) cannot drive the criterion to $-\infty$. The inverse-Wishart prior therefore regularizes the \emph{covariance} estimate, shrinking it toward $\mPsi_0$; shrinkage of the \emph{coefficient} matrices $\mPi_{1:p},\mTheta_{1:q}$ comes separately, from the PAR-coordinate priors of Section~\ref{sec:RLS}.
\item \textbf{Unconstrained optimisation.} Stationarity and invertibility are guaranteed by $\mathcal{T}$, with no constraints on the state vector.
\end{enumerate}

\begin{remark}[Recovering $\hat{\mSigma}$]
Although $\mSigma$ is not optimized over, a point estimate can be recovered after fitting. The posterior mode of $\mSigma$ under the inverse-Wishart prior, given the MAP estimates $(\hat{\mM}^{\mPhi}_{1:p},\hat{\mM}^{\mTheta}_{1:q})$ and the corresponding $\hat{\mR}=\mR\!\bigl(\mathcal{T}(\hat{\mM}^{\mPhi}_{1:p},\hat{\mM}^{\mTheta}_{1:q})\bigr)$, is
$\hat{\mSigma} = (\mPsi_0 + \hat{\mR}) / (\nu_0 + N + d + 1).$
\end{remark}

\paragraph{Optimisation by L-BFGS.}
The RLS loss~\eqref{eq:rls_loss} and the marginalised MAP loss~\eqref{eq:marginalized_MAP} are both smooth functions of $(\mM^{\mPhi}_{1:p},\mM^{\mTheta}_{1:q})\in\RR^{(p+q)d^2}$, composed of matrix multiplications, FFTs, a Cholesky log-determinant, and the fixed-variance ridge penalties; a value-and-gradient pair is obtained in one reverse-mode autodiff pass \citep{jax2018github}, storing no recursion intermediates. The PAR map $\mathcal{T}$ enforces $\spr(\mathcal{C}_{\mPhi})<1$ and $\spr(\mathcal{C}_{\mTheta})<1$ by construction, so both losses are minimised as unconstrained functions on $\RR^{(p+q)d^2}$. Because the parameter dimension $(p+q)d^2$ grows quadratically with $d$ and easily reaches the thousands, we use the limited-memory quasi-Newton method L-BFGS \citep{liu1989limited}, which builds an implicit positive-definite Hessian approximation from the last few value-gradient pairs and produces each search direction in time linear in the parameter dimension. L-BFGS also warm-starts naturally across rolling-window refits, the previous window's optimum serving as the initial point when the window shifts by small steps (Appendix~\ref{sec:online}); the experiments of Section~\ref{sec:Experiments} instead refit each window from a pilot estimate (Section~\ref{sec:exp_protocol}). We optimize this nonconvex objective numerically and do not claim convergence to the global optimum. The recovery guarantee of Section~\ref{sec:consistency} is stated for a global empirical minimizer over a compact ball (and, more generally, for any point whose objective is near the minimum, Theorem~\ref{thm:pi_rate} and Remark~\ref{rem:compactness}); it therefore bounds the statistical error of a well-optimized fit but does not certify that the L-BFGS iterate reaches it.

\subsection{Multiplicative seasonal VARMA}\label{sec:seasonal_varma}
Many series, hourly or daily measurements in particular, carry a strong \emph{periodic} component: even after a deterministic seasonal mean is removed, the anomalies retain a season-recurring autocorrelation that an ordinary VAR or dense VARMA can only mimic with many extra lags. The multivariate analogue of the multiplicative seasonal ARMA of \citet[Ch.~9]{box2015time}, the seasonal VARMA$(p,q)\times(P,Q)_s$, captures this parsimoniously by multiplying a regular and a seasonal lag polynomial on each side:
\begin{equation}\label{eq:sarma}
\Bigl(\mI-\sum_{j=1}^{P}\mPhi^{(s)}_j B^{sj}\Bigr)\Bigl(\mI-\sum_{i=1}^{p}\mPhi_i B^{i}\Bigr)\vZ_t
=\Bigl(\mI-\sum_{k=1}^{Q}\mTheta^{(s)}_k B^{sk}\Bigr)\Bigl(\mI-\sum_{l=1}^{q}\mTheta_l B^{l}\Bigr)\va_t,
\end{equation}
where $B$ is the backshift operator ($B\vZ_t=\vZ_{t-1}$) and $s$ is the seasonal period. Expanding the two products gives an ordinary VARMA whose AR and MA polynomials have orders $p+sP$ and $q+sQ$, so~\eqref{eq:sarma} is a \emph{restricted} VARMA: of the $p+sP$ AR lags only those at the regular lags $1{:}p$, the seasonal lags $s,2s,\dots,sP$, and the interaction lags $sr+j$ ($1\le r\le P$, $1\le j\le p$; possibly overlapping the regular and seasonal lags) carry nonzero coefficients, and these are determined by just $(p+P)d^2$ free parameters (and likewise $(q+Q)d^2$ on the MA side) rather than the $(p+sP+q+sQ)d^2$ of the unrestricted expansion. A single seasonal factor at lag $s$ thus encodes a full period of dynamics that an unrestricted VAR would need $\ge s$ lags and $s\,d^2$ parameters to represent.

The model~\eqref{eq:sarma} plugs directly into the truncated-VARMA machinery of Sections~\ref{sec:truncated_varma}--\ref{sec:MAP}: the four factor polynomials are convolved once into the expanded $(\mPhi_{1:p+sP},\mTheta_{1:q+sQ})$, the infinite-VAR coefficients $\mPi_j$ that solve the Box--Jenkins relation~\eqref{eq:mPirec}--\eqref{eq:mPirec2} are formed directly from these factors by a single FFT inversion, truncated at $j=m$, and fed to the same sufficient-statistic loss. Stationarity and invertibility are again enforced coordinate-free: each of the four factors is parameterized in its \emph{own} unconstrained PAR coordinates through the map $\tau$ of Section~\ref{sec:VARMAintro}, so writing $(\mM^{\mPhi},\mM^{\mPhi^{(s)}},\mM^{\mTheta},\mM^{\mTheta^{(s)}})$ for the four unconstrained stacks, the maps $\tau_p,\tau_P,\tau_q,\tau_Q$ make each factor's block-companion matrix Schur stable, so each factor polynomial has no determinantal zeros in the closed unit disk. The determinant of the expanded AR (or MA) operator is the product of its factors' determinants, evaluating the seasonal factor at $z^{s}$ (with $|z^{s}|\le1$ whenever $|z|\le1$), hence also nonvanishing there, so~\eqref{eq:sarma} is stationary and invertible by construction, and the $\Sigma$-marginalized MAP loss~\eqref{eq:marginalized_MAP} (or, with $\mSigma=\mI$, the RLS criterion of Section~\ref{sec:RLS}) is minimized as an unconstrained function of the $(p+P+q+Q)d^2$ factor parameters. We apply this seasonal VARMA in the meteorological and air-quality benchmarks of Sections~\ref{sec:exp_singapore} and~\ref{sec:exp_beijing}.

\paragraph{Relation to the standard multiplicative seasonal VARMA.}
Equation~\eqref{eq:sarma} is the multiplicative seasonal VARMA of \citet{lutkepohl2005new,tsay2013multivariate} (see also \citealp{reinsel2003elements}), written in the convention in which the seasonal factor \emph{left}-multiplies the regular factor: in the scalar case the ordering of the factors is immaterial, but matrix coefficients need not commute, so for $d>1$ the convention fixes the restricted family, and we use the stated one throughout. Our treatment changes how the model is parameterized and fit, not the model class. The PAR map acts per factor, and the four factor polynomials are then multiplied in coefficient space to form the expanded $(\mPhi_{1:p+sP},\mTheta_{1:q+sQ})$; since $\det\!\bigl[\mPhi(z)\,\mPhi^{(s)}(z^{s})\bigr]=\det\mPhi(z)\,\det\mPhi^{(s)}(z^{s})$ (likewise on the MA side), the expanded operator is stable and invertible exactly when each factor is, so parameterizing each factor over its own stable region via $\tau$ traverses precisely this restricted family. The departures from the textbook treatment are in estimation only: the per-factor PAR-shrinkage and inverse-Wishart priors (Sections~\ref{sec:RLS}--\ref{sec:JMAP_reg}), unconstrained optimization in the PAR coordinates, and the truncated infinite-VAR loss, whose approximation improves as $m\to\infty$.

\paragraph{Computational cost.} The per-gradient cost of fitting~\eqref{eq:sarma} (independent of $T$ and, at leading order, identical to that of a plain VARMA of the same dimension) is analyzed in detail in Appendix~\ref{sec:cost_appendix}, building on the Fourier evaluation developed in Section~\ref{sec:complexity_S2}.

\paragraph{Seasonal VARMAX.}
The exogenous-regressor (VARMAX) extension composes with the seasonal factorization, giving a \emph{seasonal VARMAX}: the right-hand side of~\eqref{eq:sarma} is augmented with a finite exogenous lag polynomial $\mtx{\Xi}(B)=\sum_{k=0}^{s_x}\mtx{\Xi}_k B^{k}$ acting on observed covariates $\vct{x}_t\in\R^{d_x}$ (a known price or temperature path, say):
\begin{equation*}
\Bigl(\mI-\sum_{j=1}^{P}\mPhi^{(s)}_j B^{sj}\Bigr)\Bigl(\mI-\sum_{i=1}^{p}\mPhi_i B^{i}\Bigr)\vZ_t
=\mtx{\Xi}(B)\,\vct{x}_t+\Bigl(\mI-\sum_{k=1}^{Q}\mTheta^{(s)}_k B^{sk}\Bigr)\Bigl(\mI-\sum_{l=1}^{q}\mTheta_l B^{l}\Bigr)\va_t .
\end{equation*}
Solving for the innovation gives the truncated residual $\va_t=\vZ_t-\sum_{j=1}^{m}\mPi_j\vZ_{t-j}-\sum_{j=0}^{m}\mPi^{\mathrm{x}}_j\vct{x}_{t-j}$, in which $\mPi_{1:m}$ are exactly those of the seasonal VARMA and $\mPi^{\mathrm{x}}$ is the truncation of $\bigl[\mTheta^{(s)}(z^{s})\,\mTheta(z)\bigr]^{-1}\mtx{\Xi}(z)$, computed by the same FFT inversion. The sufficient-statistic structure, the $T$-independent gradients (working dimension $d+d_x$), the incremental rolling updates, and the built-in stationarity and invertibility all carry over, the exogenous block $\mtx{\Xi}$ being unconstrained; Appendix~\ref{sec:VARMAX_appendix} gives the details. The case $P=Q=0$ is the plain VARMAX of the retail benchmark (Section~\ref{sec:exp_dominicks}); the seasonal case suits periodic series driven by known covariates, forecast \emph{conditional} on the covariate path.

\subsection{Statistical guarantees}\label{sec:consistency}

We close the development of the two point estimators with a statistical guarantee. Its target is the sequence of infinite-VAR coefficients $\mPi_{1:\infty}$ of Eq.~\eqref{eq:infiniteVAR}, equivalently, the one-step predictor, or the transfer function $\mTheta(z)^{-1}\mPhi(z)$. Unlike the pair $(\mPhi_{1:p},\mTheta_{1:q})$, this functional is identifiable (Section~\ref{sec:bg_identifiability}): observationally equivalent VARMA parametrisations share the same $\mPi_{1:\infty}$, and it is $\mPi_{1:m}$ alone that enters our losses and forecasts. Accordingly, in the spirit of the classical transfer-function consistency theory for Gaussian VARMA maximum likelihood \citep{dunsmuir1976vector,hannan2012statistical}, we prove recovery of $\mPi_{1:\infty}$ without any identifiability conditions, and (as is standard in that literature) over a compact parameter set, for exact or approximate empirical minimizers.

\begin{assumption}\label{ass:consistency}
\begin{enumerate}
\item[(i)] The observations $\vz_{1:T}$ are a sample of a stationary $d$-dimensional Gaussian VARMA$(\bar p,\bar q)$ process~\eqref{eq:VARMAdef} whose parameters $(\mPhi^0_{1:\bar p},\mTheta^0_{1:\bar q})$ satisfy Assumption~\ref{ass:roots}, with i.i.d.\ innovations $\va_t\sim N(\vct{0},\mSigma)$, $\mSigma\succ\mz$.
\item[(ii)] The fitted orders satisfy $p\ge\bar p$ and $q\ge\bar q$. Writing $n_{\mathrm{par}}:=(p+q)d^2$ and $\vartheta$ for the stacked PAR coordinates $(\mM^{\mPhi}_{1:p},\mM^{\mTheta}_{1:q})\in\RR^{n_{\mathrm{par}}}$, let $\vartheta^0:=\mathcal{T}^{-1}(\mPhi^0_{1:p},\mTheta^0_{1:q})$ be the PAR coordinates of the true coefficients padded with zero matrices up to orders $(p,q)$ (padding leaves the lag polynomials, hence Assumption~\ref{ass:roots}, unchanged and appends only zero eigenvalues to the block companion spectrum, and $\mathcal{T}$ is a bijection onto the stable set by Proposition~\ref{prop:par_bijection} and the padding observation of Appendix~\ref{sec:proof_par}, so $\vartheta^0$ is well defined). Assume $\vartheta^0\in K_B:=\{\vartheta\in\RR^{n_{\mathrm{par}}}:\|\vartheta\|_2\le B\}$ for a fixed radius $B$. Minimizing over $K_B$ needs no projection step in practice: the smooth radial squashing $\sigma_B(\vartheta):=B\vartheta/\sqrt{B^2+\|\vartheta\|_2^2}$, the scalar analogue of the contraction $\mM\mapsto\mM(\mI+\mM^\top\mM)^{-1/2}$ inside $\mathcal{T}$ itself, is a diffeomorphism of $\RR^{n_{\mathrm{par}}}$ onto the open ball, so unconstrained optimization in the squashed coordinates produces $K_B$-valued points whose objective comes arbitrarily close to the minimum over $K_B$; these are the approximate minimizers that Theorem~\ref{thm:pi_rate} admits through its tolerance $\eta$, although a minimizer lying on the boundary sphere itself is not attained at any finite squashed coordinate.
\item[(iii)] $\hat\vartheta_T$ is a measurable $K_B$-valued estimator targeting the RLS loss~\eqref{eq:rls_loss} or the $\Sigma$-marginalized MAP loss~\eqref{eq:marginalized_MAP} (with $\mPsi_0\succ\mz$ and $\nu_0>d-1$ in the latter case, so the inverse-Wishart prior is proper). The hyperparameters $(\mPsi_0,\nu_0)$ are fixed; the group ridge variances may be data-dependent, selected in any measurable way (for instance by validation on a held-out block of the same trajectory, as in Section~\ref{sec:Experiments}), provided each selected variance is at least a fixed floor $\sigma^2_{\min}>0$. Writing $\mathcal{L}$ for the objective at the selected variances, the estimator is required to satisfy, for a (possibly random) $\eta\ge0$,
\[
\mathcal{L}(\hat\vartheta_T)\;\le\;\inf_{\vartheta\in K_B}\mathcal{L}(\vartheta)+\eta ,
\]
and we call the smallest such $\eta$ its optimization error ($\eta=0$ for an exact minimizer over $K_B$).
\end{enumerate}
\end{assumption}

\begin{theorem}[Recovery of $\mPi_{1:\infty}$ for approximate empirical minimizers]\label{thm:pi_rate}
Let Assumption~\ref{ass:consistency} hold, write $\mPi_j(\hat\vartheta_T)$ for the infinite-VAR coefficients~\eqref{eq:mPirec}--\eqref{eq:mPirec2} of the fitted parameters and $\mPi^0_j:=\mPi_j(\vartheta^0)$ for those of the truth, and set
\[
\Lambda_\delta \;:=\; n_{\mathrm{par}}\log T+\log(1/\delta).
\]
There exist constants $\rho_\ast\in(0,1)$, $C\ge1$, $c>0$, $m_1\in\mathbb{N}$ and $T_0$, depending only on $(\mPhi^0,\mTheta^0,\mSigma)$, the orders $(p,q,d)$, the radius $B$ and the fixed hyperparameters $(\mPsi_0,\nu_0)$ and the variance floor $\sigma^2_{\min}$ (but not on $T$, $m$ or $\delta$) such that for every $T\ge T_0$, every $\delta\in(0,1/2)$ with $\Lambda_\delta\le c\,T$, and every truncation lag $m$ with
\[
m_1\;\le\; m\;\le\; T/2,
\]
any estimator satisfying Assumption~\ref{ass:consistency}(iii), for either loss, obeys with probability at least $1-\delta$,
\begin{equation}\label{eq:pi_rate}
\sum_{j=1}^{\infty}\bigl\|\mPi_j(\hat\vartheta_T)-\mPi^0_j\bigr\|_F^{\,2}
\;\le\; C\Bigl(\frac{\Lambda_\delta}{T}+\rho_\ast^{\,2m}+\frac{\eta}{T}\Bigr).
\end{equation}
No lower bound on $m$ beyond the constant $m_1$ is imposed: the truncation enters only through the reported term $\rho_\ast^{2m}$, and each corollary below imposes of it what it needs; the choice $m\ge\log T/\log(1/\rho_\ast)$ makes that term at most $T^{-2}$.
\end{theorem}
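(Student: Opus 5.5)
The approach is a basic-inequality (M-estimation) argument, carried out on the transfer-function differences rather than on the non-identifiable pair $(\mPhi,\mTheta)$. Write $\mPi_{1:m}(\vartheta)$ for the truncated coefficients and $\mtx{\Delta}_j(\vartheta):=\mPi_j(\vartheta)-\mPi^0_j$. I would proceed in four steps.

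\textbf{Step 1 (uniform geometric decay on $K_B$).} Since $\mathcal{T}$ is continuous (Proposition~\ref{prop:par_bijection}) and $K_B$ is compact, $\max_{\vartheta\in K_B}\spr(\mathcal{C}_{\mTheta(\vartheta)})<1$. Applying~\eqref{eq:mPij} and~\eqref{eq:gelfand} uniformly gives a single $\rho_\ast<1$ and a constant $C_0$ such that $\|\mPi_j(\vartheta)\|\le C_0\rho_\ast^j$ for all $\vartheta\in K_B$. The same bound holds for the true coefficients, after enlarging $\rho_\ast$ if needed. Two consequences follow:
\begin{itemize}
\item the tail $\sum_{j>m}\|\mtx{\Delta}_j\|_F^2$ is at most $C\rho_\ast^{2m}$, so it suffices to control $\sum_{j\le m}\|\mtx{\Delta}_j\|_F^2$;
\item the map $\vartheta\mapsto\mPi_{1:m}(\vartheta)$ is Lipschitz into $\ell^2$ with a constant independent of $m$, which is what makes covering arguments uniform in $m$.
\end{itemize}

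\textbf{Step 2 (basic inequality for RLS).} Under the truth, $\vz_{t+1}=\sum_{j\le m}\mPi^0_j\vz_{t+1-j}+\va_{t+1}+\vct{b}_t$, where the truncation remainder $\vct{b}_t=\sum_{j>m}\mPi^0_j\vz_{t+1-j}$ has second moment $O(\rho_\ast^{2m})$. Since $\vartheta^0\in K_B$, the defining inequality of Assumption~\ref{ass:consistency}(iii) can be compared at $\vartheta^0$. The ridge terms are bounded on $K_B$ by $B^2/\sigma^2_{\min}$ regardless of the data-driven selection, so this comparison yields
\[
Q(\hat\vartheta):=\sum_t\Bigl\|\sum_{j\le m}\mtx{\Delta}_j\vz_{t+1-j}\Bigr\|^2\le 2\sum_t\Bigl\langle\sum_j\mtx{\Delta}_j\vz_{t+1-j},\,\va_{t+1}+\vct{b}_t\Bigr\rangle+C+2\eta .
\]
The cross term is handled in two pieces.
\begin{itemize}
\item The $\va$ part is a martingale sum. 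A self-normalized Gaussian martingale inequality at each point of a $T^{-2}$-net of $K_B$ (net size $T^{O(n_{\mathrm{par}})}$) bounds it by $\sqrt{Q\,\Lambda_\delta}$ uniformly, with the net approximation absorbed by the Lipschitz property of Step 1.
\item The $\vct{b}$ part is bounded by Cauchy--Schwarz as $\sqrt{Q}\cdot\sqrt{N}\,O(\rho_\ast^m)$, using a high-probability bound $\sum_t\|\vct{b}_t\|^2\lesssim N\rho_\ast^{2m}$.
\end{itemize}
Solving the resulting quadratic inequality in $\sqrt Q$ gives $Q\lesssim\Lambda_\delta+N\rho_\ast^{2m}+\eta$.

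\textbf{Step 3 (restricted eigenvalue, the main obstacle).} It remains to show that $Q(\vartheta)\ge cN\sum_{j\le m}\|\mtx{\Delta}_j\|_F^2$ uniformly over $K_B$ with probability $1-\delta$. The difficulty is that $m$ may be as large as $T/2$, so a full-matrix concentration of $\mS_2/N$ is unavailable.

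\emph{Expectation.} Here the spectral-density argument gives the bound directly:
\[
\E\,\Bigl\|\sum_j\mtx{\Delta}_j\vZ_{t-j}\Bigr\|^2=\int\mtx{\Delta}(e^{i\omega})f(\omega)\mtx{\Delta}(e^{i\omega})^*\,d\omega\ \ge\ 2\pi f_{\min}\sum_j\|\mtx{\Delta}_j\|_F^2,
\]
where $f_{\min}>0$ by Assumption~\ref{ass:roots} (cf.\ $\mu_{\min}$ in Proposition~\ref{prop:trunc_bias}).

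\emph{Deviation.} For fixed $\vartheta$, $Q(\vartheta)$ is a Gaussian quadratic form. Hanson--Wright gives relative deviation $O(\sqrt{\Lambda/N}+\Lambda/N)$, with operator norm controlled by $\sup_\omega\|f\|$ and $\|\mtx{\Delta}\|_{\ell^2}$. I would apply it on a multiscale net (a peeling over shells of $\|\mtx{\Delta}\|$) and pass to all of $K_B$ via Step 1. The condition $\Lambda_\delta\le cT$ ensures the relative deviation is at most $1/2$.

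I expect this uniformity to be the delicate part: the $\ell^2$ norm of $\mtx{\Delta}$ may be small while parameter-space distances are not, so the peeling must be in terms of $\|\mtx{\Delta}\|_{\ell^2}$.

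Combining Steps 2 and 3 and dividing by $N\asymp T$ (since $m\le T/2$) gives~\eqref{eq:pi_rate} for RLS.

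\textbf{Step 4 (marginalized MAP).} Write $\hat{\mR}=\mR_0+\mtx{P}+\mtx{X}$, where $\mR_0$ is the scatter at $\vartheta^0$, $\mtx{P}\succeq\mz$ is the $\mtx{\Delta}$-quadratic part (so $\tr\mtx{P}=Q$), and $\mtx{X}$ is the cross term. Whp $\mPsi_0+\mR_0\asymp N\mSigma$, and $\|\mtx{P}\|\le CN$ uniformly on $K_B$.

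Using $\log\det(\mI+\mtx{M})\ge\tr\mtx{M}-\tfrac12\|\mtx{M}\|_F^2$ for small $\mtx{M}$, and otherwise $\log\det(\mI+\mtx{M})\ge\tr\mtx{M}/(1+\|\mtx{M}\|)$, I would lower-bound
\[
\tfrac{\nu_0+N}{2}\bigl[\log\det(\mPsi_0+\hat{\mR})-\log\det(\mPsi_0+\mR_0)\bigr]\ \ge\ c\,Q-C|\langle\cdot\rangle|.
\]
Here the cross term is now weighted by $(\mPsi_0+\mR_0)^{-1}$, and this weight is handled by the same self-normalized bound as in Step 2. The remaining steps then repeat verbatim, giving~\eqref{eq:pi_rate} for the MAP loss as well.
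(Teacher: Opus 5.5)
Your skeleton matches the paper's proof. It has uniform geometric decay and an $m$-independent Lipschitz bound on $K_B$, and it splits the residual into innovation, filtered-error and truncation-tail parts. It then bounds the innovation cross term by a self-normalized martingale inequality on a net, bounds the tail by Cauchy--Schwarz, and localizes the log-determinant for JMAP.

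The gap is in Step~3, which you correctly identify as the crux but support with a claim that is false as stated. The operator norm of the Gaussian quadratic form $Q(\vartheta)$ is $\lambda_{\max}\bigl(\mathrm{Cov}(\vv_m,\dots,\vv_{T-1})\bigr)$. That quantity is governed by $\sup_\omega\|\widetilde{\Delta}(\omega)\|^2\le(\sum_j\|\Delta_j\|)^2$, an $\ell^1$/$H^\infty$ quantity (cf.\ Lemma~\ref{lem:filtered}(b)), and not by $\sum_j\|\Delta_j\|_F^2$.
\begin{itemize}
\item For a generic length-$m$ filter the two quantities differ by a factor of order $m$.
\item Even with the uniform decay $\|\Delta_j\|\le2\bar G\rho_\ast^{\,j}$, the generic argument only gives $\|\Delta\|_{\ell^1}^2\lesssim\|\Delta\|_{\ell^2}^2\log(1/\|\Delta\|_{\ell^2})$.
\item So Hanson--Wright does not yield relative deviation $O(\sqrt{\Lambda_\delta/N})$ unless you add a Nikolskii-type inequality for bounded-degree rational transfer functions, which you do not supply.
\item The peeling you describe also cannot make the multiplicative bound uniform as $\|\Delta\|\to0$, because each finer shell needs a finer net and its own share of $\delta$.
\end{itemize}

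The missing idea is that you do not need a multiplicative restricted-eigenvalue bound at all. Since~\eqref{eq:pi_rate} already contains an additive $\Lambda_\delta/T$, it is enough to show $Q_v(\vartheta)\ge\tfrac12Nq_m(\vartheta)-C\Lambda_\delta$ uniformly on $K_B$, where $q_m=\E\|\vv_t\|^2\ge c_0\sum_{j\le m}\|\Delta_j\|_F^2$ by your expectation computation. The paper proves this as follows:
\begin{itemize}
\item It uses the Laurent--Massart lower tail $Q\ge\Sigma_1-2\sqrt{\lambda_{\max}\Sigma_1x}$.
\item It uses the crude bound $\lambda_{\max}\le C_0(\sum_j\|\Delta_j\|)^2\le4C_0\bar\Pi^2$, which holds on all of $K_B$.
\item It absorbs the deviation via $2\sqrt{ab}\le\tfrac12a+2b$.
\item It takes one union bound over a single $T^{-4}$-net, and uses a pathwise bound $\max_t\|\vz_t\|^2\lesssim\log(T/\delta)$ to pass from the net to all of $K_B$.
\end{itemize}
Then $q_m(\hat\vartheta)\le2(\hat Q+C\Lambda_\delta)/N$, and this combines directly with your Step~2. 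If you stopped your peeling at $\|\Delta\|^2\asymp\Lambda_\delta/N$, you would recover exactly this additive bound.

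Step~4 has two smaller problems.
\begin{itemize}
\item The inequality $\log\det(\mI+\mtx{M})\ge\tr\mtx{M}-\tfrac12\|\mtx{M}\|_F^2$ fails when $\mtx{M}$ has negative eigenvalues. The bound $\tr\mtx{M}/(1+\|\mtx{M}\|)$ requires $\mtx{M}\succeq\mz$. But signal plus cross term is indefinite.
\item The $(\mPsi_0+\mR_0)^{-1}$-weighted cross term is not a martingale, because the weight depends on all the innovations.
\end{itemize}
The paper avoids both issues. It bounds the unweighted $\|\mtx{C}(\hat\vartheta)\|_F$ entrywise by martingale arguments, and uses $\mtx{E}_C\succeq-e\mI$ with Loewner monotonicity. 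It then applies concavity only to the positive-semidefinite signal part.
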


\begin{corollary}[Consistency and rate in summed operator norm]\label{cor:pi_consistency}
Under Assumption~\ref{ass:consistency}:
\begin{enumerate}
\item[(i)] (Consistency.) If $m=m(T)\to\infty$ with $m\le T/2$ and $\eta/T\overset{P}{\to}0$, then
$\sum_{j\ge1}\|\mPi_j(\hat\vartheta_T)-\mPi^0_j\|_{\mathrm{op}}\overset{P}{\longrightarrow}0$.
\item[(ii)] (Rate.) If in addition $m\ge\log T/\log(1/\rho_\ast)$ and there is an event $E_T$ with $\PP(E_T)\ge1-\delta'$ on which $\eta\le C'\,n_{\mathrm{par}}\log T$ for a constant $C'$ (in particular, deterministically with $\delta'=0$), then with probability at least $1-\delta-\delta'$,
\[
\sum_{j\ge1}\bigl\|\mPi_j(\hat\vartheta_T)-\mPi^0_j\bigr\|_{\mathrm{op}}
\;\le\; C\sqrt{\frac{\Lambda_\delta\,\log T}{T}}
\;=\; O\Bigl(\log T\sqrt{\tfrac{n_{\mathrm{par}}}{T}}\Bigr)\quad\text{for fixed }\delta.
\]
\end{enumerate}
\end{corollary}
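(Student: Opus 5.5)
The plan is to derive both parts from Theorem~\ref{thm:pi_rate}. Two ingredients do the work: the elementary inequality $\|\cdot\|_{\mathrm{op}}\le\|\cdot\|_F$, and a \emph{uniform} geometric-decay bound for the infinite-VAR coefficients over the compact set $K_B$. The Frobenius bound~\eqref{eq:pi_rate} controls an $\ell_2$ sum over lags, whereas the corollary asks for an $\ell_1$ sum. I will therefore split the lag sum at a cutoff $J$: Cauchy--Schwarz handles the head, and geometric decay of both $\mPi_j(\hat\vartheta_T)$ and $\mPi^0_j$ handles the tail.

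\emph{Step 1 (uniform decay on $K_B$).} I will show there are $\bar\rho\in(0,1)$ and $M<\infty$, depending only on $(p,q,d,B)$, such that $\|\mPi_j(\vartheta)\|_{\mathrm{op}}\le M\bar\rho^{\,j}$ for all $j\ge1$ and all $\vartheta\in K_B$.
\begin{itemize}
\item By Proposition~\ref{prop:par_bijection}, $\mathcal{T}$ is continuous from the compact set $K_B$ into the open stable set. Since the spectral radius is continuous, $\max_{K_B}\spr(\mathcal{C}_{\mTheta})=:\rho_1<1$.
\item Fix $\bar\rho\in(\rho_1,1)$. For $k\ge1$, write $\mathcal{C}_{\mTheta}^k$ as the Cauchy integral $\frac{1}{2\pi i}\oint_{|z|=\bar\rho}z^k(z\mI-\mathcal{C}_{\mTheta})^{-1}\,dz$.
\item The resolvent norm is jointly continuous in $(z,\vartheta)$ on the compact set $\{|z|=\bar\rho\}\times K_B$, so it is bounded there. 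This gives $\|\mathcal{C}_{\mTheta}^k\|\le M_0\bar\rho^{\,k}$ uniformly.
\item The vector $\mV^{\mPi}$, built from $\mPi_{1:p}$ by the polynomial recursion~\eqref{eq:M_to_Pi_Theta}, is also uniformly bounded on $K_B$.
\item Combining these with~\eqref{eq:mPij} yields the claim, after enlarging $M$ to cover $j\le p$.
\end{itemize}
The truth $\vartheta^0$ lies in $K_B$, so the same bound holds for $\mPi^0_j$. Hence $\sum_{j>J}\|\mPi_j(\hat\vartheta_T)-\mPi^0_j\|_{\mathrm{op}}\le 2M\bar\rho^{\,J+1}/(1-\bar\rho)$ deterministically.

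\emph{Step 2 (part (ii)).} Choose $J:=\lceil\log T/\log(1/\bar\rho)\rceil$, so the tail is $O(T^{-1})$. For the head, Cauchy--Schwarz gives
\[
\sum_{j\le J}\|\mPi_j(\hat\vartheta_T)-\mPi^0_j\|_{\mathrm{op}}\le\sqrt{J}\Bigl(\sum_{j\ge1}\|\mPi_j(\hat\vartheta_T)-\mPi^0_j\|_F^2\Bigr)^{1/2}.
\]
Work on the intersection of the $1-\delta$ event of Theorem~\ref{thm:pi_rate} with $E_T$, which has probability at least $1-\delta-\delta'$ by a union bound. On this event:
\begin{itemize}
\item $\rho_\ast^{2m}\le T^{-2}$ by the assumed lower bound on $m$;
\item $\eta/T\le C'n_{\mathrm{par}}\log T/T\le C'\Lambda_\delta/T$, since $\delta<1/2$.
\end{itemize}
So the squared Frobenius sum is at most $C''\Lambda_\delta/T$, using $T^{-2}\le\Lambda_\delta/T$. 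Multiplying by $\sqrt J\lesssim\sqrt{\log T}$ and absorbing the $O(T^{-1})$ tail, which is dominated by $\sqrt{\Lambda_\delta\log T/T}$, gives the stated bound after enlarging $C$. For fixed $\delta$ we have $\Lambda_\delta\asymp n_{\mathrm{par}}\log T$, which yields the $O(\log T\sqrt{n_{\mathrm{par}}/T})$ form.

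\emph{Step 3 (part (i)).} Fix $\varepsilon>0$ and $\delta\in(0,1/2)$.
\begin{itemize}
\item Pick $J$ fixed so that the deterministic tail from Step~1 is below $\varepsilon/2$.
\item For $T$ large, $T\ge T_0$, $\Lambda_\delta\le cT$ (since $\Lambda_\delta=O(\log T)$), and $m(T)\ge m_1$ (since $m\to\infty$). So Theorem~\ref{thm:pi_rate} applies.
\item On its event, the head is at most $\sqrt{J}\,\bigl(C(\Lambda_\delta/T+\rho_\ast^{2m}+\eta/T)\bigr)^{1/2}$. The first two terms tend to zero deterministically, and $\eta/T\to0$ in probability.
\item Hence $\PP(\text{head}>\varepsilon/2)\le\delta+o(1)$. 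Since $\delta$ is arbitrary, consistency follows.
\end{itemize}
One subtlety: $\eta$ is random. I will use that the theorem's bound holds on a single event simultaneously for the realised $\eta$, as its statement allows, and then intersect with $\{\eta/T\le\varepsilon'\}$.

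The main obstacle is Step~1. Theorem~\ref{thm:pi_rate} supplies $\rho_\ast$ only implicitly, and the $\ell_1$ tail of the \emph{fitted} coefficients needs a decay rate that is uniform over $K_B$. The compactness and resolvent argument above is what I would try first. If continuity of the resolvent norm proves awkward, a fallback is to bound $\|\mathcal{C}_{\mTheta}^k\|$ via a uniform-in-$K_B$ Lyapunov (discrete Stein) equation solution, obtained from the same continuity and compactness.
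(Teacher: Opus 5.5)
Your proposal is correct and follows the paper's proof essentially step for step: it uses the same head/tail split at a cutoff $J$ with Cauchy--Schwarz on the head, and the same intersection of the theorem's event with $E_T$ (with $\{\eta/T\le t\}$ for consistency). Your Step~1 re-derives the uniform geometric tail bound through the companion resolvent, but that bound is exactly Lemma~\ref{lem:uniform_geom}(i) of the paper, proved there by a Cauchy estimate on $\mI-\mtx{A}(z;\vartheta)$, and your fixed-$J$ argument for part~(i) is if anything slightly cleaner than the paper's $T$-dependent choice $J_T=\lceil\beta_T^{-1/2}\rceil$.
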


The proof is given in Appendix~\ref{sec:proof_pi_rate}. The coupling between $m$ and $T$ is weak: the truncation may be any lag in $[m_1,T/2]$ (the only remaining requirement, $\Lambda_\delta\le cT$, is a sample-size condition that does not involve $m$), and the rate regime of part~(ii) already holds at $m\asymp\log T$, the same logarithmic growth of the fitted order as in the long-autoregression theory of \citet{hannan1984multivariate} and \citet{lewis1985prediction}. This reflects the geometric truncation bias of Proposition~\ref{prop:trunc_bias}: a bias decaying only polynomially in $m$ would force a polynomial balance between $m$ and $T$. In practice $m$ is set from the time-scale of the slowest moving-average mode of the fitted model, which is well inside the feasible set and decays at a fixed rate; the bound above is stated through the uniform rate $\rho_\ast$ over the whole set, and is correspondingly conservative. The $T^{-1/2}$ dependence of the rate cannot be improved: already in a one-parameter Gaussian VAR$(1)$ submodel, a two-point (Le~Cam) argument shows that no estimator does better than $c/\sqrt{T}$ for this loss (Remark~\ref{rem:pi_lower}); we do not pursue a matching $n_{\mathrm{par}}$-dependent lower bound for the summed operator-norm loss.

Both estimators are handled by one decomposition of the empirical excess loss, into a self-normalized martingale and a Gaussian quadratic form; the $\Sigma$-marginalized JMAP loss localizes to a $\mSigma^{-1}$-weighted version of the same quadratic form, so the two share the rate and differ only in constants, as the experiments of Section~\ref{sec:Experiments} bear out, RLS and JMAP separating only through the conditioning of $\mSigma$. Gaussianity is used only through the innovations' moment-generating function and concentration of Gaussian quadratic forms (we do not pursue the classical fourth-moment extensions). Finally, the theorem is a fixed-order, fixed-dimension result: its constants depend on $(p,q,d)$ and $B$, and regimes in which $d$, $p$ or $q$ grow with $T$ are not treated; the high-dimensional claims of this paper are computational (cf.\ \citealp{basu2015regularized} for sparse-VAR theory in the growing-dimension regime).

\begin{remark}[Optimization error and compactness]\label{rem:compactness}
Because Theorem~\ref{thm:pi_rate} is stated for approximate empirical minimizers, it already covers inexact optimization: the guarantee degrades only through the additive $\eta/T$, so an exact minimizer ($\eta=0$) attains the rate, and by Corollary~\ref{cor:pi_consistency} so does any $\hat\vartheta$ whose objective is within $O(n_{\mathrm{par}}\log T)$ of the minimum with high probability. Assumption~\ref{ass:consistency}(iii) also covers the cross-validated ridge variances of the experimental pipeline, since the proof touches the selected variances only through the penalty bound $P_B$ and the selection respects a fixed floor (the validation search box of Section~\ref{sec:exp_protocol} is bounded below). The implementation minimizes the same objectives by unconstrained L-BFGS, which returns a local stationary point of a nonconvex objective; we do not certify that its optimization error $\eta$ is small, so the theorem informs but does not by itself guarantee the computed estimate.
For the statistical guarantee we analyze the compactly constrained version of the estimator, minimizing over the fixed ball $K_B$ of Assumption~\ref{ass:consistency}(ii), which contains a representative $\vartheta^0$ of the true transfer function. Constraining to a compact set is standard for nonconvex, stability-constrained time-series parametrizations \citep{dunsmuir1976vector,hannan2012statistical}: on $K_B$ the spectral radii of all fitted companion matrices are bounded away from one, ensuring the uniform geometric-decay and Lipschitz constants of the implied infinite-VAR coefficients on which the proof rests (Lemma~\ref{lem:uniform_geom} in Appendix~\ref{sec:proof_pi_rate}), and, more generally, uniform mixing of the associated processes. The implementation minimizes the same objectives by L-BFGS that is unconstrained in the PAR coordinates, stability and invertibility being supplied by the map itself (Proposition~\ref{prop:par_bijection}), with the ridge penalties keeping the iterates in a bounded region in practice; the ball is a proof device only and is not enforced, and the radius $B$ may be taken as large as desired, at the price of larger constants in Theorem~\ref{thm:pi_rate}. These constants are not uniform in $B$, and they deteriorate as the fitted models approach the stability boundary: a larger $B$ admits companion matrices with spectral radius closer to one, so the uniform decay rate $\rho_\ast$ of Lemma~\ref{lem:uniform_geom} tends to one while the geometry constants $\bar G,\bar L$ grow, and the implicit constants degrade for processes whose roots sit near the unit circle.
\end{remark}

The guarantee extends to the multiplicative seasonal model of Section~\ref{sec:seasonal_varma} to the seasonal model with only notational changes, because the estimators enter the analysis solely through the expanded infinite-VAR coefficients and the smooth, per-factor PAR parametrization.

\begin{corollary}[Seasonal VARMA]\label{cor:sarma_rate}
Let the observations be a sample of a stationary Gaussian multiplicative seasonal VARMA~\eqref{eq:sarma} with period $s$ and factor orders $(\bar p,\bar q,\bar P,\bar Q)$, each of whose four factor polynomials is zero-free on the closed unit disk, with i.i.d.\ innovations $N(\vct{0},\mSigma)$, $\mSigma\succ\mz$. Fit the seasonal model with the same period, factor orders $p\ge\bar p$, $q\ge\bar q$, $P\ge\bar P$, $Q\ge\bar Q$, and truncation $m$, by minimizing the RLS or $\Sigma$-marginalized MAP loss (the latter with $\mPsi_0\succ\mz$ and $\nu_0>d-1$) over a ball $K_B$ in the four unconstrained PAR stacks $(\mM^{\mPhi},\mM^{\mPhi^{(s)}},\mM^{\mTheta},\mM^{\mTheta^{(s)}})\in\RR^{n_{\mathrm{par}}}$, where now $n_{\mathrm{par}}:=(p+q+P+Q)\,d^2$ and $K_B$ contains the inverse-PAR coordinates of the true four factor polynomials after zero-padding their coefficient arrays to the fitted orders. Then the conclusions of Theorem~\ref{thm:pi_rate} and Corollary~\ref{cor:pi_consistency} hold for the expanded infinite-VAR coefficients $\mPi_{1:\infty}$, with constants depending in addition on the period $s$. Moreover, $\rho_\ast$ may be taken of the form $\tilde\rho^{\,1/s}$ with $\tilde\rho<1$, so the rate regime of Corollary~\ref{cor:pi_consistency}(ii) requires $m\gtrsim s\log T$: the truncation must resolve the seasonal time-scale, consistent with the settings $m\gg s$ used in Sections~\ref{sec:exp_singapore} and~\ref{sec:exp_beijing}.
\end{corollary}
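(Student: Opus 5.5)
The plan is a reduction to Theorem~\ref{thm:pi_rate} and Corollary~\ref{cor:pi_consistency}. I will check that every ingredient of their proofs survives when the plain PAR map $\mathcal{T}$ is replaced by the seasonal composite map. That map sends $\vartheta=(\mM^{\mPhi},\mM^{\mPhi^{(s)}},\mM^{\mTheta},\mM^{\mTheta^{(s)}})$ through $\tau_p,\tau_P,\tau_q,\tau_Q$, multiplies the factor polynomials into the expanded $(\mPhi_{1:p+sP},\mTheta_{1:q+sQ})$, and takes the infinite-VAR coefficients $\mPi_j(\vartheta)$ of $\bigl[\mTheta^{(s)}(z^s)\mTheta(z)\bigr]^{-1}\mPhi^{(s)}(z^s)\mPhi(z)$.

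The proof of Theorem~\ref{thm:pi_rate} uses the parametrization in only three ways:
\begin{itemize}
\item[(a)] the truth $\vartheta^0$ lies in $K_B$ and reproduces $\mPi^0_{1:\infty}$;
\item[(b)] a uniform geometric-decay and Lipschitz lemma (the analogue of Lemma~\ref{lem:uniform_geom}): $\sup_{\vartheta\in K_B}\|\mPi_j(\vartheta)\|\le \bar G\rho_\ast^j$, and $\vartheta\mapsto\mPi_{1:\infty}(\vartheta)$ is Lipschitz into the weighted $\ell^2$ space, with covering number of order $n_{\mathrm{par}}\log T$;
\item[(c)] stationarity of the data process together with a positive lower bound on its spectral density, which gives the Gaussian quadratic-form and martingale concentration.
\end{itemize}
Everything else (the excess-loss decomposition, the handling of the penalty via $P_B$ and the variance floor, and the marginalized log-det localization) is indifferent to how $\mPi$ depends on $\vartheta$. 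So I will verify (a)--(c) and then quote the theorem's argument verbatim with the new $n_{\mathrm{par}}$.

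For (a), zero-pad each true factor to the fitted orders. Padding leaves each factor polynomial unchanged, so by Proposition~\ref{prop:par_bijection} and the padding remark each padded factor has a unique PAR preimage. Their product is the true expanded operator, so $\mPi_j(\vartheta^0)=\mPi^0_j$.

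For (c), the data process is an ordinary VARMA$(\bar p+s\bar P,\bar q+s\bar Q)$. Its AR and MA determinants factor as $\det\mPhi(z)\det\mPhi^{(s)}(z^s)$ and the MA analogue, and these are zero-free on the closed disk because $|z^s|\le1$. So Assumption~\ref{ass:roots} holds for the expanded model, and the data-side lemmas apply unchanged.

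For (b), compactness does the main work. Each $\tau$ is continuous, so $K_B$ maps into a compact subset of the product of the four open stable sets. On that compact set the spectral radii of the four companion matrices are at most some $\tilde\rho_0<1$.

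The key step, and the one I expect to be delicate, is the decay rate for the expanded MA inverse.
\begin{itemize}
\item The regular factor $\mTheta(z)^{-1}$ is analytic on $|z|<1/\tilde\rho_0$.
\item The seasonal factor $\mTheta^{(s)}(w)^{-1}$ is analytic on $|w|<1/\tilde\rho_0$, so $\mTheta^{(s)}(z^s)^{-1}$ is analytic only on $|z|<\tilde\rho_0^{-1/s}$.
\item The AR factors are polynomials, hence entire.
\end{itemize}
So the product $\mtx{A}(z)$ is analytic on $|z|<\tilde\rho_0^{-1/s}$, jointly continuous in $(\vartheta,z)$ there. A Cauchy estimate on the circle of radius $1/\rho_\ast$, with $\rho_\ast:=\tilde\rho^{1/s}$ for any $\tilde\rho\in(\tilde\rho_0,1)$, gives $\|\mPi_j(\vartheta)\|\le \bar G\rho_\ast^{\,j}$ uniformly on $K_B$. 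Here $\bar G$ is the maximum of $\|\mI-\mtx{A}(z)\|$ over that circle and over $K_B$, which is finite by compactness.

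The Lipschitz bound follows the same way. Each $\tau$ is smooth, hence Lipschitz on the compact ball. Polynomial multiplication is bilinear on bounded sets. The derivative of $\mtx{A}(z)$ in $\vartheta$ is bounded uniformly on the same circle, and a Cauchy estimate on that derivative gives a Lipschitz constant $\bar L$ for the weighted coefficient sequence. The constants depend on $s$ only through the period of the seasonal factor.

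With (a)--(c) in hand, Theorem~\ref{thm:pi_rate} holds with $n_{\mathrm{par}}=(p+q+P+Q)d^2$ and truncation term $\rho_\ast^{2m}=\tilde\rho^{2m/s}$. In Corollary~\ref{cor:pi_consistency}(ii) the condition $m\ge\log T/\log(1/\rho_\ast)$ becomes $m\ge s\log T/\log(1/\tilde\rho)$, i.e.\ $m\gtrsim s\log T$. Part (i) is unchanged.
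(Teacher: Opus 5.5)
Your proposal is correct and follows essentially the same route as the paper's proof. You verify three things: the zero-padded factor coordinates reproduce the true expanded operator; Assumption~\ref{ass:roots} holds for the expanded model, since determinants multiply and $|z^s|\le1$; and the uniform geometry lemma holds, by compactness of $K_B$ plus a Cauchy estimate on a circle of radius $\tilde\rho^{-1/s}$. After that the theorem's argument applies verbatim with $n_{\mathrm{par}}=(p+q+P+Q)d^2$.

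One small wording point concerns the Lipschitz property. What is actually used, in Lemma~\ref{lem:uniform_geom}(ii), is the summed bound $\sum_j\|\mPi_j(\vartheta)-\mPi_j(\vartheta')\|_F\le\bar L\|\vartheta-\vartheta'\|_2$, not a weighted-$\ell^2$ statement. Your per-coefficient Cauchy estimate delivers this summed bound directly once you sum the geometric series.
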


The short proof (stability and analyticity of the expanded operators on a disk of radius $(1/\bar\rho)^{1/s}$, after which the argument of Appendix~\ref{sec:proof_pi_rate} applies verbatim) is given in Appendix~\ref{sec:pf_sarma}.

\section{Efficient likelihood evaluation via Fourier methods}\label{sec:complexity_S2}

Both losses developed in the previous two sections, the regularised least-squares loss $\mathcal{L}_m^{\mathrm{RLS}}$ of Section~\ref{sec:RLS} and the $\Sigma$-marginalised MAP loss $\mathcal{L}_m^{\mathrm{marg}}$ of Section~\ref{sec:MAP}, are built around the truncated least-squares criterion $L_m$ of Section~\ref{sec:LSloss}. The RLS loss is $\tfrac12 L_m$ plus four data-independent ridge penalties; the MAP loss is $\tfrac{\nu_0+N}{2}\log\det(\mPsi_0+\mR)$ plus the same ridge penalties, where $\mR$ is the residual scatter matrix~\eqref{eq:R_residual_cov} that produces $L_m=\tr(\mR)$. In both cases the entire data-dependence of the loss reduces to the same block-quadratic form in the truncated VAR coefficients,
\begin{equation}\label{eq:Q_def}
\mathcal{Q}(\mPi_{1:m})\,:=\,\sum_{i=1}^{m}\sum_{j=1}^{m}\mPi_i\,\mS_2(i,j)\,\mPi_j' \;\in\;\R^{d\times d},
\end{equation}
where the $m\times m$ block $\mS_2$ has $(i,j)$ block
\(
\mS_2(i,j)=\sum_{t=m}^{T-1}\vz_{t+1-i}\vz_{t+1-j}'
\)
(cf.\ Eq.~\eqref{eq:LmPiTheta:def}). Materialised as a 4-tensor $\mS_2\in\R^{m\times m\times d\times d}$, this object would require $m^2 d^2$ floats of storage and $\Theta(m^2 d^3)$ flops per evaluation of $\mathcal{Q}$, both of which are prohibitive in the large-$m$ regime needed for the truncation~\eqref{eq:gamma_bias} to be accurate, since L-BFGS makes many evaluations per fit and we refit on a rolling basis.

We avoid materialising $\mS_2$ entirely by exploiting its block-Toeplitz-plus-correction structure. Each block depends almost exclusively on the lag $\tau=i-j$:
\begin{equation}\label{eq:S2_decomp}
\mS_2(i,j)=\widehat{\mC}_{i-j}\;-\;\Delta^{\mathrm{head}}_{i,j}\;-\;\Delta^{\mathrm{tail}}_{i,j},
\end{equation}
where
\(
\widehat{\mC}_{\tau}:=\sum_{s=1}^{T-|\tau|}\vz_{s+\max(-\tau,0)}\,\vz_{s+\max(\tau,0)}'
\)
is the (untruncated) empirical autocovariance of $\vZ$ at lag $\tau$; for $\tau\ge0$ this reads $\sum_{s=1}^{T-\tau}\vz_s\vz_{s+\tau}'$, every index lies in $[1,T]$, and $\widehat{\mC}_{-\tau}=\widehat{\mC}_{\tau}'$. The corrections $\Delta^{\mathrm{head}}_{i,j},\Delta^{\mathrm{tail}}_{i,j}$ remove the terms of $\widehat{\mC}_{i-j}$ whose summation index falls outside the (narrower) range of $\mS_2(i,j)$. By the block symmetry $\mS_2(i,j)=\mS_2(j,i)'$ it suffices to record them for $i\ge j$ (so $\tau=i-j\ge0$); blocks with $i<j$ then follow by transposition, which sidesteps any negative-lag indexing. For $i\ge j$, $\mS_2(i,j)=\sum_{s=m-i+1}^{T-i}\vz_s\vz_{s+\tau}'$ sums over $[m-i+1,\,T-i]$ rather than the full range $[1,\,T-\tau]$ of $\widehat{\mC}_\tau$, and the difference is exactly
\begin{equation}\label{eq:boundary_def}
\Delta^{\mathrm{head}}_{i,j}=\sum_{s=1}^{m-i}\vz_s\,\vz_{s+\tau}',\qquad
\Delta^{\mathrm{tail}}_{i,j}=\sum_{s=T-i+1}^{T-i+j}\vz_s\,\vz_{s+\tau}'\qquad(\tau=i-j\ge0),
\end{equation}
whose indices all lie in $[1,T]$ (the ranges $[1,m-i]$, $[m-i+1,T-i]$, $[T-i+1,T-\tau]$ partition $[1,T-\tau]$) and which involve only the boundary segments
\(
\mY^{\mathrm{head}}:=(\vz_1,\ldots,\vz_{m-1})'\in\R^{(m-1)\times d}
\)
and
\(
\mY^{\mathrm{tail}}:=(\vz_{T-m+1},\ldots,\vz_{T})'\in\R^{m\times d}.
\)
The decomposition~\eqref{eq:S2_decomp} replaces the $m^2d^2$ tensor $\mS_2$ by three lighter objects:
\begin{itemize}
\item the autocovariance sequence $\widehat{\mC}_{0:m-1}\in\R^{m\times d\times d}$ ($md^2$ floats);
\item the boundary blocks $\mY^{\mathrm{head}}\in\R^{(m-1)\times d}$ and $\mY^{\mathrm{tail}}\in\R^{m\times d}$ ($\sim 2md$ floats);
\end{itemize}
total $O(md^2)$ storage, a factor-$m$ reduction. These objects are computed once per training window in a single FFT-based pre-pass over the data, costing $O(T\log m\,d^2)$ (Appendix~\ref{sec:cost_appendix}), and are stored alongside $\mS_0$, $\mS_1$ as part of the sufficient statistics of Section~\ref{sec:LSloss}.

\paragraph{Toeplitz part via Parseval.}
Frequency-domain evaluation of Gaussian time-series criteria goes back to the Whittle likelihood \citep{whittle1953analysis}; the identity below applies the same Parseval mechanism not to the data likelihood, whose frequency sum grows with $T$, but to the quadratic form in the truncated coefficients, whose length is set by $m$.
Substituting~\eqref{eq:S2_decomp} into~\eqref{eq:Q_def} splits $\mathcal{Q}$ into a block-Toeplitz quadratic plus two boundary Grams:
\begin{equation}\label{eq:Q_split}
\mathcal{Q}(\mPi_{1:m})
\;=\;\underbrace{\sum_{i,j=1}^{m}\mPi_i\,\widehat{\mC}_{i-j}\,\mPi_j'}_{\text{Toeplitz part }\mathcal{Q}^{\mathrm{toep}}}
\;-\;\underbrace{(\mPi*\mY^{\mathrm{head}})^\top(\mPi*\mY^{\mathrm{head}})}_{\text{head Gram}}
\;-\;\underbrace{(\mPi\star\mY^{\mathrm{tail}})^\top(\mPi\star\mY^{\mathrm{tail}})}_{\text{tail Gram}},
\end{equation}
where $*$ denotes linear convolution and $\star$ a cross-correlation, both of length-$m$ filters $\mPi$ with the boundary segments. The Toeplitz part is the Gram of the truncated residuals: writing $\vct{r}_t:=\sum_{k=1}^{m}\mPi_k\,\vz_{t-k}\in\R^{d}$ for the length-$m$ filtering of $\vZ$ by $\mPi$, one has $\mathcal{Q}^{\mathrm{toep}}(\mPi)=\sum_t \vct{r}_t\vct{r}_t'$, which makes its $d\times d$ shape and positive-semidefiniteness manifest. By the convolution theorem $\widetilde{\vct r}_f=\widetilde{\mPi}_f\,\widetilde{\vz}_f$, so Parseval's theorem applied to this matrix Gram gives
\begin{equation}\label{eq:parseval}
\mathcal{Q}^{\mathrm{toep}}(\mPi)=\frac{1}{N_f}\sum_{f=0}^{N_f-1}\widetilde{\mPi}_f\,\widetilde{\mC}_f\,\widetilde{\mPi}_f^\ast,
\end{equation}
where $\widetilde{\mPi}_f=\sum_{k=1}^{m}\mPi_k\,e^{-2\pi i fk/N_f}$ is the DFT of the coefficient sequence, $\widetilde{\mC}_f$ is the DFT of the Hermitian-extended autocovariance $\widehat{\mC}$ at $N_f=2m$ frequencies (a real, positive-semidefinite spectral-density estimate), and $\ast$ denotes conjugate transpose. The orientation ($\widetilde{\mPi}_f$ on the left and $\widetilde{\mPi}_f^\ast$ on the right) is the one consistent with the block convention $\mPi_i\,\widehat{\mC}_{i-j}\,\mPi_j'$ of~\eqref{eq:Q_split}. Eq.~\eqref{eq:parseval} is computed in three steps:
\begin{enumerate}
\item $\widetilde{\mPi}\leftarrow \mathrm{FFT}_{N_f}(\mPi)$: $O(m\log m\cdot d^2)$;
\item $\widetilde{\mM}_f\leftarrow \widetilde{\mPi}_f\,\widetilde{\mC}_f$ for each $f$: $O(m\,d^3)$;
\item $\mathcal{Q}^{\mathrm{toep}}\leftarrow \frac{1}{N_f}\sum_{f=0}^{N_f-1}\Re\!\bigl(\widetilde{\mM}_f\,\widetilde{\mPi}_f^\ast\bigr)$: $O(m\,d^3)$, a direct frequency-domain contraction (the real part discards the imaginary terms, which cancel by the Hermitian symmetry of the summand).
\end{enumerate}
The spectral density $\widetilde{\mC}$ depends on the data only, not on $\mPi$, and is cached once per window. The zero-padding to $N_f=2m$ makes the length-$N_f$ circular convolution reproduce the linear filtering exactly, with no aliasing.

\paragraph{Boundary parts via short convolutions.}
The head and tail Grams in~\eqref{eq:Q_split} are quadratic forms in length-$m$ convolutions of $\mPi$ with the boundary blocks of size $O(m\times d)$. Each convolution is a single FFT-based filter at length $N_f=2m$, costing $O(m\log m\cdot d^2)$; the Gram matrix is then an $O(md^2)$ outer-product reduction. Both boundary corrections are therefore cheaper than the Toeplitz part.

\paragraph{Total cost.}
Summing the three contributions in~\eqref{eq:Q_split}, the per-evaluation cost of $\mathcal{Q}$ (and hence of $\mR$ in~\eqref{eq:R_residual_cov} and of $L_m$ in~\eqref{eq:LmPiTheta:def}) is
\begin{equation}\label{eq:flopcount}
\underbrace{O(m\log m\cdot d^2)}_{\text{FFTs}}+\underbrace{O(m\,d^3)}_{\text{per-frequency }d\times d\text{ matmuls and final }\mPi^\top\mH\text{ contraction}}.
\end{equation}
Both the storage and the compute scale only linearly in $m$ (up to logarithmic factors), so the Fourier construction makes long-memory regimes tractable. The savings persist down the autodiff graph: reverse-mode differentiation through Eq.~\eqref{eq:parseval} is again a length-$N_f$ FFT pair plus a per-frequency conjugate matmul, so the gradient costs the same order as the forward.

\paragraph{Remarks.}
\begin{itemize}
\item The same Parseval trick handles the residual scatter $\mR$ in Eq.~\eqref{eq:R_residual_cov} verbatim: the linear term $\sum_j\mPi_j\,\mS_1(j)$ is already $O(md^3)$ without help, but the quadratic term $\sum_{i,j}\mPi_i\mS_2(i,j)\mPi_j'$ is exactly $\mathcal{Q}(\mPi_{1:m})$.
\item The single $O(T\log m\,d^2)$ pre-pass is amortised across all L-BFGS iterations of a single fit, and across all daily refits in a rolling-window scheme via incremental updates of the cumulative sums underlying $\widehat{\mC}_\tau$, $\mY^{\mathrm{head}}$ and $\mY^{\mathrm{tail}}$.
\end{itemize}

The multiplicative-seasonal factors of Section~\ref{sec:seasonal_varma} and the exogenous regressors of the VARMAX extension leave this evaluation unchanged at leading order: a seasonal VARMA evaluates at the same $O(m\log m\,d^2+m\,d^3)$ per-gradient cost as a plain VARMA of equal dimension, and the VARMAX at the same cost with $d$ replaced by $d+d_x$, independent of $T$ in all cases. The full per-term accounting~\eqref{eq:sarma_cost}, including the dependence on the orders $(p,q,P,Q)$ and the period $s$, is given in Appendix~\ref{sec:cost_appendix}.

\section{Numerical results}\label{sec:Experiments}
We evaluate the two pointwise estimators of Sections~\ref{sec:RLS}--\ref{sec:MAP}: the regularized least-squares estimator (RLS) and the $\Sigma$-marginalized MAP estimator (JMAP). Both use the PAR-shrinkage prior of Sections~\ref{sec:RLS}--\ref{sec:JMAP_reg} with per-group ridge variances chosen by cross-validation (Section~\ref{sec:exp_protocol}) and the fixed lag-one diagonal ridge $\sigma^2_{\mathrm{diag},1}=4$ of Eq.~\eqref{eq:lag1_diag_prior}. We report forecast mean-squared error (MSE) by forecast horizon. Measured timings for both are reported in Appendix~\ref{sec:runtimes}.

\paragraph{Common cross-validation protocol.}\label{sec:exp_protocol}
On all four benchmarks we tune every method's shrinkage and order hyperparameters by a single, common protocol, so the model classes compete on an equal footing. We hold out a validation tail of the first training window, one fifth of its length in every experiment, fit each candidate on the observations preceding it, and score its out-of-sample predictive log-likelihood over the validation origins; the minimizing hyperparameters are then frozen for the test.

For the PAR-shrinkage VARMA(X) the prior variances are selected on a dyadic grid. The own-channel diagonal groups are held at $\sigma_{\mathrm{diag}}=2$, with the lag-one own-channel diagonals at the fixed ridge $\sigma^2_{\mathrm{diag},1}=4$ of Eq.~\eqref{eq:lag1_diag_prior}, and one shared value is swept over the eight points $\sigma_{\mathrm{off}}\in\{2^{1},2^{0},2^{-1},\dots,2^{-6}\}$; this value governs the off-diagonal $\mPhi$ and $\mTheta$ groups together with the exogenous groups where a VARMAX is fit. The eight fits share no state, so the selection runs one process per grid point; an earlier version of these experiments used a derivative-free Nelder--Mead search over the four group variances, whose simplex is sequential and needed about sixty objective evaluations per candidate order. Each fit, in the variance search and the final refit alike, runs L-BFGS for $2000$ iterations at limited-memory depth $100$ on the sufficient-statistics loss of Sections~\ref{sec:RLS}--\ref{sec:MAP}, initialized from a Hannan--Rissanen pilot \citep{hannan1982recursive} fit on the training observations and mapped to the PAR coordinates; the selected configuration is refit on the full training window under the same budget. The discrete order $(p,q[,s])$ is swept by an outer grid, the variance search run within each candidate order. The ridge and Bayesian VAR baselines jointly select the lag order $p$ and their prior variances, as detailed below; the componentwise baselines select their orders by the same criterion, per product on the retail panel and shared across the series on the hourly panels.

\paragraph{VAR baselines.} The ridge VAR(X) fits each equation by generalized ridge least squares on the lag design, penalizing own-lag coefficients by $1/\sigma^2_{\mathrm{diag}}$, cross-series coefficients by $1/\sigma^2_{\mathrm{off}}$, and, where price enters, the exogenous block by its own variance; multi-step forecasts are iterated from the fitted coefficients, conditioning on the known future price where applicable. The Bayesian VAR(X) places the corresponding diagonal/off-diagonal Gaussian prior on the coefficients and treats $\mSigma$ as a plug-in, the residual covariance refined by a few posterior-mean/residual-covariance fixed-point iterations; conditional on that $\mSigma$ the coefficient posterior is Gaussian in closed form. Forecasts are posterior-predictive means, averaged over $120$ posterior coefficient draws at every horizon: on the retail panel the $h$-step forecast is iterated per draw and the forecasts averaged, and for the pure VAR the same average is computed by averaging the per-draw iterated coefficient maps, which is equivalent because the forecast is linear in those maps. At $h=1$ the forecast is also linear in the coefficients, so the average reduces to a posterior-mean plug-in; at longer horizons it is the genuine model average. On the retail panel the diagonal variance is pinned and the cross-series and exogenous variances are tied to the swept dyadic value; on the hourly panels the diagonal and off-diagonal variances are swept on their own grids. Orders and prior variances are selected by the same predictive-log-likelihood criterion as every other model.

\paragraph{Feasible set.} Stability and invertibility hold at every iterate by construction, through the PAR map of Section~\ref{sec:VARMAintro}, so the optimization is unconstrained and needs no projection step. Two devices from the theory are implemented on top of the map: the coefficients are rescaled as $\mPhi_j\mapsto c^{\,j}\mPhi_j$ (and likewise for $\mTheta$) with the fixed cap $c=0.995$ (substituting $\lambda=c\mu$ in the characteristic polynomial shows this scales the companion spectrum by exactly $c$), and the ball $K_B$ of Assumption~\ref{ass:consistency}(ii) is enforced through the squashing $\sigma_B$ of that assumption with radius $B=100$. Both enter through the same map for the two MAP fits, so the estimators explore one parameter set. On the synthetic and retail benchmarks the fits are interior and the devices are inert: the fitted PAR norms stay below $23$, where $\sigma_B$ is a rescaling by $0.97$ or more, and the cap is slack. On the hourly panels they are not inert, and this is their designed role: the diurnal cycle is close to deterministic, the fitted seasonal factors push toward the unit root, and the PAR coordinates grow without bound as the companion radius approaches one, so several JMAP fits reach norms near $B$ with the capped radius attaining $c$. There the ball and the cap supply the persistence bound that the compactness assumption calls for, which is exactly the job the theory gives them; the forecasts are insensitive to the residual drift in these coordinates, which runs along weakly identified AR--MA cancellations. The recovery guarantee is unaffected, being stated for any $K_B$-valued near-minimizer, which the squashed optimizer produces by construction. The optimization error remains uncertified (Remark~\ref{rem:compactness}), and it, rather than the feasible-set devices, is what separates the computed estimate from the analysed one.

\paragraph{Significance tests.} Forecast comparisons are Diebold--Mariano tests on the per-origin squared-error loss, with a Newey--West long-run variance and the \citet{harvey1997testing} small-sample correction against a $t_{n-1}$ reference. Every comparison uses the JMAP fit of the panel's leading model class as a fixed reference rather than the per-horizon winner, so a column reports one estimator against all others; the reference row itself carries no stars. Stars give the two-sided level of that comparison, $^{*}p<0.1$, $^{**}p<0.05$, $^{***}p<0.01$. On the hourly panels $n$ runs to $3.3\times10^{4}$ (Beijing) and $5.3\times10^{4}$ (Singapore) origins, so the test resolves differences of a tenth of a percent and a star there records detectability rather than practical size, for which the MSE values themselves should be read.

\subsection{Synthetic data}\label{sec:exp_synth}
We simulate stationary and invertible VARMA$(2,2)$ processes directly in the unconstrained PAR coordinates of Section~\ref{sec:VARMAintro}: each unconstrained matrix $\mM^{\mPhi}_j,\mM^{\mTheta}_k$ ($j,k=1,2$) has independent Gaussian entries with standard deviation $s_{\mathrm{diag}}$ on the diagonal and $s_{\mathrm{off}}$ off the diagonal, and is mapped through $\mathcal{T}$ to a strictly stable $(\mPhi_{1:2},\mTheta_{1:2})$. The innovations are Gaussian, $\va_t\sim N(\vct{0},\mSigma)$, with a \emph{random, strongly ill-conditioned} covariance $\mSigma=\mtx{Q}\,\mathrm{diag}(\lambda_1,\dots,\lambda_d)\,\mtx{Q}^{\top}$, where $\mtx{Q}$ is a Haar-random orthogonal matrix and the eigenvalues are log-uniform, $\log\lambda_i\overset{\mathrm{iid}}{\sim}\mathrm{Unif}(-10,0)$, giving a condition number up to $e^{10}\approx 2\times10^{4}$ (median $\approx 5\times10^{3}$ over replications); each series is afterwards standardized to unit training variance. This anisotropic, cross-correlated noise is deliberate: it is what makes the covariance model matter, and hence what separates the $\mSigma$-aware JMAP from the isotropic-noise RLS. We consider two regimes: a \emph{dense} regime $(s_{\mathrm{diag}},s_{\mathrm{off}})=(0.5,0.5)$ and a \emph{sparse} regime $(0.5,0.2)$ with weaker cross-channel coupling. For each regime we sweep the dimension $d\in\{10,20,40\}$, and let the training length grow with it: $T\in\{200,400,800\}$ at $d=10$, $\{400,800,1600\}$ at $d=20$, and $\{800,1600,3200\}$ at $d=40$. A VARMA$(2,2)$ carries $4d^2$ coefficients against a scalar sample of size $Nd$, so the information ratio $N/(4d)$ is held near $4.5$ on the shortest rung of every dimension and doubles along each ladder; a grid fixed in $T$ instead confounds dimension with sample size, and its $d=40,T=200$ corner supplies fewer observations than parameters. Each cell is averaged over $8$ independent replications. All estimators are fit with truncation $m=20$; the per-group ridge variances are selected by the cross-validation of Section~\ref{sec:exp_protocol}, and JMAP places the inverse-Wishart prior $\mathrm{IW}(\mtx{I}_d,\,d+4)$ on the innovation covariance. Both fits run L-BFGS for $2000$ iterations. Because the data-generating process is known, we compute the \emph{exact} minimum-MSE VARMA forecast and report each estimator's error relative to this oracle, $\mathrm{MSE}/\mathrm{MSE}_{\mathrm{oracle}}\ge 1$, so that $1$ is optimal.
\begin{table}[t]\centering\small
\caption{Synthetic data: oracle-relative forecast MSE at $h=1$ (lower is better; $1$ is the oracle). The training length grows with the dimension, holding the information ratio $N/(4d)$ near $4.5$ on the shortest rung of each $d$. Best in each row in bold. \texttt{bigtime} \citep{wilms2023sparse} is the sparse-VARMA package baseline (cross-validated, same $8$ seeds).}\label{tab:synth_h1}
\begin{tabular}{cc ccc c ccc}
\toprule
&& \multicolumn{3}{c}{dense $(0.5,0.5)$} && \multicolumn{3}{c}{sparse $(0.5,0.2)$}\\
\cmidrule{3-5}\cmidrule{7-9}
$d$ & $T$ & RLS & JMAP & \texttt{bigtime} && RLS & JMAP & \texttt{bigtime}\\
\midrule
10 & 200 & 1.45 & \textbf{1.11} & 1.33 && 1.41 & \textbf{1.10} & 1.26\\
   & 400 & 1.32 & \textbf{1.05} & 1.23 && 1.28 & \textbf{1.04} & 1.17\\
   & 800 & 1.15 & \textbf{1.02} & 1.19 && 1.16 & \textbf{1.02} & 1.13\\
\cmidrule(lr){1-9}
20 & 400 & 1.48 & \textbf{1.09} & 1.42 && 1.37 & \textbf{1.07} & 1.29\\
   & 800 & 1.32 & \textbf{1.04} & 1.33 && 1.31 & \textbf{1.03} & 1.22\\
   & 1600 & 1.15 & \textbf{1.03} & 1.30 && 1.16 & \textbf{1.02} & 1.19\\
\cmidrule(lr){1-9}
40 & 800 & 1.41 & \textbf{1.11} & 1.55 && 1.34 & \textbf{1.07} & 1.33\\
   & 1600 & 1.22 & \textbf{1.06} & 1.47 && 1.16 & \textbf{1.05} & 1.26\\
   & 3200 & 1.14 & \textbf{1.04} & 1.43 && 1.12 & \textbf{1.03} & 1.24\\
\bottomrule
\end{tabular}
\end{table}
\begin{table}[t]\centering\small
\caption{Synthetic data: oracle-relative forecast MSE at $h=2$ (columns as in Table~\ref{tab:synth_h1}).}\label{tab:synth_h2}
\begin{tabular}{cc ccc c ccc}
\toprule
&& \multicolumn{3}{c}{dense $(0.5,0.5)$} && \multicolumn{3}{c}{sparse $(0.5,0.2)$}\\
\cmidrule{3-5}\cmidrule{7-9}
$d$ & $T$ & RLS & JMAP & \texttt{bigtime} && RLS & JMAP & \texttt{bigtime}\\
\midrule
10 & 200 & 1.32 & \textbf{1.08} & 1.19 && 1.30 & \textbf{1.07} & 1.16\\
   & 400 & 1.21 & \textbf{1.04} & 1.12 && 1.20 & \textbf{1.03} & 1.10\\
   & 800 & 1.11 & \textbf{1.02} & 1.10 && 1.11 & \textbf{1.02} & 1.07\\
\cmidrule(lr){1-9}
20 & 400 & 1.36 & \textbf{1.07} & 1.23 && 1.29 & \textbf{1.06} & 1.18\\
   & 800 & 1.25 & \textbf{1.04} & 1.17 && 1.22 & \textbf{1.02} & 1.13\\
   & 1600 & 1.11 & \textbf{1.03} & 1.15 && 1.12 & \textbf{1.02} & 1.10\\
\cmidrule(lr){1-9}
40 & 800 & 1.32 & \textbf{1.09} & 1.26 && 1.25 & \textbf{1.05} & 1.17\\
   & 1600 & 1.17 & \textbf{1.06} & 1.22 && 1.13 & \textbf{1.04} & 1.13\\
   & 3200 & 1.11 & \textbf{1.03} & 1.19 && 1.09 & \textbf{1.03} & 1.12\\
\bottomrule
\end{tabular}
\end{table}
\begin{figure}[t]\centering
\begin{subfigure}{0.32\textwidth}\includegraphics[width=\linewidth]{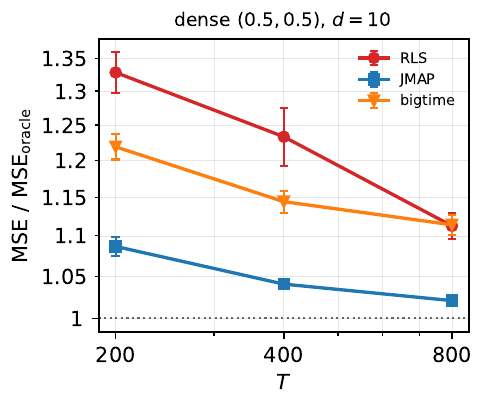}\end{subfigure}\hfill
\begin{subfigure}{0.32\textwidth}\includegraphics[width=\linewidth]{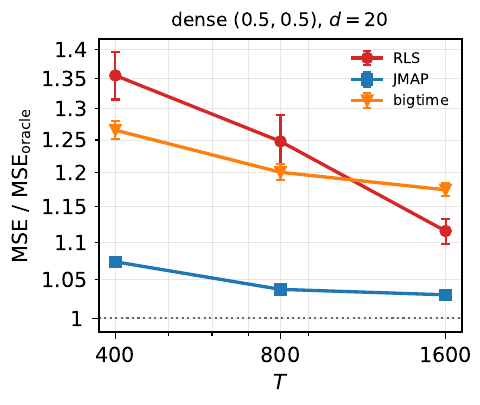}\end{subfigure}\hfill
\begin{subfigure}{0.32\textwidth}\includegraphics[width=\linewidth]{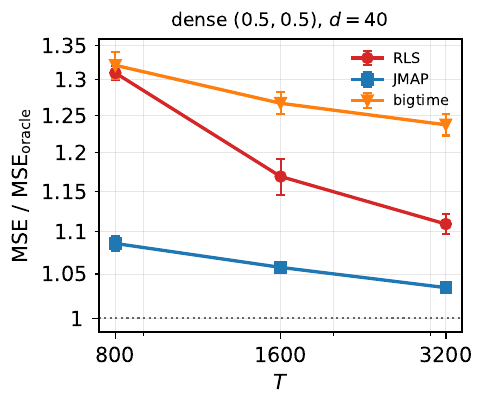}\end{subfigure}

\medskip
\begin{subfigure}{0.32\textwidth}\includegraphics[width=\linewidth]{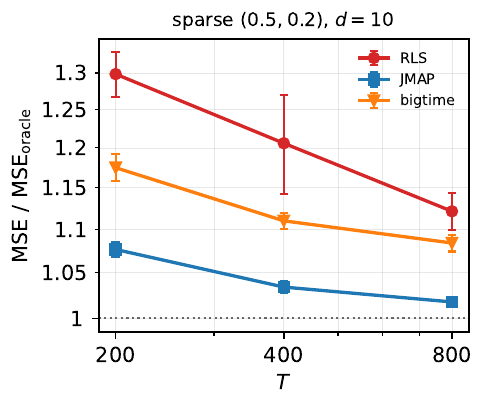}\end{subfigure}\hfill
\begin{subfigure}{0.32\textwidth}\includegraphics[width=\linewidth]{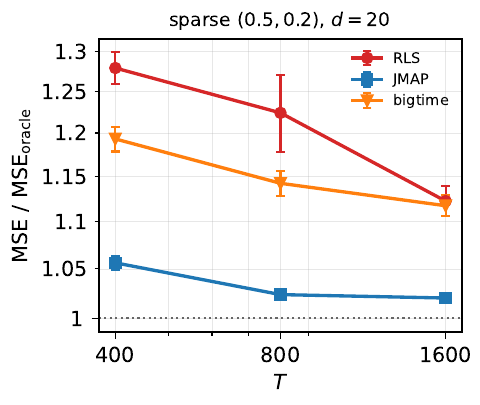}\end{subfigure}\hfill
\begin{subfigure}{0.32\textwidth}\includegraphics[width=\linewidth]{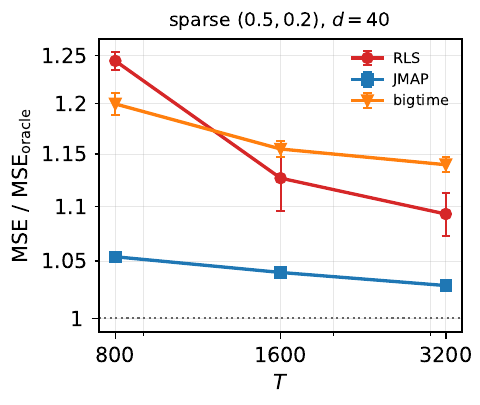}\end{subfigure}
\caption{Oracle-relative forecast MSE, horizon-averaged over $h=1,2,3$, versus training length $T$ (both axes log), for the dense $(0.5,0.5)$ regime (\emph{top}) and the sparse $(0.5,0.2)$ regime (\emph{bottom}) at $d=10,20,40$. The training length grows with the dimension, so each panel has its own ladder. The $\mSigma$-aware JMAP approaches the oracle (dotted line at $1$) from above; RLS and \texttt{bigtime} trail it at every $(d,T)$, and \texttt{bigtime}'s gap widens with $d$. Error bars are $\pm1$ standard error over the eight seeds.}\label{fig:synth_oracle}
\end{figure}
Tables~\ref{tab:synth_h1} and~\ref{tab:synth_h2} report the oracle-relative error at $h=1,2$, and Figure~\ref{fig:synth_oracle} plots the $h=1$ value against $T$. The $\mSigma$-aware JMAP leads everywhere. It improves on the isotropic-noise RLS in all eighteen cells and in all $144$ cell-seed pairs, by $20.7\%$ on average at $h=1$ (paired $t=-17.8$ across the pooled seeds), the gain coming from modelling the cross-channel covariance RLS ignores. The advantage is a finite-sample one: it shrinks monotonically along every ladder, from $30.9\%$ to $12.6\%$ at $d=10$ and from $27.5\%$ to $9.9\%$ at $d=40$, as both estimators converge on the oracle. The two estimators also select different shrinkage: along the dense $d=10$ ladder RLS relaxes from $\sigma_{\mathrm{off}}=0.25$ to $1.5$ while JMAP stays at $0.25$; at the larger dimensions both relax with $T$. The oracle-relative error still grows with $d$ and decays toward $1$ with $T$, and the sparse regime sits closer to its oracle than the dense one, its weak cross-channel block carrying little recoverable signal. The sparse-VARMA package \texttt{bigtime} trails throughout, by $16$ to $41\%$ in the dense regime and $11$ to $24\%$ in the sparse one, the gap widening with $d$ and barely closing as $T$ grows.

Two standard \textsf{R} VARMA packages serve as external reference points on the same data. \texttt{bigtime} \citep{wilms2023sparse}, a \emph{sparse} VARMA fit by penalised least squares, is reliable and appears as a column of Tables~\ref{tab:synth_h1}--\ref{tab:synth_h2}: it trails the $\mSigma$-aware JMAP in all eighteen cells of the design, and beats the isotropic RLS in nine of them, concentrated at the shortest training length of each dimension and in the sparse regime, where its sparsity assumption is closer to the truth. Its gap to JMAP widens with the dimension, from $0.17$ at $d=10,T=800$ to $0.39$ at $d=40,T=3200$ in the dense regime.

The other, \texttt{MTS} \citep{tsay2013multivariate}, implements the classical Gaussian conditional MLE with no constraint on the fitted operators. We fit VARMA$(2,2)$ with it on the $d=10$, $T=800$ series of both regimes, the same sixteen series the estimators above are fit on. Two of the sixteen stop with a singular Hessian. Of the fourteen that return a fit, twelve have a \emph{non-invertible} moving-average operator, with companion spectral radius between $1.016$ and $1.444$; every fitted autoregressive operator is stable, with radius at most $0.907$, so it is invertibility alone that fails. A non-invertible $\mTheta$ makes the innovation recursion that generates the forecasts grow geometrically in its own past output, and the forecasts follow: across those twelve runs the oracle-relative error at $h=1$ ranges from $9.8\times10^{23}$ to numerical overflow, against $1.02$ for JMAP on the same series. The two invertible fits forecast at $1.10$ and $1.15$, still behind JMAP. We therefore exclude \texttt{MTS} from the tables. This is exactly the failure the PAR reparametrisation removes, the map $\mathcal{T}$ rendering every iterate strictly stable \emph{and} invertible by construction.

\subsection{Dominick's retail scanner data: demand with price as an exogenous driver}\label{sec:exp_dominicks}
Our second real-data benchmark turns the retail panel into a VARMA\emph{X} problem: forecasting category-level \emph{demand} (units) while conditioning on the contemporaneous per-unit \emph{price}, which embeds the store's promotional activity. From the Dominick's Finer Foods scanner database (Kilts Center, University of Chicago Booth) we forecast a $d=12$ basket comprising every food and drink category of the archive with under $10\%$ of weeks missing (cereal, cookies, bottled juice, cheese, canned soup, snack crackers, crackers, front-end candies, frozen juice, canned tuna, soft drinks, and frozen entrees) over the $399$-week record; every excluded food or drink category is missing at least $23.6\%$, and the members share the shopping-basket and promotional co-movement a multivariate model can exploit. The endogenous series are log units, the exogenous series the log volume-weighted price per unit (total dollars over total units), so a promotion enters the model as a transient price cut. We model the first difference of the logs, which renders the unit-root-like levels stationary, and truncate the implied autoregressive representation at $m=20$ lags. Forecasts use a rolling test of $96$ weekly origins at $h=1,2,3$: every model is \emph{fit once} on the first $299$-week window and frozen, with the price for each forecast week supplied. This is therefore \emph{conditional} forecasting given the observed (realized) price path, not an unconditional demand forecast: conditioning isolates the demand--price dynamics, and a planning application would substitute the retailer's posted price for the realized one.
\begin{table}[t]\centering\small
\caption{Dominick's ($d=12$ grocery basket, demand with price exogenous, first-differenced growth, truncation $m=20$): standardized-scale forecast MSE on the $96$-origin rolling test, by horizon. Every model is fit \emph{once} on the first $299$-week window and frozen for all origins; order and shrinkage are chosen by the cross-validation of Section~\ref{sec:exp_protocol} (selected order in parentheses). Because a small fraction of category-weeks are missing, each entry is the per-origin mean squared (standardized-growth) error over the categories \emph{observed} at that origin, averaged equally over the $96$ origins. Best per column in \textbf{bold}; stars give the Diebold--Mariano significance of the difference from the JMAP fit (\citet{harvey1997testing}-corrected, Newey--West HAC long-run variance, $t_{95}$ reference): $^{*}p<0.1$, $^{**}p<0.05$, $^{***}p<0.01$.}\label{tab:dominicks}
\begin{tabular*}{\linewidth}{@{\extracolsep{\fill}}l*{3}{>{\centering\arraybackslash}p{0.15\linewidth}}@{}}
\toprule
model & $h=1$ & $h=2$ & $h=3$\\
\midrule
\multicolumn{4}{@{}l@{}}{\rlap{\emph{PAR-shrinkage VARMAX$(p,q,s)$ with price exogenous, order and ridge variances by CV:}}}\\
\quad RLS\ $(1,1,1)$ & \textbf{0.4442}$^{***}$ & 0.6938 & 0.7213\\
\quad JMAP\ $(1,1,1)$ & 0.5051 & \textbf{0.6583} & \textbf{0.6789}\\
\midrule
\multicolumn{4}{@{}l@{}}{\rlap{\emph{PAR-shrinkage VARMA$(p,q)$ on units only (price-free ablation), order and ridge by CV:}}}\\
\quad RLS\ $(1,1)$ & 0.7516$^{***}$ & 1.0678$^{***}$ & 1.0857$^{***}$\\
\quad JMAP\ $(1,1)$ & 0.7619$^{***}$ & 1.0719$^{***}$ & 1.0912$^{***}$\\
\midrule
\multicolumn{4}{@{}l@{}}{\rlap{\emph{Ridge VARX$(p,p)$, order and shrinkage by CV:}}}\\
\quad VARX\ $(p{=}1)$ & 0.6541$^{*}$ & 0.8757$^{*}$ & 0.9268$^{*}$\\
\midrule
\multicolumn{4}{@{}l@{}}{\rlap{\emph{Bayesian BVARX$(p,p)$, order and prior by CV, model-averaged:}}}\\
\quad BVARX\ $(p{=}1)$ & 0.6343$^{*}$ & 0.8257$^{*}$ & 0.8650$^{*}$\\
\midrule
\multicolumn{4}{@{}l@{}}{\rlap{\emph{Componentwise ARMAX (per-product \texttt{SARIMAX}), orders $(p,q)$ and own-price lag $r$ by CV:}}}\\
\quad comp.\ ARMAX & 0.5999$^{*}$ & 0.7919$^{*}$ & 0.7938\\
\midrule
\multicolumn{4}{@{}l@{}}{\rlap{\emph{Hierarchical-lasso sparse VARX (\texttt{bigtime}), lag and penalty by CV:}}}\\
\quad sparse VARX\ $(p{=}4)$ & 0.5361 & 0.7511 & 0.7855\\
\midrule
\multicolumn{4}{@{}l@{}}{\rlap{\emph{Naive baselines:}}}\\
\quad training-set mean & 1.0735$^{***}$ & 1.0743$^{***}$ & 1.0848$^{***}$\\
\quad stay-in-place (RW) & 1.0735$^{***}$ & 1.0742$^{***}$ & 1.0848$^{***}$\\
\bottomrule
\end{tabular*}
\end{table}

Every method is tuned by the common protocol of Section~\ref{sec:exp_protocol}, jointly over structure and shrinkage (selected orders in Table~\ref{tab:dominicks}); the JMAP variant marginalizes $\mSigma$ under the default $\mathrm{IW}(\mI_d,\,d+4)$ prior. The truncation is $m=20$ and the validation tail is one fifth of the $299$-week training window. The predictive log-likelihood used for selection here conditions on the known future price and is evaluated only on the channels observed at each origin, so the exogenous VARMAX models and the price-free ablations are scored on one criterion. Two contrasts are built in. To quantify the exogenous price we add the price-free \emph{VARMA} counterparts (units alone); to separate the moving-average term from the cross-series coupling we add a ridge VARX and Bayesian BVARX (no MA), a per-product componentwise ARMAX (\texttt{SARIMAX}, no cross-series coupling), and the hierarchical-lasso sparse VARX of \texttt{bigtime}, the closest match the package offers, as it implements VAR\emph{X} but not VARMA\emph{X}. Naive random-walk and training-set-mean baselines anchor the scale.

Price dominates: every price-conditioned model roughly halves the naive $h{=}1$ error, and dropping price (the units-only VARMA) inflates the best $h{=}1$ MSE by two thirds. The moving-average term is also decisive, the full VARMAX beating the moving-average-free ridge and Bayesian VARX by $25$--$37\%$ across horizons. This is over-differencing (first-differencing a near-integrated series induces a near-unit moving-average root), so the baselines that recover it otherwise, the longer sparse autoregression of \texttt{bigtime} ($0.536$ at $h=1$, lag $4$) and the componentwise ARMAX ($0.600$), fall between the VARMAX and the short VARX; as on the seasonal panels, the gain comes from the moving-average term, not the cross-series coupling. Among the VARMAX estimators, both cross-validated to order $(1,1,1)$, the isotropic-noise RLS attains the lowest $h{=}1$ error, $12\%$ below JMAP (the only significant difference between the two, $p<0.001$), while JMAP is ahead by $5$--$6\%$ at $h=2,3$; over horizons the two average within $1\%$. The remaining models trail with the significance shown in Table~\ref{tab:dominicks} (DM on the per-origin loss against the JMAP fit as the fixed reference, Newey--West HAC, \citet{harvey1997testing} correction, $n=96$); the sparse VARX is the closest and is not separated from JMAP.

\subsection{Singapore meteorological data: a seasonal VARMA}\label{sec:exp_singapore}
Our third real-data benchmark isolates a structure absent from the previous two: strong \emph{periodicity}. Hourly weather is dominated by a diurnal cycle, and even after its deterministic part is removed the anomalies retain a daily-recurring autocorrelation. This is the regime for which the multiplicative \emph{seasonal} VARMA is designed, and it lets us add a fourth model to the comparison, a seasonal VARMA whose seasonal factor an ordinary VAR or dense VARMA can only mimic with many extra lags.

\paragraph{Model and prior.}
We fit the multiplicative seasonal VARMA of Section~\ref{sec:seasonal_varma}, Eq.~\eqref{eq:sarma}, with diurnal period $s=24$ and the four factor orders tied to a common $p=P=q=Q$ chosen by cross-validation; stationarity and invertibility hold by construction through the per-factor PAR maps. All four factor-stacks share the same PAR-shrinkage prior, fixed lag-one diagonal ridge, and per-group ridge variances as the regular VARMA, and (for JMAP) the same $\mathrm{IW}(\mI_d,\,d+4)$ prior on $\mSigma$ of Sections~\ref{sec:RLS}--\ref{sec:MAP}. JMAP minimizes the $\Sigma$-marginalized MAP loss, RLS the isotropic-noise criterion.

\paragraph{Data and protocol.}
We forecast $d=11$ hourly meteorological series for Singapore (2\,m temperature, relative humidity, dew point, precipitation, mean-sea-level pressure, total cloud cover, the zonal and meridional 10\,m wind components, wind gusts, shortwave radiation, and wind speed) from the Open-Meteo reanalysis archive (ECMWF-IFS) over 2017--2025. Each series is deseasonalized by regressing out, per channel by ordinary least squares on the training window only, an additive calendar design---an intercept, $24$ hour-of-day indicators, a piecewise-linear annual cycle (a tent basis interpolating between monthly knots), and a piecewise-linear weekly cycle---and standardized by its training-window standard deviation, so the targets are standardized hourly anomalies on which a zero (climatological-anomaly) forecast scores MSE $\approx 1$. We use a rolling scheme of six annual test windows (2020--2025); for each, every model is refit on the preceding three years ($\approx\!26{,}000$ hourly observations) and scored at horizons $h=1,\dots,6$ hours. The seasonal period is $s=24$ (diurnal) and the truncation $m=1024$.

Every hyperparameter is chosen once by the common protocol of Section~\ref{sec:exp_protocol}, on a validation tail (the last $219$ days, one fifth) of the first, 2017--2019 training window, then frozen and refit on each test year. The orders swept are $p=P=q=Q\in\{1,\dots,4\}$ for the seasonal VARMA and the seasonal-VAR ablation, $p\in\{12,24,36\}$ for the non-seasonal dense-VARMA ablation, and $p\in\{24,48,96,168\}$ (AR) and $p=q\in\{24,48\}$ (ARMA), shared across the series, for the componentwise baselines; the ridge/Bayesian VAR ($p\in\{24,48,96\}$) and the sparse \texttt{bigtime} VAR/VARMA are tuned as on the other panels. The truncation is $m=1024$ and the seasonal period $s=24$. Table~\ref{tab:singapore} reports the per-origin forecast MSE, averaged over the six test years.

\begin{table}[t]\centering\small
\caption{Singapore weather ($d=11$ hourly anomalies): forecast MSE per origin (the mean over the $11$ series), averaged equally over the test origins of the six annual windows (2020--2025), at horizons $h=1,\dots,6$ hours. Order and ridge/prior hyperparameters are chosen once by the common cross-validation protocol on the first training window and held fixed across windows (selected order in parentheses). Best in each column in bold.}\label{tab:singapore}
\begin{tabular*}{\linewidth}{@{\extracolsep{\fill}}l cccccc@{}}
\toprule
model & $h=1$ & $h=2$ & $h=3$ & $h=4$ & $h=5$ & $h=6$\\
\midrule
\multicolumn{7}{l}{\emph{Seasonal PAR-shrinkage VARMA$(p,p,p,p)_{24}$, $p$ and ridge variances by CV:}}\\
\quad RLS \ ($p{=}1$) & 0.2768$^{***}$ & 0.4621$^{***}$ & 0.5762$^{***}$ & 0.6535$^{***}$ & 0.7075$^{***}$ & 0.7473$^{***}$\\
\quad JMAP \ ($p{=}4$) & \textbf{0.2747} & \textbf{0.4535} & \textbf{0.5605} & \textbf{0.6306} & \textbf{0.6777} & 0.7111\\
\midrule
\multicolumn{7}{l}{\emph{Dense VARMA$(p,p)$ (no seasonal factor), order and ridge variances by CV:}}\\
\quad JMAP & 0.2797$^{***}$ & 0.4614$^{***}$ & 0.5690$^{***}$ & 0.6383$^{***}$ & 0.6842$^{***}$ & 0.7161$^{***}$\\
\quad RLS & 0.2800$^{***}$ & 0.4609$^{***}$ & 0.5683$^{***}$ & 0.6376$^{***}$ & 0.6837$^{***}$ & 0.7160$^{***}$\\
\midrule
\multicolumn{7}{l}{\emph{Seasonal VAR$(p,0,P,0)_{24}$ (no MA), order by CV:}}\\
\quad RLS & 0.2783$^{***}$ & 0.4640$^{***}$ & 0.5791$^{***}$ & 0.6571$^{***}$ & 0.7114$^{***}$ & 0.7509$^{***}$\\
\quad JMAP & 0.2786$^{***}$ & 0.4649$^{***}$ & 0.5800$^{***}$ & 0.6576$^{***}$ & 0.7112$^{***}$ & 0.7501$^{***}$\\
\midrule
\multicolumn{7}{l}{\emph{Ridge / Bayesian VAR$(p)$, order and prior by CV:}}\\
\quad ridge VAR & 0.2796$^{***}$ & 0.4609$^{***}$ & 0.5678$^{***}$ & 0.6364$^{***}$ & 0.6815$^{***}$ & 0.7128\\
\quad Bayesian VAR & 0.2780$^{***}$ & 0.4584$^{***}$ & 0.5649$^{***}$ & 0.6330$^{**}$ & 0.6780 & \textbf{0.7092}\\
\midrule
\multicolumn{7}{l}{\emph{Componentwise univariate baselines, order by CV:}}\\
\quad AR & 0.2904$^{***}$ & 0.4703$^{***}$ & 0.5779$^{***}$ & 0.6478$^{***}$ & 0.6944$^{***}$ & 0.7267$^{***}$\\
\quad ARMA & 0.2885$^{***}$ & 0.4643$^{***}$ & 0.5679$^{***}$ & 0.6349$^{***}$ & 0.6796 & 0.7109\\
\midrule
\multicolumn{7}{l}{\emph{Sparse VARMA package \texttt{bigtime} \citep{wilms2023sparse}, lag and penalty by CV:}}\\
\quad sparse VAR & 0.3032$^{***}$ & 0.5082$^{***}$ & 0.6462$^{***}$ & 0.7458$^{***}$ & 0.8190$^{***}$ & 0.8743$^{***}$\\
\quad sparse VARMA & 0.2954$^{***}$ & 0.5045$^{***}$ & 0.6454$^{***}$ & 0.7453$^{***}$ & 0.8179$^{***}$ & 0.8720$^{***}$\\
\midrule
\multicolumn{7}{l}{\emph{Naive baselines:}}\\
\quad training-set mean & 1.1441$^{***}$ & 1.1441$^{***}$ & 1.1441$^{***}$ & 1.1441$^{***}$ & 1.1441$^{***}$ & 1.1441$^{***}$\\
\quad stay-in-place (RW) & 0.3388$^{***}$ & 0.6138$^{***}$ & 0.8161$^{***}$ & 0.9791$^{***}$ & 1.1094$^{***}$ & 1.2148$^{***}$\\
\bottomrule
\end{tabular*}
\end{table}

Table~\ref{tab:singapore} exhibits a third regime, distinct from the previous two panels: the predictability is dominated by \emph{periodicity}, and the seasonal VARMA captures it most economically. The seasonal PAR-shrinkage JMAP leads at the first five horizons, ahead of the seasonal RLS by $0.76\%$ at $h=1$ with the gap widening to $5.1\%$ by $h=6$; at $h=6$ the Bayesian VAR edges ahead of it by $0.3\%$. The separations are small in absolute terms but consistent across horizons. The advantage is one of parameter economy: cross-validation selects a seasonal order $p=P=q=Q=4$ (just $4\cdot 4\,d^2=1936$ coefficients, encoding an expanded VARMA of AR and MA order $4+24\cdot 4=100$), whereas without the seasonal factor the dense VARMA climbs to $p=12$--$36$ and the VAR to $p=48$, three or more times as many coefficients, to reach across a full diurnal period. A single seasonal factor at lag $24$ represents, with a third or less of the parameters, the daily-recurring autocorrelation that the flat models can only approximate with two dozen or more lags, and it also forecasts slightly more accurately, improving on the dense VARMA and the cross-validated VAR by $1.8\%$ at $h=1$. The sparse \texttt{bigtime} VAR/VARMA are the weakest multivariate models, behind even the componentwise baselines; sparsity is the wrong bias here, as the denser ozone panel of Section~\ref{sec:exp_beijing} examines in detail.

Both the seasonal autoregression and the seasonal moving average contribute, and (unlike the differenced retail panel of Section~\ref{sec:exp_dominicks}, where a non-seasonal moving average carried the gain) here the gain comes from the seasonal structure, as on the Beijing panel below. Dropping the seasonal moving-average factor (the seasonal VAR) costs $1.3\%$ at $h=1$ and, because that factor reaches a full day ahead, a growing ${\approx}5.5\%$ by $h=6$, whereas the ordinary moving average is inert (the dense VARMA tracks the ridge VAR to within half a percent at every horizon). The componentwise baselines, which discard all cross-channel coupling, are weakest throughout, so the coupling helps. A Diebold--Mariano test \citep{diebold1995comparing,harvey1997testing} on the pooled hourly path ($n=52{,}608$) makes every between-family gap significant at $p<0.01$; the two seasonal PAR-shrinkage estimators are a practical tie, separable only by the large sample.

\subsection{Beijing air-quality data: structure versus sparsity}\label{sec:exp_beijing}
Our final benchmark revisits the periodic regime of Section~\ref{sec:exp_singapore} on a larger, denser panel, and uses it to ask a sharper question: when the predictable structure is spread \emph{diffusely} across many channels and lags, does \emph{sparse} estimation help or hurt? We forecast hourly ozone (O$_3$) at the $d=12$ monitoring stations of the Beijing multi-site air-quality network \citep{zhang2017cautionary} over 2013--2019 (stopping before the COVID-19 disruption). Ozone is photochemically driven, with a strong, regular diurnal cycle and pronounced cross-station coupling, so (as in the meteorological panel) its anomalies retain a daily-recurring autocorrelation that the multiplicative seasonal VARMA of Section~\ref{sec:seasonal_varma} is built to capture. What is new here is the cross-station signal, which is high-dimensional, making this a natural test of the sparse VAR/VARMA estimators of the \texttt{bigtime} package \citep{wilms2023sparse} against our dense, shrinkage-based estimators.

\paragraph{Data and protocol.}
Each station's hourly O$_3$ series is preprocessed causally and independently within each window: a missing station-hour is filled by the contemporaneous cross-sectional mean of the other stations; the series are log-transformed; deseasonalized by regressing out, per station on the training window only, an additive calendar design (intercept, hour-of-day indicators, piecewise-linear annual and weekly cycles) together with a \emph{causal linear time trend} that removes the well-documented multi-year decline in Beijing pollution; and standardized by the training-window standard deviation. We use four rolling windows, each three training years and one test year (the last ten months, as the record ends December 2019), aligned to the March start of the data; the preprocessing is redone per window, while order and shrinkage hyperparameters are chosen once on the first window by the common protocol of Section~\ref{sec:exp_protocol} and frozen thereafter, as on the other panels. The seasonal period is $s=24$, the truncation $m=1024$, the covariance prior $\mathrm{IW}(\mI_d,\,d+4)$. To isolate the seasonal moving-average term we add a \emph{seasonal VAR} ablation ($q=Q=0$) whose order is chosen over the balanced grid $(p,P)=(k,k)$, by the same protocol as the seasonal VARMA. For \texttt{bigtime}, whose internal cross-validation is prohibitively slow here, we instead fit its coefficient \emph{path} at each candidate lag (a single fit returns the whole penalty path) and select lag and penalty on the validation tail; its sparse-VAR and sparse-VARMA modes are scored exactly as our own models. The candidate lags are $p\in\{24,48,96\}$ for the sparse VAR and $(p,q)\in\{(24,2),(48,2)\}$ for the sparse VARMA, each fit on the cross-validation window and on the full training window, and the VAR($\infty$) inversion of the sparse VARMA is truncated at $200$ lags before the forecast recursion. Table~\ref{tab:beijing} reports the per-origin forecast MSE, averaged over the four windows.

\begin{table}[!t]\centering\small
\caption{Beijing ozone ($d=12$ hourly station anomalies): forecast MSE per origin (the mean over the $12$ stations), averaged equally over the test origins of the four rolling windows (test periods 2016--2019), at horizons $h=1,\dots,6$ hours. Order and ridge/prior hyperparameters are chosen by the common cross-validation protocol of Section~\ref{sec:exp_protocol} once on the first training window and held fixed across the four windows. Best in each column in bold.}\label{tab:beijing}
\begin{tabular*}{\linewidth}{@{\extracolsep{\fill}}l cccccc@{}}
\toprule
model & $h=1$ & $h=2$ & $h=3$ & $h=4$ & $h=5$ & $h=6$\\
\midrule
\multicolumn{7}{l}{\emph{Seasonal PAR-shrinkage VARMA$(p,p,p,p)_{24}$, $p$ and ridge variances by CV:}}\\
\quad RLS \ ($p{=}2$) & 0.1903 & \textbf{0.3410} & \textbf{0.4584} & \textbf{0.5529} & \textbf{0.6302} & \textbf{0.6917}\\
\quad JMAP \ ($p{=}3$) & \textbf{0.1901} & 0.3412 & 0.4590 & 0.5539 & 0.6311 & 0.6925\\
\midrule
\multicolumn{7}{l}{\emph{Dense VARMA$(p,p)$ (no seasonal factor), order and ridge variances by CV:}}\\
\quad JMAP & 0.1953$^{***}$ & 0.3572$^{***}$ & 0.4871$^{***}$ & 0.5935$^{***}$ & 0.6807$^{***}$ & 0.7501$^{***}$\\
\quad RLS & 0.1951$^{***}$ & 0.3499$^{***}$ & 0.4699$^{***}$ & 0.5660$^{***}$ & 0.6447$^{***}$ & 0.7069$^{***}$\\
\midrule
\multicolumn{7}{l}{\emph{Seasonal VAR$(p,0,P,0)_{24}$ (no MA), order by CV:}}\\
\quad RLS & 0.1953$^{***}$ & 0.3549$^{***}$ & 0.4812$^{***}$ & 0.5834$^{***}$ & 0.6669$^{***}$ & 0.7334$^{***}$\\
\quad JMAP & 0.1953$^{***}$ & 0.3550$^{***}$ & 0.4816$^{***}$ & 0.5840$^{***}$ & 0.6676$^{***}$ & 0.7342$^{***}$\\
\midrule
\multicolumn{7}{l}{\emph{Ridge / Bayesian VAR$(p)$, order and prior by CV:}}\\
\quad ridge VAR & 0.1951$^{***}$ & 0.3543$^{***}$ & 0.4790$^{***}$ & 0.5799$^{***}$ & 0.6626$^{***}$ & 0.7281$^{***}$\\
\quad Bayesian VAR & 0.1955$^{***}$ & 0.3547$^{***}$ & 0.4794$^{***}$ & 0.5803$^{***}$ & 0.6629$^{***}$ & 0.7284$^{***}$\\
\midrule
\multicolumn{7}{l}{\emph{Componentwise univariate baselines, order by CV:}}\\
\quad AR & 0.2197$^{***}$ & 0.4130$^{***}$ & 0.5548$^{***}$ & 0.6602$^{***}$ & 0.7384$^{***}$ & 0.7956$^{***}$\\
\quad ARMA & 0.2163$^{***}$ & 0.4014$^{***}$ & 0.5349$^{***}$ & 0.6336$^{***}$ & 0.7067$^{***}$ & 0.7605$^{***}$\\
\midrule
\multicolumn{7}{l}{\emph{Sparse VARMA package \texttt{bigtime} \citep{wilms2023sparse}, lag and penalty by CV:}}\\
\quad sparse VAR & 0.2577$^{***}$ & 0.4027$^{***}$ & 0.5304$^{***}$ & 0.6361$^{***}$ & 0.7211$^{***}$ & 0.7876$^{***}$\\
\quad sparse VARMA & 0.2508$^{***}$ & 0.4061$^{***}$ & 0.5333$^{***}$ & 0.6385$^{***}$ & 0.7228$^{***}$ & 0.7886$^{***}$\\
\midrule
\multicolumn{7}{l}{\emph{Naive baselines:}}\\
\quad training-set mean & 1.0354$^{***}$ & 1.0354$^{***}$ & 1.0354$^{***}$ & 1.0354$^{***}$ & 1.0354$^{***}$ & 1.0354$^{***}$\\
\quad stay-in-place (RW) & 0.2357$^{***}$ & 0.4764$^{***}$ & 0.6836$^{***}$ & 0.8644$^{***}$ & 1.0214$^{***}$ & 1.1551$^{***}$\\
\bottomrule
\end{tabular*}
\end{table}

The seasonal VARMA is again the most accurate model. The two seasonal PAR-shrinkage estimators lead at every horizon, tied to within $0.2\%$ (JMAP marginally lower at $h=1$, RLS thereafter). The remaining models cluster just above: at $h=1$ the seasonal VARMA beats every non-seasonal alternative (dense VARMA, seasonal VAR, ridge VAR, all nearly coincident) by $2.6$--$2.8\%$, and the margin widens with horizon (to ${\sim}5\%$ over the VAR family and ${\sim}6\%$ over the seasonal VAR by $h=6$), because the seasonal factor reaches a full day ahead by construction. At $h=1$ neither the seasonal VAR (drop the MA) nor the dense VARMA (drop the seasonal factor) improves on the plain ridge VAR, so neither ingredient alone helps: it is the seasonal moving-average factor, acting with the seasonal autoregression, that delivers the gain. As on the meteorological panel, the moving average is inert only in its non-seasonal form (the dense VARMA-RLS tracks the ridge VAR, sparse VAR $\approx$ sparse VARMA), while the seasonal MA is not. The gain again comes cheaply: the seasonal VARMA reaches this accuracy with on the order of $8d^2$ coefficients, whereas a pure autoregression must span a full diurnal cycle to approximate what the seasonal factor encodes in a single low-order term. Averaged over horizons and windows, the seasonal VARMA leads, the dense VARMA-RLS and the VAR family follow nearly coincident, and the componentwise and sparse models trail; the componentwise baselines, which discard cross-station coupling, are again among the weakest, confirming the coupling is informative.

The panel also speaks to sparsity. The \texttt{bigtime} sparse VAR and sparse VARMA are the least accurate multivariate models, and at $h=1$ they fall behind the stay-in-place baseline as well (${\sim}32$--$36\%$ behind the seasonal VARMA at $h=1$, and worse than every dense model at every horizon), and their own cross-validation explains why: on every window it drives the penalty to the least-sparse end of the path and selects the shortest lag offered, fighting the sparsity the method imposes and never reaching the diurnal lags the dense VAR exploits. Ozone's spatiotemporal dependence is diffuse, many small cross-station and multi-lag couplings rather than a few large ones, so an $\ell_1$/hierarchical penalty that zeroes coefficients discards usable signal, whereas the dense $\ell_2$ shrinkage of our estimators (and of the ridge VAR) keeps every coefficient, merely pulled toward zero. Sparsity is the wrong inductive bias here. Two qualifications are in order. First, the comparison mixes two differences: \texttt{bigtime} fits the moving-average part by penalized two-stage least squares rather than by a likelihood-based criterion, so part of its gap to the MAP estimators may reflect statistical efficiency rather than the sparsity penalty itself; the synthetic experiments of Section~\ref{sec:exp_synth} point the same way, with \texttt{bigtime} trailing the $\mSigma$-aware estimators even in the weakly-coupled regime once $\mSigma$ is estimable. Second, the conclusion is about these data, not about sparse methods in general: for data generated by a VARMA with sparse coefficients and a limited sample, the $\ell_1$ approach would likely recover the coefficients, and forecast, more accurately than our dense estimators. With the meteorological panel, this benchmark indicates that on real, periodically-structured, densely-coupled data the most accurate and most parameter-economical forecaster is a structured dense seasonal VARMA rather than a sparse one, stably across four independent pre-COVID years. A Diebold--Mariano test \citep{diebold1995comparing,harvey1997testing} on the pooled path ($n=32{,}720$) confirms every gap: the best seasonal VARMA leads its closest non-seasonal competitor (the dense VARMA-RLS) by $2.1$--$2.6\%$ across horizons, the seasonal VAR by up to $6.0\%$, the componentwise baselines by $10$--$21\%$, and the sparse models by $14$--$36\%$, all at $p<0.01$ (the two seasonal PAR-shrinkage estimators a practical tie).

\section*{Conclusion}
We have presented a scalable framework for estimating dense, truncated VARMA$(p,q,m)$ models. It combines a partial-autocorrelation reparametrization, under which stationarity and invertibility hold by construction so that fitting is unconstrained; regularized least-squares and $\Sigma$-marginalized MAP losses that see the data only through fixed-size sufficient statistics, making every gradient evaluation independent of the series length $T$; and a Fourier/Parseval evaluation that brings the per-gradient cost down to near-linear in the truncation length $m$. Within this framework we fit two estimators (an isotropic-noise regularized least-squares point estimate and the $\Sigma$-marginalized MAP) and extend both, at similar computational cost, to multiplicative seasonal dynamics, to exogenous regressors (VARMAX), and to incremental rolling refits. Empirically the estimators stay close to the oracle on data where the classical conditional-MLE implementation returns non-invertible fits, while remaining tractable well beyond the reach of existing packages, and on retail-demand, meteorological, and air-quality benchmarks they are competitive with or better than VAR, Bayesian-VAR, component-wise ARMA, and sparse-VARMA baselines. In these examples the moving-average and seasonal moving-average terms account for much of the advantage, capturing with a single low-order factor the over-differencing or diurnal structure that a pure autoregression can only approximate with many extra lags; and where the predictable signal is diffuse rather than sparse, the dense structured VARMA is both more accurate and more parameter-economical than its sparse counterpart. A natural further step is full Bayesian inference, averaging forecasts over the $\Sigma$-marginalized posterior rather than reporting its mode. Sampling this posterior efficiently is a challenging open problem (the marginalization leaves a heavy-tailed, high-dimensional density), and we leave it for future research.

\paragraph{Code availability.} The implementation, together with drivers that reproduce every experiment in the paper, is available at \url{https://github.com/paulindani/VARMA}. The code is Python/JAX throughout, with an optional R dependency for the \texttt{bigtime} baseline; each experiment runs from a single script, and a master driver reruns the full study.

\acks{DP is supported by a Nanyang Technological University Start-up Grant, project number: 024968-00001.}

\bibliography{References}

\appendix

\section{Proof of Proposition~\ref{prop:par_bijection}}\label{sec:proof_par}

Throughout this appendix, $\mathrm{chol}(\cdot)$ is the lower-triangular Cholesky factor with positive diagonal. We use two standing facts: (F1) $\mSigma\mapsto\mathrm{chol}(\mSigma)$ and triangular inversion are real-analytic on the positive-definite cone (forward substitution solves $\mL\mL^\top=\mSigma$ entrywise, dividing only by the positive diagonal entries); (F2) a product of lower-triangular matrices with positive diagonals is again one, and is therefore the Cholesky factor of its own Gram matrix, by uniqueness of the factorization.

\paragraph{Step 1: Stage 1 is a diffeomorphism onto the contraction ball.}
For $\mL_A=\mathrm{chol}(\mI+\mM^\top\mM)$ and $\mP=\mM\mL_A^{-\top}$ we have $\mP^\top\mP=\mL_A^{-1}\mM^\top\mM\mL_A^{-\top}=\mL_A^{-1}(\mL_A\mL_A^\top-\mI)\mL_A^{-\top}=\mI-(\mL_A^\top\mL_A)^{-1}$. The eigenvalues of $\mL_A^\top\mL_A$ agree with those of $\mL_A\mL_A^\top=\mI+\mM^\top\mM$, namely $1+\sigma_i(\mM)^2$, so the eigenvalues of $\mP^\top\mP$ are $\sigma_i(\mM)^2/(1+\sigma_i(\mM)^2)<1$ and $\sigma_{\max}(\mP)<1$. Conversely, for $\sigma_{\max}(\mP)<1$ the matrix $\mI-\mP^\top\mP$ is positive definite; setting $\mR=\mathrm{chol}(\mI-\mP^\top\mP)$ and $\mM=\mP\mR^{-\top}$ gives $\mI+\mM^\top\mM=\mR^{-1}\bigl(\mR\mR^\top+\mP^\top\mP\bigr)\mR^{-\top}=\mR^{-1}\mR^{-\top}$, whose Cholesky factor is $\mR^{-1}$ by (F2), so $\mL_A=\mR^{-1}$ and $\mM\mL_A^{-\top}=\mP\mR^{-\top}\mR^{\top}=\mP$: the preimage exists, and it is unique because $\mL_A$ is determined by $\mP$ through $\mL_A^{-1}=\mathrm{chol}(\mI-\mP^\top\mP)$ and then $\mM=\mP\mL_A^\top$. Both directions are compositions of matrix products, triangular inversions and Cholesky factorizations of positive-definite matrices, hence smooth by (F1).

\paragraph{Step 2: the backward pass is well defined, consistent and unique.}
Fix contractions $\mP_{1:r}$ and write $\mB_s:=\mI-\mP_s\mP_s^\top\succ\mz$. The recursion~\eqref{eq:par_backward} defines $\mL^f_r=\mI,\mL^f_{r-1},\ldots,\mL^f_0$, each lower triangular with positive diagonal by (F2), and each invertible, so $\mSigma^f_s:=\mL^f_s(\mL^f_s)^\top\succ\mz$. The forward variance updates are consistent with this chain: by~\eqref{eq:par_backward}, $\mL^f_{s-1}\,\mathrm{chol}(\mB_s)=\mL^f_s$, so $\mL^f_{s-1}\mB_s(\mL^f_{s-1})^\top=\mSigma^f_s$, and in particular the terminal forward covariance equals $\mI_d$: the anchor holds exactly. Uniqueness: if $\mL$ is lower triangular with positive diagonal and $\mL\,\mB_s\,\mL^\top=\mSigma^f_s$, then $\mL\,\mathrm{chol}(\mB_s)=\mathrm{chol}(\mSigma^f_s)$ by (F2), so $\mL=\mL^f_{s-1}$; the backward chain is therefore the unique solution of the anchoring equations, and it depends smoothly on $\mP_{1:r}$ by (F1). Finally, associate to the recursion the matrix sequence $\mGamma_0:=\mL^f_0(\mL^f_0)^\top$ and $\mGamma_s:=\sum_{j=1}^{s}\mA^{(s)}_j\mGamma_{s-j}$ for $s=1,\ldots,r$ (with $\mGamma_{-k}:=\mGamma_k^\top$). With these definitions the coefficient and covariance updates of~\eqref{eq:par_recursion} are precisely the block Levinson recursion, so at every order $s$ the stacks $\mA^{(s)}_{1:s}$ and $\widetilde{\mA}^{(s)}_{1:s}$ solve the forward and backward Yule--Walker systems of $\mGamma_{0:s}$ with residual covariances $\mSigma^f_s$ and $\mSigma^b_s$ \citep{whittle1963fitting,morf1978covariance}; in particular, at $s=r$ the output stack satisfies the order-$r$ Yule--Walker equations of $\mGamma_{0:r}$ with residual $\mSigma^f_r=\mI_d$.

\paragraph{Step 3: every contraction stack maps to a stable stack.}
Associate to the recursion the matrix polynomials $\mA^{(s)}(z):=\mI-\sum_{j=1}^{s}\mA^{(s)}_j z^j$ and the reversed backward polynomials $\mtx{B}^{(s)}(z):=z^{s}\widetilde{\mA}^{(s)}(1/z)$, for which the coefficient updates of~\eqref{eq:par_recursion} read
\[
\mA^{(s)}(z)=\mA^{(s-1)}(z)-z\,\mA^{(s)}_s\,\mtx{B}^{(s-1)}(z),
\qquad
\mtx{B}^{(s)}(z)=z\,\mtx{B}^{(s-1)}(z)-\widetilde{\mA}^{(s)}_s\,\mA^{(s-1)}(z).
\]
Normalize by the Cholesky factors, $\alpha_s(z):=(\mL^f_s)^{-1}\mA^{(s)}(z)$ and $\beta_s(z):=(\mL^b_s)^{-1}\mtx{B}^{(s)}(z)$. Substituting $\mA^{(s)}_s=\mL^f_{s-1}\mP_s(\mL^b_{s-1})^{-1}$, $\widetilde{\mA}^{(s)}_s=\mL^b_{s-1}\mP_s^\top(\mL^f_{s-1})^{-1}$ and the factor updates $\mL^f_s=\mL^f_{s-1}\mC_{f,s}$, $\mL^b_s=\mL^b_{s-1}\mC_{b,s}$ with $\mC_{f,s}:=\mathrm{chol}(\mI-\mP_s\mP_s^\top)$, $\mC_{b,s}:=\mathrm{chol}(\mI-\mP_s^\top\mP_s)$ gives
\[
\alpha_s(z)=\mC_{f,s}^{-1}\bigl(\alpha_{s-1}(z)-z\,\mP_s\,\beta_{s-1}(z)\bigr),
\qquad
\beta_s(z)=\mC_{b,s}^{-1}\bigl(z\,\beta_{s-1}(z)-\mP_s^\top\,\alpha_{s-1}(z)\bigr).
\]
The block map $\mtx{\Theta}_s:=\left(\begin{smallmatrix}\mC_{f,s}^{-1}&-\mC_{f,s}^{-1}\mP_s\\ -\mC_{b,s}^{-1}\mP_s^\top&\mC_{b,s}^{-1}\end{smallmatrix}\right)$ is $J$-unitary, $\mtx{\Theta}_s^\top J\,\mtx{\Theta}_s=J$ for $J=\mathrm{diag}(\mI,-\mI)$: the block identities reduce to $(\mI-\mP\mP^\top)^{-1}-\mP(\mI-\mP^\top\mP)^{-1}\mP^\top=\mI$ and $(\mI-\mP\mP^\top)^{-1}\mP=\mP(\mI-\mP^\top\mP)^{-1}$, both immediate from the Neumann series. Consequently, on $|z|=1$ (where $|z|^2=1$; here $^{*}$ denotes the conjugate transpose of the matrix-valued polynomials, while $^{\top}$ on the real coefficient blocks is the ordinary transpose),
\[
\alpha_s(z)^{*}\alpha_s(z)-\beta_s(z)^{*}\beta_s(z)
=\alpha_{s-1}(z)^{*}\alpha_{s-1}(z)-\beta_{s-1}(z)^{*}\beta_{s-1}(z),
\]
and since $\alpha_0=\beta_0=(\mL^f_0)^{-1}$ is constant, $\alpha_s(z)^{*}\alpha_s(z)=\beta_s(z)^{*}\beta_s(z)$ on the unit circle for every $s$. We now show by induction that $\det\alpha_s(z)\ne0$ for all $|z|\le1$. This holds at $s=0$. Assume it at $s-1$ and set $\mtx{W}(z):=\beta_{s-1}(z)\,\alpha_{s-1}(z)^{-1}$; by the induction hypothesis $\det\alpha_{s-1}$ is nonvanishing on the compact closed disk, hence on an open neighborhood of it, so $\mtx{W}$ is analytic there, and $\mtx{W}(z)^{*}\mtx{W}(z)=\mI$ on $|z|=1$ by the equal-modulus identity. For any unit vectors $u,v$, the scalar $u^{*}\mtx{W}(z)v$ is analytic on a neighborhood of the closed disk and bounded by $1$ on its boundary, so the maximum principle gives $\|\mtx{W}(z)\|\le1$ on the closed disk. Writing
\[
\alpha_s(z)=\mC_{f,s}^{-1}\bigl(\mI-z\,\mP_s\,\mtx{W}(z)\bigr)\,\alpha_{s-1}(z),
\qquad
\|z\,\mP_s\,\mtx{W}(z)\|\le\sigma_{\max}(\mP_s)<1\ \ (|z|\le1),
\]
all three factors are invertible on the closed disk, and the induction advances. At $s=r$ the anchor gives $\mL^f_r=\mI$, so $\mA^{(r)}(z)=\alpha_r(z)$ and $\det\bigl(\mI-\sum_{j}\mA_j z^j\bigr)\ne0$ for $|z|\le1$: the output satisfies the root condition of Assumption~\ref{ass:roots}, equivalently $\spr(\mathcal{C}_{\mA})<1$. This is the matrix form of the classical Schur--Levinson stability argument \citep{whittle1963fitting,morf1978covariance}. Together with Step~2, the sequence $\mGamma_{0:r}$ generated there satisfies the order-$r$ Yule--Walker equations of a stable stack with residual $\mI_d$; since for a stable stack this linear system (with the symmetry constraints $\mGamma_{-k}=\mGamma_k^\top$) is a rewriting of the discrete Lyapunov equation of the companion form and so has a unique solution \citep[Section~2.1]{lutkepohl2005new}, $\mGamma_{0:r}$ coincides with the stationary autocovariances of the $\mathrm{VAR}(r)$ with coefficients $\tau_r(\mM_{1:r})$ and innovation covariance $\mI_d$.

\paragraph{Step 4: correspondence with a stationary autoregression.}
Let $\mA_{1:r}$ be stable and let $\vz_t$ be the stationary $\mathrm{VAR}(r)$ process with coefficients $\mA_{1:r}$ and innovation covariance $\mI_d$. Its autocovariances $\mGamma_k$ exist and are unique: the stacked covariance of $(\vz_t,\ldots,\vz_{t-r+1})$ solves a discrete Lyapunov equation with the stable companion matrix $\mathcal{C}_{\mA}$ as transition and state-noise covariance $\mtx{Q}:=\mathrm{diag}(\mI_d,\mz)$, and the solution $\sum_{k\ge0}\mathcal{C}_{\mA}^k\mtx{Q}(\mathcal{C}_{\mA}^\top)^k$ is positive definite because the pair $(\mathcal{C}_{\mA},\mtx{Q})$ is reachable: $\mtx{Q}$ has rank $d$ only, but a shock entering the first block propagates through every lag block, so the first $r$ propagation blocks span all $rd$ state coordinates. For $0\le s\le r$ let $\mSigma^f_s$ and $\mSigma^b_s$ denote the covariances of the errors of the best linear prediction of $\vz_t$ from its $s$ most recent past values and of $\vz_{t-s}$ from the same block of values, respectively, and let $\mtx{\Delta}_s$ be the associated partial cross-covariance. The multivariate Levinson--Durbin recursion computes these quantities from $\mGamma_{0:r}$, and its coefficient updates coincide with the middle lines of~\eqref{eq:par_recursion} upon setting
\[
\mP_s \;:=\; (\mL^f_{s-1})^{-1}\,\mtx{\Delta}_s\,(\mL^b_{s-1})^{-\top},
\qquad
\mL^f_{s-1}=\mathrm{chol}(\mSigma^f_{s-1}),\quad \mL^b_{s-1}=\mathrm{chol}(\mSigma^b_{s-1}),
\]
since then $\mA^{(s)}_s=\mL^f_{s-1}\mP_s(\mL^b_{s-1})^{-1}=\mtx{\Delta}_s(\mSigma^b_{s-1})^{-1}$, the usual reflection-coefficient update. The joint covariance $\left(\begin{smallmatrix}\mSigma^f_{s-1}&\mtx{\Delta}_s\\ \mtx{\Delta}_s^\top&\mSigma^b_{s-1}\end{smallmatrix}\right)$ of the two prediction errors is positive definite (it is the Schur complement of the covariance of the conditioning block $(\vz_{t-1},\ldots,\vz_{t-s+1})$ in the positive-definite covariance of the full vector $(\vz_t,\ldots,\vz_{t-s})$), and after congruence by $\mathrm{diag}\bigl((\mL^f_{s-1})^{-1},(\mL^b_{s-1})^{-1}\bigr)$ its positive-definiteness is, by the Schur complement, equivalent to $\sigma_{\max}(\mP_s)<1$. The recursion's variance downdates are $\mSigma^f_s=\mL^f_{s-1}(\mI-\mP_s\mP_s^\top)(\mL^f_{s-1})^\top$ and $\mSigma^b_s=\mL^b_{s-1}(\mI-\mP_s^\top\mP_s)(\mL^b_{s-1})^\top$, matching Step~2 and the last line of~\eqref{eq:par_recursion}. For a $\mathrm{VAR}(r)$ the best linear predictor from $r$ lags is the autoregression itself, so $\mSigma^f_r=\mI_d$ and the order-$r$ predictor coefficients are $\mA_{1:r}$; by stationarity, $\mSigma^b_0=\mSigma^f_0=\mGamma_0$. The contraction stack extracted from $\mA_{1:r}$ therefore satisfies the anchoring equations of Step~2, whose solution is unique, and running the forward pass~\eqref{eq:par_recursion} on it returns the order-$r$ predictor coefficients, that is, $\mA_{1:r}$ itself. This gives surjectivity. Injectivity and smoothness now follow together. The stable coefficients and innovation covariance $\mI_d$ determine the Lyapunov solution $\mGamma_{0:r}$ uniquely, and by the last part of Step~3 this is the sequence generated in Step~2 from the input contractions; the Levinson extraction from $\mGamma_{0:r}$ is deterministic and, by the anchoring uniqueness of Step~2, returns those contractions, so with Step~1 the preimage is unique, and the inverse is the composition $\mA\mapsto\mGamma(\mA)\mapsto\mP(\mGamma)\mapsto\mM(\mP)$. Each constituent map is smooth on the open stable set: the first is the solution of a linear system with invertible, polynomially $\mA$-dependent coefficient matrix; the second stays on positive-definite covariances and applies (F1) at every order; the third is the closed form of Step~1. The forward direction is smooth by Steps~1--2, so $\tau_r$ is a smooth diffeomorphism onto the stable set. \hfill$\square$

We record one consequence used in Assumption~\ref{ass:consistency}(ii) and in Appendix~\ref{sec:pf_sarma}: padding a stable stack $\mA_{1:\bar p}$ with zero matrices up to order $p\ge\bar p$ leaves the lag polynomial unchanged, and the order-$p$ companion matrix has characteristic polynomial $z^{(p-\bar p)d}\det(z^{\bar p}\mI-\ldots)$, so its spectrum is that of the original companion matrix together with $(p-\bar p)d$ zero eigenvalues. Stability is preserved, and Proposition~\ref{prop:par_bijection} applied at order $p$ gives the padded stack unique order-$p$ PAR coordinates.

\section{Proof of Proposition~\ref{prop:trunc_bias}}\label{sec:proof_trunc_bias}
Throughout, $\|\cdot\|$ denotes the operator ($\ell_2\!\to\!\ell_2$) norm (which coincides with the $\|\cdot\|_{\mathrm{op}}$ of Proposition~\ref{prop:trunc_bias}), $\sigma_{\min}(\cdot)$ the smallest singular value, and $\mtx{M}^{*}$ the conjugate transpose of $\mtx{M}$. Introduce the infinite-VAR operator of the VARMA$(p,q)$ and its degree-$m$ truncation,
\[
\mtx{A}(z):=\mI-\sum_{j\ge1}\mPi_j z^j=\mTheta(z)^{-1}\mPhi(z),
\qquad
\mtx{A}_m(z):=\mI-\sum_{j=1}^{m}\mPi_j z^j,
\]
where the second equality is the operator form of the infinite-VAR representation~\eqref{eq:infiniteVAR}, obtained from the VARMA equation~\eqref{eq:VARMAdef} (so $\mPi_0=-\mI$ and $a_t=\mtx{A}(B)\vZ_t$), and $\mtx{A}_m$ is exactly the autoregressive operator of the truncated VARMA$(p,q,m)$ of Definition~\ref{def:tvarma}. We use the constants $G_\rho,\mu_{\min},\beta_\bullet,\beta_\circ,m_0$ of Proposition~\ref{prop:trunc_bias}. Under Assumption~\ref{ass:roots}, $\det\mtx{A}(z)=\det\mPhi(z)/\det\mTheta(z)\ne0$ for all $|z|\le1$ and $\mtx{A}$ is continuous on the compact disk, so $\beta_\bullet,\beta_\circ$ are both strictly positive, with $\beta_\bullet\le\beta_\circ$ and $\beta_\circ=\sqrt{\mu_{\min}}$.

\noindent\emph{Step 1 (geometric decay of the dropped coefficients, via a Cauchy estimate).} The zeros of $\det\mTheta(z)$ are the reciprocals of the nonzero eigenvalues of $\mathcal{C}_{\mTheta}$, hence have modulus $\ge 1/\spr(\mathcal{C}_{\mTheta})$; since $\mPhi$ is a matrix polynomial, $\mI-\mtx{A}(z)=\sum_{j\ge1}\mPi_j z^j$ is analytic on the disk $|z|<1/\spr(\mathcal{C}_{\mTheta})$. Fix $\rho\in(\spr(\mathcal{C}_{\mTheta}),1)$, so the circle $|z|=1/\rho$ lies strictly inside that disk. Cauchy's estimate for Taylor coefficients then gives, for every $j\ge1$,
\[
\mPi_j=\frac{1}{2\pi i}\oint_{|z|=1/\rho}\bigl(\mI-\mtx{A}(z)\bigr)\,z^{-j-1}\,dz,
\qquad\text{hence}\qquad
\|\mPi_j\|\le\rho^{\,j}\max_{|z|=1/\rho}\|\mI-\mtx{A}(z)\|=G_\rho\,\rho^{\,j},
\]
with $G_\rho$ as in~\eqref{eq:trunc_constants}, finite because $\mTheta(z)$ is invertible on $|z|=1/\rho<1/\spr(\mathcal{C}_{\mTheta})$. This is the explicit, non-asymptotic counterpart of Gelfand's formula~\eqref{eq:gelfand}, whose limit only asserts that such a constant exists \citep{trefethen2005spectra}. Consequently, for every $m\ge0$,
\begin{equation}\label{eq:tail_bound}
\delta_m:=\sum_{j>m}\|\mPi_j\|\;\le\;G_\rho\sum_{j>m}\rho^{j}=\frac{G_\rho\,\rho}{1-\rho}\,\rho^{m}.
\end{equation}

\noindent\emph{Step 2 (stationarity of the truncation and a uniform inverse bound).} For $|z|\le1$, $\|\mtx{A}(z)-\mtx{A}_m(z)\|=\bigl\|\sum_{j>m}\mPi_j z^{j}\bigr\|\le\delta_m$ by~\eqref{eq:tail_bound}. Since $\delta_m=G_\rho\rho^{m+1}/(1-\rho)$, the threshold $m_0$ of~\eqref{eq:trunc_m0} is exactly the smallest $m\ge p+q$ with $\delta_m\le\beta_\bullet/2$; hence $\delta_m\le\beta_\bullet/2\le\beta_\circ/2$ for all $m\ge m_0$. For such $m$, Weyl's inequality gives, on the closed disk, $\sigma_{\min}(\mtx{A}_m(z))\ge\beta_\bullet-\delta_m\ge\beta_\bullet/2>0$, so $\det\mtx{A}_m(z)\ne0$ on $|z|\le1$ and the truncated VARMA$(p,q,m)$ is stationary, with a well-defined spectral density; and on the unit circle, $\sigma_{\min}(\mtx{A}_m(e^{-i\omega}))\ge\beta_\circ-\delta_m\ge\beta_\circ/2$, whence $\|\mtx{A}_m(e^{-i\omega})^{-1}\|\le2/\beta_\circ$.

\noindent\emph{Step 3 (spectral-density gap, controlled by the on-circle conditioning).} For $m\ge m_0$ both processes are stationary, driven by the same white noise of covariance $\mSigma$, with matrix spectral densities
\[
\mtx{f}(\omega)=\tfrac{1}{2\pi}\,\mtx{A}(e^{-i\omega})^{-1}\mSigma\,\mtx{A}(e^{-i\omega})^{-*},\qquad
\mtx{f}_m(\omega)=\tfrac{1}{2\pi}\,\mtx{A}_m(e^{-i\omega})^{-1}\mSigma\,\mtx{A}_m(e^{-i\omega})^{-*}.
\]
Abbreviate $\mtx{A}:=\mtx{A}(e^{-i\omega})$, $\mtx{A}_m:=\mtx{A}_m(e^{-i\omega})$; on the unit circle $\|\mtx{A}^{-1}\|=\|\mtx{A}^{-*}\|\le\beta_\circ^{-1}$ and $\|\mtx{A}_m^{-1}\|\le2/\beta_\circ$ by Step~2. The resolvent identity $\mtx{A}^{-1}-\mtx{A}_m^{-1}=\mtx{A}^{-1}(\mtx{A}_m-\mtx{A})\mtx{A}_m^{-1}$ gives $\|\mtx{A}^{-1}-\mtx{A}_m^{-1}\|\le2\beta_\circ^{-2}\delta_m$, and writing
\[
\mtx{A}^{-1}\mSigma\mtx{A}^{-*}-\mtx{A}_m^{-1}\mSigma\mtx{A}_m^{-*}=(\mtx{A}^{-1}-\mtx{A}_m^{-1})\mSigma\mtx{A}^{-*}+\mtx{A}_m^{-1}\mSigma(\mtx{A}^{-*}-\mtx{A}_m^{-*})
\]
yields, uniformly in $\omega\in[-\pi,\pi]$,
\[
\|\mtx{f}(\omega)-\mtx{f}_m(\omega)\|\le\frac{1}{2\pi}\Bigl(2\beta_\circ^{-2}\delta_m\,\|\mSigma\|\,\beta_\circ^{-1}+2\beta_\circ^{-1}\,\|\mSigma\|\,2\beta_\circ^{-2}\delta_m\Bigr)=\frac{3\|\mSigma\|}{\pi\beta_\circ^{3}}\,\delta_m.
\]
(The role of $\beta_\circ^{-1}=\mu_{\min}^{-1/2}$ as the on-circle conditioning is discussed in the remark below.)

\noindent\emph{Step 4 (autocovariances).} The autocovariances are the Fourier coefficients of the spectral densities, $\mGamma^{\infty}_l=\int_{-\pi}^{\pi}e^{il\omega}\mtx{f}(\omega)\,d\omega$ and $\mGamma^{(m)}_l=\int_{-\pi}^{\pi}e^{il\omega}\mtx{f}_m(\omega)\,d\omega$. Hence, for every $l\ge0$ and every $m\ge m_0$, using Step~3, $\beta_\circ^{3}=\mu_{\min}^{3/2}$, and~\eqref{eq:tail_bound},
\[
\bigl\|\mGamma^{\infty}_l-\mGamma^{(m)}_l\bigr\|\le\int_{-\pi}^{\pi}\|\mtx{f}(\omega)-\mtx{f}_m(\omega)\|\,d\omega\le\frac{6\|\mSigma\|}{\mu_{\min}^{3/2}}\,\delta_m\le\frac{6\|\mSigma\|\,G_\rho\,\rho}{(1-\rho)\,\mu_{\min}^{3/2}}\,\rho^{m}=C_\rho\,\rho^{m}.
\]
This is exactly~\eqref{eq:gamma_bias} with the explicit constant stated there, uniform in $l$, completing the proof. $\hfill\square$

\begin{remark}[Dimension dependence]
The two constants are explicit but model-dependent: $G_\rho$ reflects the non-normality of the moving-average inversion $\mTheta^{-1}\mPhi$, and $\mu_{\min}$ is the on-circle floor of the squared smallest singular value of the autoregressive operator $\mtx{A}(e^{-i\omega})$; together with $\|\mSigma\|$ it yields the spectral-density upper bound of \citet{basu2015regularized} (it is not itself an eigenvalue floor of the spectral density); neither carries an explicit dimension factor. A fully a priori, dimension-explicit decay constant is instead available from the Kreiss matrix theorem, $\|\mathcal{C}_{\mTheta}^{k}\|\le e\,(qd)\,K(\mathcal{C}_{\mTheta}/\rho)\,\rho^{k}$ with $K(\cdot)$ the Kreiss constant \citep{spijker1991}, where the linear factor $qd$ (the order of $\mathcal{C}_{\mTheta}$) is unavoidable in the worst case; the resolvent constant $G_\rho$ above is the sharper, model-adapted alternative.
\end{remark}

\section{Proof of Theorem~\ref{thm:pi_rate}}\label{sec:proof_pi_rate}

Throughout, $\|\cdot\|$ is the operator norm, $\|\cdot\|_F$ the Frobenius norm, $N=T-m$, and $C,c'>0$ denote constants depending only on the quantities listed in Theorem~\ref{thm:pi_rate}, whose values may change from occurrence to occurrence; the constants $c$, $m_1$, $T_0$ of the theorem are fixed by finitely many such requirements collected along the proof. We abbreviate $\Lambda:=\Lambda_\delta$ and use the standing conditions $T\ge T_0$, $m_1\le m\le T/2$ (so $N\ge T/2$) and $\Lambda\le cT$ without further comment. Because $B$, $p$, $q$, $d$, and the true-model quantities of Assumption~\ref{ass:consistency} are fixed, every quantity introduced in Lemmas~\ref{lem:uniform_geom} and~\ref{lem:filtered} ($\rho_\ast$, $\bar G$, $\bar L$, $\bar\Pi$, $\bar q$, and their companions) is a constant independent of $T$ and $m$ throughout this appendix.

 For $\vartheta\in K_B$ let $(\mPhi_{1:p}(\vartheta),\mTheta_{1:q}(\vartheta)):=\mathcal{T}(\vartheta)$, let $\mPi_j(\vartheta)$ be the induced infinite-VAR coefficients~\eqref{eq:mPirec}--\eqref{eq:mPirec2}, and, as in Appendix~\ref{sec:proof_trunc_bias},
\[
\mtx{A}(z;\vartheta):=\mI-\sum_{j\ge1}\mPi_j(\vartheta)z^j=\mTheta(z;\vartheta)^{-1}\mPhi(z;\vartheta).
\]
Write $\mPi^0_j:=\mPi_j(\vartheta^0)$, $\mtx{A}^0:=\mtx{A}(\cdot\,;\vartheta^0)$ and $\Delta_j(\vartheta):=\mPi_j(\vartheta)-\mPi^0_j$. Zero-padding does not change the lag polynomials, so $\mPi^0_{1:\infty}$ are exactly the infinite-VAR coefficients of the data-generating process, $\va_t=\mtx{A}^0(B)\vz_t$. Let $\mathcal{F}_t:=\sigma(\va_s:s\le t)$; under Assumption~\ref{ass:roots} the stationary solution is causal, so $\vz_t$ is $\mathcal{F}_t$-measurable and $\va_{t+1}$ is independent of $\mathcal{F}_t$.

\subsection{Preliminaries}\label{sec:pf_prelim}

\begin{lemma}[Uniform geometry on $K_B$]\label{lem:uniform_geom}
There exist $\rho_\ast\in(0,1)$ and $\bar G,\bar L<\infty$, depending only on $(B,p,q,d)$, such that for all $\vartheta,\vartheta'\in K_B$:
(i) $\|\mPi_j(\vartheta)\|\le\bar G\rho_\ast^{\,j}$ for every $j\ge1$, so $\sum_{j\ge1}\|\mPi_j(\vartheta)\|\le\bar\Pi:=\bar G\rho_\ast/(1-\rho_\ast)$;
(ii) $\sum_{j\ge1}\|\mPi_j(\vartheta)-\mPi_j(\vartheta')\|_F\le\bar L\,\|\vartheta-\vartheta'\|_2$.
\end{lemma}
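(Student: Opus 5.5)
The plan is a compactness argument built on the continuity of the PAR map, combined with a Cauchy estimate of the kind used in Step~1 of Appendix~\ref{sec:proof_trunc_bias}.

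\emph{Step 1: a uniform stability margin.} By Proposition~\ref{prop:par_bijection}, $\mathcal{T}$ is smooth, so it is continuous on $\RR^{n_{\mathrm{par}}}$. The image $\mathcal{T}(K_B)$ is therefore a compact subset of the open stable set. The map $(\mPhi,\mTheta)\mapsto\spr(\mathcal{C}_{\mTheta})$ is continuous, because eigenvalues depend continuously on the matrix entries. Hence it attains a maximum $\bar r<1$ on $\mathcal{T}(K_B)$.

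Choose any $\rho_\ast\in(\bar r,1)$ and set $R:=1/\rho_\ast$. For every $\vartheta\in K_B$, $\det\mTheta(z;\vartheta)$ has no zeros in the closed disk $|z|\le R'$, where $R'$ is any fixed radius with $R<R'<1/\bar r$. The function $(z,\vartheta)\mapsto\|\mI-\mtx{A}(z;\vartheta)\| = \|\mI-\mTheta(z;\vartheta)^{-1}\mPhi(z;\vartheta)\|$ is jointly continuous on the compact set $\{|z|=R\}\times K_B$. Its maximum $\bar G$ is therefore finite.

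\emph{Step 2: part (i).} The Cauchy estimate, exactly as in Step~1 of Appendix~\ref{sec:proof_trunc_bias}, gives $\|\mPi_j(\vartheta)\|\le\bar G\rho_\ast^{\,j}$. Summing the geometric series gives $\bar\Pi=\bar G\rho_\ast/(1-\rho_\ast)$.

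\emph{Step 3: part (ii).} We use the Cauchy integral representation
\[
\mPi_j(\vartheta)=\frac{1}{2\pi i}\oint_{|z|=R}\bigl(\mI-\mtx{A}(z;\vartheta)\bigr)z^{-j-1}\,dz .
\]
Work on the slightly smaller circle $|z|=R$ with $\rho_\ast$ replaced by $\rho_\ast'$, where $R<1/\rho_\ast'<R'$. This retains geometric decay at a rate strictly better than needed.

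The map $\vartheta\mapsto\mtx{A}(z;\vartheta)$ is smooth, since it is a composition of $\mathcal{T}$ with matrix inversion of $\mTheta(z;\vartheta)$, which is invertible on the circle. The parameter set $K_B$ is convex. So the mean value inequality gives
\[
\sup_{|z|=R}\|\mtx{A}(z;\vartheta)-\mtx{A}(z;\vartheta')\|_F\le D\|\vartheta-\vartheta'\|_2,
\]
where $D$ is the maximum of the norm of the $\vartheta$-derivative over the compact set $\{|z|=R\}\times K_B$. This maximum is finite by continuity of derivatives. The Cauchy estimate then yields $\|\mPi_j(\vartheta)-\mPi_j(\vartheta')\|_F\le D R^{-j}\|\vartheta-\vartheta'\|_2$. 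Summing over $j$ gives $\bar L=D/(R-1)$.

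\emph{Main obstacle.} The delicate point is Step~1. We must justify that the joint continuity of $\mTheta(z;\vartheta)^{-1}$, and of its $\vartheta$-derivative, holds uniformly on $\{|z|=R\}\times K_B$. This requires a uniform lower bound on $|\det\mTheta(z;\vartheta)|$ there. That bound follows from compactness together with the root-free property on $|z|\le R'$, which is guaranteed by $\spr(\mathcal{C}_{\mTheta})\le\bar r$ and the reciprocal-root relation.

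The constants depend only on $(B,p,q,d)$, because the whole argument uses only the fixed map $\mathcal{T}$ on $K_B$.
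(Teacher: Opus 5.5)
Your proposal is correct and follows the paper's proof essentially step for step. Compactness of $K_B$ under the continuous PAR map gives a uniform spectral-radius margin below one. The Cauchy coefficient estimate on the circle $|z|=1/\rho_\ast$ then gives (i), and the mean value inequality along segments of the convex ball, fed into the same Cauchy estimate, gives (ii) with $\bar L=D\rho_\ast/(1-\rho_\ast)$, which is the paper's constant with $D=L_0$. The only differences are cosmetic: you bound only the MA spectral radius, which is all the Cauchy estimate needs (the paper also takes the maximum over the AR side), and the aside about $\rho_\ast'$ in Step~3 is superfluous and never used.
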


\begin{proof}
$\mathcal{T}$ is a smooth diffeomorphism onto the open stable set (Proposition~\ref{prop:par_bijection}) and $\spr(\cdot)$ is continuous, so $\bar\rho:=\max_{\vartheta\in K_B}\max\{\spr(\mathcal{C}_{\mPhi(\vartheta)}),\spr(\mathcal{C}_{\mTheta(\vartheta)})\}$ is attained on the compact ball and is $<1$; set $\rho_\ast:=(1+\bar\rho)/2$. On the compact set $\{|z|\le1/\rho_\ast\}\times K_B$ the polynomial $\det\mTheta(z;\vartheta)$ is continuous and zero-free (its zeros have modulus at least $1/\bar\rho>1/\rho_\ast$), so $\|\mTheta(z;\vartheta)^{-1}\|$ is uniformly bounded there, $(z,\vartheta)\mapsto\mtx{A}(z;\vartheta)=\mTheta(z;\vartheta)^{-1}\mPhi(z;\vartheta)$ is $C^1$, and
\[
\bar G:=\max_{|z|=1/\rho_\ast,\,\vartheta\in K_B}\bigl\|\mI-\mtx{A}(z;\vartheta)\bigr\|,
\qquad
L_0:=\max_{|z|=1/\rho_\ast,\,\vartheta\in K_B}\bigl\|\partial_\vartheta\mtx{A}(z;\vartheta)\bigr\|
\]
are finite, $\partial_\vartheta\mtx{A}$ denoting the differential of $\vartheta\mapsto\mtx{A}(z;\vartheta)$ as a linear map from $(\RR^{n_{\mathrm{par}}},\|\cdot\|_2)$ to $(\C^{d\times d},\|\cdot\|_F)$. The Cauchy coefficient estimate of Step~1 of Appendix~\ref{sec:proof_trunc_bias}, on the circle $|z|=1/\rho_\ast$, gives (i); applied to $\mtx{A}(\cdot\,;\vartheta)-\mtx{A}(\cdot\,;\vartheta')$ together with the mean value inequality along the segment $[\vartheta,\vartheta']$ in the convex ball $K_B$, it gives $\|\mPi_j(\vartheta)-\mPi_j(\vartheta')\|_F\le\rho_\ast^{\,j}L_0\|\vartheta-\vartheta'\|_2$, and summing the geometric series yields (ii) with $\bar L:=L_0\rho_\ast/(1-\rho_\ast)$.
\end{proof}

\begin{lemma}[Second moments of filtered processes]\label{lem:filtered}
Set
\[
c_0:=\frac{\lambda_{\min}(\mSigma)}{(1+\bar\Pi)^2},\qquad
C_0:=\frac{\|\mSigma\|}{\mu_{\min}},
\]
with $\mu_{\min}$ the on-circle constant of Proposition~\ref{prop:trunc_bias} for the true model. Let $\{\mtx{B}_j\}_{j\ge1}$ satisfy $\sum_j\|\mtx{B}_j\|<\infty$ and $\vs_t:=\sum_{j\ge1}\mtx{B}_j\vz_{t-j}$. Then
(a) $c_0\sum_j\|\mtx{B}_j\|_F^2\le\E\|\vs_t\|^2\le C_0\sum_j\|\mtx{B}_j\|_F^2$; and
(b) for any $n,t_1$, the covariance matrix of the stacked Gaussian vector $(\vs_{t_1},\dots,\vs_{t_1+n-1})$ has operator norm at most $C_0\bigl(\sum_j\|\mtx{B}_j\|\bigr)^2$.
\end{lemma}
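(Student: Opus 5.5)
We would prove both parts in the spectral domain, using the same representation of the true process as in Appendix~\ref{sec:proof_trunc_bias}. By Assumption~\ref{ass:consistency}(i) the data follow $\va_t=\mtx{A}^0(B)\vz_t$. The process $\vz_t$ therefore has matrix spectral density
\[
\mtx{f}(\omega)=\tfrac{1}{2\pi}\,\mtx{A}^0(e^{-i\omega})^{-1}\mSigma\,\mtx{A}^0(e^{-i\omega})^{-*},
\]
with autocovariances $\mGamma_l=\int_{-\pi}^{\pi}e^{il\omega}\mtx{f}(\omega)\,d\omega$.

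The first step is to show that $\vs_t$ is a well-defined stationary filtered process. Because $\sum_j\|\mtx{B}_j\|<\infty$ and $\E\|\vz_t\|^2<\infty$ is constant in $t$, the series $\sum_j\mtx{B}_j\vz_{t-j}$ converges in $L^2$. Writing $\mtx{B}(z):=\sum_j\mtx{B}_jz^j$, the process $\vs_t$ is then stationary with spectral density
\[
\mtx{g}(\omega)=\mtx{B}(e^{-i\omega})\,\mtx{f}(\omega)\,\mtx{B}(e^{-i\omega})^{*}.
\]
This follows from the standard linear-filter theorem, applied to the absolutely summable filter. We expect this justification, checking that the formula survives an infinite filter, to be the only delicate point. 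It is handled by computing with the truncated filters $\sum_{j\le J}$ and passing to the limit, using dominated convergence and the boundedness of $\mtx{f}$ established next.

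Next we bound the density $\mtx{f}$ above and below. On the unit circle, the definition of $\mu_{\min}$ gives $\|\mtx{A}^0(e^{-i\omega})^{-1}\|^2\le\mu_{\min}^{-1}$, hence
\[
\mtx{f}\preceq\frac{\|\mSigma\|}{2\pi\mu_{\min}}\mI.
\]
For the lower bound, $\|\mtx{A}^0(e^{-i\omega})\|\le1+\sum_j\|\mPi^0_j\|\le1+\bar\Pi$, because $\vartheta^0\in K_B$ and Lemma~\ref{lem:uniform_geom}(i) applies. For any unit vector $u$, set $w:=\mtx{A}^{0,-*}u$. Then $\|w\|\ge1/\|\mtx{A}^0\|$, so
\[
u^*\mtx{f}u=\tfrac{1}{2\pi}\,w^*\mSigma w\ge\frac{\lambda_{\min}(\mSigma)}{2\pi(1+\bar\Pi)^2}.
\]
Thus $\frac{c_0}{2\pi}\mI\preceq\mtx{f}\preceq\frac{C_0}{2\pi}\mI$.

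For part~(a), write $\E\|\vs_t\|^2=\int_{-\pi}^{\pi}\tr\mtx{g}(\omega)\,d\omega=\int\tr\bigl(\mtx{B}\mtx{f}\mtx{B}^*\bigr)\,d\omega$. The sandwich on $\mtx{f}$ places this integral between $\frac{c_0}{2\pi}\int\|\mtx{B}(e^{-i\omega})\|_F^2\,d\omega$ and $\frac{C_0}{2\pi}\int\|\mtx{B}(e^{-i\omega})\|_F^2\,d\omega$. Parseval's identity gives $\frac{1}{2\pi}\int\|\mtx{B}(e^{-i\omega})\|_F^2\,d\omega=\sum_j\|\mtx{B}_j\|_F^2$, which yields (a).

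For part~(b), take a stacked vector $v=(v_0,\dots,v_{n-1})$ and let $\mtx{\Gamma}^{s}$ denote the covariance of the stacked vector $(\vs_{t_1},\dots,\vs_{t_1+n-1})$. The block-Toeplitz quadratic form satisfies
\[
v^*\mtx{\Gamma}^{s}v=\int_{-\pi}^{\pi}\tilde v(\omega)^*\,\mtx{g}(\omega)\,\tilde v(\omega)\,d\omega,
\qquad \tilde v(\omega):=\sum_k e^{ik\omega}v_k,
\]
up to the sign convention in the exponent, which does not affect norms. This is at most $\sup_\omega\|\mtx{g}(\omega)\|\int\|\tilde v\|^2\,d\omega=2\pi\sup_\omega\|\mtx{g}(\omega)\|\,\|v\|^2$. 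Since $\|\mtx{B}(e^{-i\omega})\|\le\sum_j\|\mtx{B}_j\|$, we have $\|\mtx{g}(\omega)\|\le\bigl(\sum_j\|\mtx{B}_j\|\bigr)^2\frac{C_0}{2\pi}$. The $2\pi$ cancels, giving operator norm at most $C_0\bigl(\sum_j\|\mtx{B}_j\|\bigr)^2$.
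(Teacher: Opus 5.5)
Your proposal is correct and follows essentially the paper's route. You establish the two-sided bound $\frac{c_0}{2\pi}\mI\preceq\mtx{f}\preceq\frac{C_0}{2\pi}\mI$ from $\|\mtx{A}^0\|\le1+\bar\Pi$ and $\mu_{\min}$, apply Parseval for (a), and bound the block-Toeplitz covariance by the supremum of the filtered spectral density for (b). The only difference is that you derive the Toeplitz operator-norm bound directly from the quadratic-form integral, whereas the paper invokes the principal-submatrix argument and cites \citet[Proposition~2.3]{basu2015regularized}.
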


\begin{proof}
The spectral density of $\vz$ is $\mtx{f}(\omega)=\frac{1}{2\pi}\mtx{A}^0(e^{-i\omega})^{-1}\mSigma\,\mtx{A}^0(e^{-i\omega})^{-\ast}$ (Appendix~\ref{sec:proof_trunc_bias}, Step~3), with on-circle eigenvalue bounds $\lambda_{\min}(\mtx{f})\ge\lambda_{\min}(\mSigma)/\bigl(2\pi\|\mtx{A}^0\|^2\bigr)\ge c_0/(2\pi)$, since $\max_{|z|=1}\|\mtx{A}^0(z)\|\le1+\bar\Pi$ by Lemma~\ref{lem:uniform_geom}(i), and $\lambda_{\max}(\mtx{f})\le\|\mSigma\|/(2\pi\mu_{\min})=C_0/(2\pi)$, the bound of \citet{basu2015regularized}. The filtered process is stationary Gaussian with spectral density $\mtx{f}_{\vs}=\widetilde{\mtx{B}}\,\mtx{f}\,\widetilde{\mtx{B}}^\ast$, where $\widetilde{\mtx{B}}(\omega):=\sum_j\mtx{B}_je^{-ij\omega}$. Part~(a) is Parseval's identity $\frac{1}{2\pi}\int_{-\pi}^{\pi}\|\widetilde{\mtx{B}}\|_F^2\,d\omega=\sum_j\|\mtx{B}_j\|_F^2$ combined with the two-sided eigenvalue bounds on $\mtx{f}$ inside $\E\|\vs_t\|^2=\int_{-\pi}^{\pi}\tr\mtx{f}_{\vs}\,d\omega$. Part~(b): the stacked covariance is a principal submatrix of the bi-infinite block-Toeplitz covariance of $\vs$, whose operator norm is at most $2\pi\,\mathrm{ess\,sup}_\omega\lambda_{\max}(\mtx{f}_{\vs})\le\bigl(\sum_j\|\mtx{B}_j\|\bigr)^2C_0$ \citep[Proposition~2.3]{basu2015regularized}.
\end{proof}

We use two standard probabilistic facts. First \citep[Lemma~1]{laurent2000adaptive}: if $Q=\sum_i\lambda_i\xi_i^2$ with fixed $\lambda_i\ge0$ and $\xi_i$ i.i.d.\ $N(0,1)$, then, writing $\Sigma_1:=\sum_i\lambda_i$, $\Sigma_2:=\sum_i\lambda_i^2\le\lambda_{\max}\Sigma_1$,
\begin{equation}\label{eq:LM}
\PP\bigl(Q\ge\Sigma_1+2\sqrt{\Sigma_2x}+2\lambda_{\max}x\bigr)\le e^{-x},
\qquad
\PP\bigl(Q\le\Sigma_1-2\sqrt{\Sigma_2x}\bigr)\le e^{-x},\qquad x>0.
\end{equation}
Second, a scalar self-normalized martingale inequality in the spirit of \citet[Theorem~1]{abbasiyadkori2011} (see also \citealp{penalaishao2009}): if $\xi_t$ is $\mathcal{F}_{t+1}$-measurable with $\E[e^{\lambda\xi_t}\mid\mathcal{F}_t]\le e^{\lambda^2s_t^2/2}$ for all $\lambda\in\RR$ and $\mathcal{F}_t$-measurable $s_t$, then for every $\delta'\in(0,1)$, with probability at least $1-\delta'$,
\begin{equation}\label{eq:selfnorm}
\Bigl|\sum_{t}\xi_t\Bigr|\;\le\;\sqrt{2\,(1+V)\,\log\bigl(\sqrt{1+V}/\delta'\bigr)},
\qquad V:=\sum_{t}s_t^2,
\end{equation}
the sums running over any fixed index window. This scalar specialization of the method of mixtures follows by integrating the exponential supermartingale $\exp(\lambda\sum_t\xi_t-\lambda^2V/2)$ against a standard Gaussian in $\lambda$ (the unit mixing variance is the source of the two $1+V$ factors) and applying Markov's inequality to the resulting supermartingale; see \citet{abbasiyadkori2011} and \citet{penalaishao2009}.

\subsection{Decomposition and high-probability events}\label{sec:pf_events}

For $t=m,\dots,T-1$ define
\[
\vv_t(\vartheta):=\sum_{j=1}^{m}\bigl(\mPi^0_j-\mPi_j(\vartheta)\bigr)\vz_{t+1-j},
\qquad
\vw_t:=\sum_{j>m}\mPi^0_j\vz_{t+1-j},
\]
so that the residual $\vr_t(\vartheta)=\vz_{t+1}-\sum_{j\le m}\mPi_j(\vartheta)\vz_{t+1-j}$ appearing in $L_m$ and $\mR$ decomposes exactly as $\vr_t(\vartheta)=\va_{t+1}+\vv_t(\vartheta)+\vw_t$, with $\vv_t(\vartheta^0)=\vct{0}$. Set
\[
M(\vartheta):=\sum_t\vv_t(\vartheta)^{\!\top}\va_{t+1},\quad
\mtx{C}_a(\vartheta):=\sum_t\vv_t(\vartheta)\va_{t+1}^{\top},\quad
W(\vartheta):=\sum_t\vv_t(\vartheta)^{\!\top}\vw_t,
\]
\[
Q_v(\vartheta):=\sum_t\|\vv_t(\vartheta)\|^2,\quad
Q_w:=\sum_t\|\vw_t\|^2,\quad
Q_a:=\sum_t\|\va_{t+1}\|^2,
\]
\[
q_m(\vartheta):=\E\|\vv_t(\vartheta)\|^2,\qquad
q_w:=\E\|\vw_t\|^2.
\]
Expanding the squares and outer products yields the exact identities
\begin{align}
L_m(\vartheta)-L_m(\vartheta^0)&=2M(\vartheta)+2W(\vartheta)+Q_v(\vartheta),\label{eq:excess_id}\\
\mR(\vartheta)-\mR(\vartheta^0)&=\mtx{V}(\vartheta)+\mtx{C}(\vartheta)+\mtx{C}(\vartheta)^{\top},
\qquad
\mtx{V}:=\sum_t\vv_t\vv_t^{\top},\quad
\mtx{C}:=\mtx{C}_a+\sum_t\vv_t\vw_t^{\top}.\label{eq:R_id}
\end{align}
Note $M=\tr\mtx{C}_a$ (so $|M|\le\sqrt d\,\|\mtx{C}_a\|_F$), $\tr\mtx{V}=Q_v$, and, pathwise by Cauchy--Schwarz, $|W|\le\sqrt{Q_wQ_v}$ and $\|\sum_t\vv_t\vw_t^{\top}\|_F\le\sqrt{Q_vQ_w}$. By Lemmas~\ref{lem:uniform_geom} and~\ref{lem:filtered}, applied with $\mtx{B}_j=(\mPi^0_j-\mPi_j(\vartheta))\mathbf{1}\{j\le m\}$ and $\mtx{B}_j=\mPi^0_j\mathbf{1}\{j>m\}$ respectively, we have for all $\vartheta\in K_B$
\begin{gather}
c_0\sum_{j\le m}\|\Delta_j(\vartheta)\|_F^2\;\le\;q_m(\vartheta)\;\le\;C_0\sum_{j\le m}\|\Delta_j(\vartheta)\|_F^2\;\le\;\bar q:=\frac{4dC_0\bar G^2}{1-\rho_\ast^2},\label{eq:qm_sandwich}\\
\lambda_{\max}\Bigl(\mathrm{Cov}\bigl(\vv_m(\vartheta),\dots,\vv_{T-1}(\vartheta)\bigr)\Bigr)\;\le\;C_0\Bigl(\sum_{j\le m}\|\Delta_j\|\Bigr)^{\!2}\;\le\;4C_0\bar\Pi^2\;=:\;\bar\lambda_v,\label{eq:v_spec}\\
q_w\le C\rho_\ast^{2m},\qquad
\lambda_{\max}\Bigl(\mathrm{Cov}\bigl(\vw_m,\dots,\vw_{T-1}\bigr)\Bigr)\le C\rho_\ast^{2m},\label{eq:w_bounds}
\end{gather}
using $\|\Delta_j\|\le\|\Delta_j\|_F\le\sqrt d\,\|\Delta_j\|\le2\sqrt d\,\bar G\rho_\ast^{\,j}$ and $\sum_{j\ge1}\|\Delta_j\|\le2\bar\Pi$ in~\eqref{eq:v_spec}.

Fix a $\tau_T$-net $\mathcal{N}\subset K_B$ with $\tau_T:=T^{-4}$ and $|\mathcal{N}|\le(3B/\tau_T)^{n_{\mathrm{par}}}$, so that $\log|\mathcal{N}|\le5\,n_{\mathrm{par}}\log T$ for $T\ge T_0$, and set $x_\ast:=\log\bigl(10\,d^2\,|\mathcal{N}|\,T/\delta\bigr)\le C\Lambda$. Let $H:=H_1\cap\dots\cap H_5$, where each event below has probability at least $1-\delta/5$ (so $\PP(H)\ge1-\delta$):
\begin{itemize}
\item[$H_1$:] $\max_{1\le t\le T}\|\vz_t\|^2\vee\max_{m<t\le T}\|\va_t\|^2\le Z^2:=C\,(1+\log(T/\delta))$. \emph{[Each $\|\vz_t\|^2$ (resp.\ $\|\va_t\|^2$) is a Gaussian quadratic form with $\Sigma_1\le dC_0$ (resp.\ $\tr\mSigma$); apply the upper tail of~\eqref{eq:LM} with $x=\log(10T/\delta)$ and a union bound over the at most $2T$ vectors.]}
\item[$H_2$:] for all $\vartheta\in\mathcal{N}$: $\tfrac12\,Nq_m(\vartheta)-C\Lambda\le Q_v(\vartheta)\le2\,Nq_m(\vartheta)+C\Lambda$. \emph{[\eqref{eq:LM} for the Gaussian quadratic form $Q_v(\vartheta)$, with $\Sigma_1=Nq_m(\vartheta)$ and $\lambda_{\max}\le\bar\lambda_v$ by~\eqref{eq:v_spec}, at $x=x_\ast\le C\Lambda$; absorb the deviation via $2\sqrt{ab}\le\tfrac12a+2b$; union over $\mathcal{N}$.]}
\item[$H_3$:] $Q_w\le C\rho_\ast^{2m}N$ and $Q_a\le2N\tr\mSigma$. \emph{[Upper tail of~\eqref{eq:LM} with~\eqref{eq:w_bounds}, $x=x_\ast\le C\Lambda\le2CcN$.]}
\item[$H_4$:] for all $\vartheta\in\mathcal{N}$ and $1\le k,l\le d$:
\[
\bigl|(\mtx{C}_a(\vartheta))_{kl}\bigr|\le\sqrt{2\bigl(1+\|\mSigma\|Q_v(\vartheta)\bigr)\bigl(\log\sqrt{1+\|\mSigma\|Q_v(\vartheta)}+x_\ast\bigr)}.
\]
\emph{[$(\mtx{C}_a)_{kl}=\sum_t(\vv_t)_k(\va_{t+1})_l$; conditionally on $\mathcal{F}_t$ the summand is exactly $N\bigl(0,(\vv_t)_k^2\mSigma_{ll}\bigr)$ with $(\vv_t)_k$ $\mathcal{F}_t$-measurable, so~\eqref{eq:selfnorm} applies with $V=\mSigma_{ll}\sum_t(\vv_t)_k^2\le\|\mSigma\|Q_v(\vartheta)$; union over the $d^2|\mathcal{N}|$ pairs.]}
\item[$H_5$:] $\bigl\|\sum_t\va_{t+1}\va_{t+1}^{\top}-N\mSigma\bigr\|_F\le C\sqrt{N\Lambda}$. \emph{[Entrywise by~\eqref{eq:LM} at $x=x_\ast\le C\Lambda\le CN$. The diagonal entries are Gaussian quadratic forms; for $k\ne l$, polarization writes the summand as a difference of two such forms, $(\va_{t+1})_k(\va_{t+1})_l=\tfrac14\bigl[((\va_{t+1})_k+(\va_{t+1})_l)^2-((\va_{t+1})_k-(\va_{t+1})_l)^2\bigr]$, and~\eqref{eq:LM} applies to each. Every entry then deviates by at most $C(\sqrt{Nx_\ast}+x_\ast)\le C\sqrt{N\Lambda}$, and a union bound over the $d^2$ entries with $\|\cdot\|_F\le d\max_{k,l}|\cdot_{kl}|$ gives the event. $H_5$ is not unioned over $\mathcal{N}$, so a tail parameter $C\log(d/\delta)$ would suffice; $x_\ast$ is kept for uniform bookkeeping.]}
\end{itemize}

\paragraph{From the net to $K_B$.}
Let $\vartheta\in K_B$ and $\vartheta'\in\mathcal{N}$ with $\|\vartheta-\vartheta'\|_2\le\tau_T$. On $H_1\cap H_3$, every functional appearing in $H_2$--$H_4$ (namely $M$, $\mtx{C}_a$, $Q_v$, $W$, $\sum_t\vv_t\vw_t^{\top}$ and $Nq_m$) changes by at most $1$ between $\vartheta$ and $\vartheta'$, for $T\ge T_0$. One representative case: by Lemma~\ref{lem:uniform_geom}(ii), $\|\vv_t(\vartheta)-\vv_t(\vartheta')\|\le\bar L\,\tau_TZ$ and $\|\vv_t(\vartheta)\|\le2\bar\Pi Z$ on $H_1$, so, explicitly, $|Q_v(\vartheta)-Q_v(\vartheta')|\le\sum_t\bigl(\|\vv_t(\vartheta)\|+\|\vv_t(\vartheta')\|\bigr)\|\vv_t(\vartheta)-\vv_t(\vartheta')\|\le CTZ^2\bar\Pi\bar L\,\tau_T\le C\Lambda\,T^{-3}\le T^{-1}\le1$ for $T\ge T_0$, using $Z^2\le C\Lambda$ from $H_1$. The remaining differences are bounded the same way ($W$ and $\sum_t\vv_t\vw_t^{\top}$ via Cauchy--Schwarz against $\sqrt{Q_w}\le C\sqrt T$ on $H_3$). For $H_4$ both sides of the inequality move: the left side changes by at most $1$ as above, while on the right $Q_v$ changes by at most $1$ and the map $x\mapsto\sqrt{2(1+\|\mSigma\|x)(\log\sqrt{1+\|\mSigma\|x}+x_\ast)}$ is increasing in $x\ge0$, so, writing $F$ for that map, $|(\mtx{C}_a(\vartheta))_{kl}|\le|(\mtx{C}_a(\vartheta'))_{kl}|+1\le F\bigl(Q_v(\vartheta')\bigr)+1\le F\bigl(Q_v(\vartheta)+1\bigr)+1$ on $H_4$, and the transition from $H_4$ to (U2) absorbs both the shift by $1$ and the additive constant. Finally, $N|q_m(\vartheta)-q_m(\vartheta')|\le N\cdot2\sqrt{\bar qC_0}\,\bar L\,\tau_T\le CT^{-3}\le1$ by Lemma~\ref{lem:filtered}(a). Consequently, on $H$, \emph{for all} $\vartheta\in K_B$:
\begin{align}
&\text{(U1)}\quad \tfrac12\,Nq_m(\vartheta)-C\Lambda\;\le\;Q_v(\vartheta)\;\le\;2Nq_m(\vartheta)+C\Lambda\;\le\;2N\bar q+C\Lambda\;\le\;CN;\nonumber\\
&\text{(U2)}\quad \|\mtx{C}_a(\vartheta)\|_F\le C\sqrt{\bigl(1+Q_v(\vartheta)\bigr)\Lambda}+1,
\qquad |M(\vartheta)|\le\sqrt d\,\|\mtx{C}_a(\vartheta)\|_F;\nonumber\\
&\text{(U3)}\quad Q_w\le C\rho_\ast^{2m}N;\qquad
\text{(U4)}\quad \|\mtx{C}(\vartheta)\|_F\le\|\mtx{C}_a(\vartheta)\|_F+\sqrt{Q_v(\vartheta)\,Q_w}.\nonumber
\end{align}
For (U2) we used that, on $H_2$, $\|\mSigma\|Q_v\le CT$ at net points, so the logarithm in $H_4$ is at most $C\log T+x_\ast\le C\Lambda$, and $\|\mtx{C}_a\|_F\le d\max_{kl}|(\mtx{C}_a)_{kl}|$. Finally, the penalty in~\eqref{eq:rls_loss} and~\eqref{eq:marginalized_MAP} satisfies $0\le\mathrm{pen}(\vartheta)\le P_B:=B^2/(2\sigma^2_{\min})$ on $K_B$, where $\sigma^2_{\min}$ is the variance floor of Assumption~\ref{ass:consistency}(iii), since the group sums of squares add up to $\|\vartheta\|_2^2\le B^2$. This bound is the only place the group variances enter the proof, and it holds pathwise for any data-dependent selection respecting the floor: the events $H_1,\ldots,H_5$ involve the data alone, so the conclusion covers cross-validated variances without any modification of the argument.

\subsection{Proof for the RLS estimator}\label{sec:pf_rls}

Since $\hat\vartheta\in K_B$ has $\mathcal{L}_m^{\mathrm{RLS}}(\hat\vartheta)\le\min_{K_B}\mathcal{L}_m^{\mathrm{RLS}}+\eta\le\mathcal{L}_m^{\mathrm{RLS}}(\vartheta^0)+\eta$ (as $\vartheta^0\in K_B$), and $\mathcal{L}_m^{\mathrm{RLS}}=\tfrac12L_m+\mathrm{pen}$, we get $L_m(\hat\vartheta)-L_m(\vartheta^0)\le2\bigl(\mathrm{pen}(\vartheta^0)-\mathrm{pen}(\hat\vartheta)\bigr)+2\eta\le2P_B+2\eta$. Abbreviating $\hat Q:=Q_v(\hat\vartheta)$ and using~\eqref{eq:excess_id}, $|W|\le\sqrt{Q_wQ_v}$, (U2) and (U3), on $H$:
\[
\hat Q\;\le\;2|M(\hat\vartheta)|+2|W(\hat\vartheta)|+2P_B+2\eta
\;\le\;2\sqrt d\Bigl(C\sqrt{(1+\hat Q)\Lambda}+1\Bigr)+2\sqrt{C\rho_\ast^{2m}N\,\hat Q}+2P_B+2\eta.
\]
The two square roots absorb into the left side after rescaling by their constants: $\sqrt{uv}\le\tfrac12(u+v)$ with $(u,v)=\bigl(\hat Q/8,\;8C^2\Lambda\bigr)$ and $(u,v)=\bigl(\hat Q/8,\;8C^2\rho_\ast^{2m}N\bigr)$ gives
\[
2\sqrt d\,C\sqrt{(1+\hat Q)\Lambda}\;\le\;\tfrac18\hat Q+C'\Lambda+C',
\qquad
2\sqrt{C\rho_\ast^{2m}N\,\hat Q}\;\le\;\tfrac18\hat Q+C'\rho_\ast^{2m}N,
\]
the first after $\sqrt{1+\hat Q}\le1+\sqrt{\hat Q}$, so the $\hat Q$-contributions total at most $\tfrac14\hat Q$. Rearranging,
\begin{equation}\label{eq:Qhat_bound}
\hat Q\;\le\;C\bigl(\Lambda+\rho_\ast^{2m}N+\eta\bigr),
\end{equation}
after absorbing $P_B+1\le C\Lambda$. By (U1), $q_m(\hat\vartheta)\le2(\hat Q+C\Lambda)/N\le C(\Lambda/N+\rho_\ast^{2m}+\eta/N)$, so by~\eqref{eq:qm_sandwich} and $N\ge T/2$,
\[
\sum_{j\le m}\|\Delta_j(\hat\vartheta)\|_F^2\;\le\;q_m(\hat\vartheta)/c_0\;\le\;C\Bigl(\frac{\Lambda}{T}+\rho_\ast^{2m}+\frac{\eta}{T}\Bigr),
\]
\[
\sum_{j>m}\|\Delta_j(\hat\vartheta)\|_F^2\;\le\;d\sum_{j>m}\bigl(2\bar G\rho_\ast^{\,j}\bigr)^2\;\le\;C\rho_\ast^{2m},
\]
the tail using $\|\mtx{M}\|_F\le\sqrt d\,\|\mtx{M}\|$ and Lemma~\ref{lem:uniform_geom}(i) for both $\hat\vartheta$ and $\vartheta^0$. Summing the two displays gives~\eqref{eq:pi_rate}. \hfill$\square$

\subsection{Proof for the JMAP estimator}\label{sec:pf_jmap}

Write $\mtx{S}^0:=\mPsi_0+\mR(\vartheta^0)$ and $\hat{\mtx{S}}:=\mPsi_0+\mR(\hat\vartheta)$; both are positive definite since $\mPsi_0\succ\mz$. The estimate has $\mathcal{L}_m^{\mathrm{marg}}(\hat\vartheta)\le\min_{K_B}\mathcal{L}_m^{\mathrm{marg}}+\eta\le\mathcal{L}_m^{\mathrm{marg}}(\vartheta^0)+\eta$; with $0\le\mathrm{pen}\le P_B$, dividing by $(\nu_0+N)/2\ge N/2$ gives
\begin{equation}\label{eq:jmap_basic}
\log\det\hat{\mtx{S}}-\log\det\mtx{S}^0\;\le\;\frac{2P_B+2\eta}{N}.
\end{equation}
The argument parallels Appendix~\ref{sec:pf_rls}: the basic inequality~\eqref{eq:jmap_basic} caps how much the fitted log-determinant may exceed the reference one; localizing around $\mtx{S}^0$ splits the difference into a positive-semidefinite \emph{signal} term, whose trace is proportional to the prediction-error mass $\hat Q:=Q_v(\hat\vartheta)$, and a \emph{cross} term that the events of Appendix~\ref{sec:pf_events} render negligible in operator norm; the resulting scalar inequality in $\hat Q$ is then closed exactly as before.

\emph{Spectral bounds for $\mtx{S}^0$.} Since $\mR(\vartheta^0)=\sum_t(\va_{t+1}+\vw_t)(\va_{t+1}+\vw_t)^{\top}$ and $\sum_t\vw_t\vw_t^{\top}\succeq\mz$, we have, on $H_3\cap H_5$, using $\|\sum_t\va_{t+1}\va_{t+1}^{\top}-N\mSigma\|\le\|\cdot\|_F\le C\sqrt{N\Lambda}$ from $H_5$ and the bilinear Cauchy--Schwarz bound $\|\sum_t\va_{t+1}\vw_t^{\top}\|\le\sqrt{Q_a\,Q_w}\le CN\rho_\ast^{m}$ (for unit vectors $u,v$: $u^{\top}\bigl(\sum_t\va_{t+1}\vw_t^{\top}\bigr)v\le\sqrt{\sum_t(u^{\top}\va_{t+1})^2}\sqrt{\sum_t(\vw_t^{\top}v)^2}\le\sqrt{Q_aQ_w}$, with $Q_a,Q_w$ bounded on $H_3$), and noting that the cross term appears together with its transpose, hence with a factor $2$,
\[
\lambda_{\min}(\mtx{S}^0)\ge N\lambda_{\min}(\mSigma)-C\sqrt{N\Lambda}-2\sqrt{Q_aQ_w}\;\ge\;N\lambda_{\min}(\mSigma)-C\sqrt{N\Lambda}-CN\rho_\ast^{m}\ge N c_\Sigma,
\]
\[
\lambda_{\max}(\mtx{S}^0)\le \|\mPsi_0\|+N\|\mSigma\|+C\sqrt{N\Lambda}+CN\rho_\ast^{m}+Q_w\le NC_\Sigma,
\]
with $c_\Sigma:=\lambda_{\min}(\mSigma)/2$ and $C_\Sigma:=3\|\mSigma\|$, for $T\ge T_0$, $m\ge m_1$ and $c$ small enough that $C\sqrt{\Lambda/N}\le C\sqrt{2c}\le\lambda_{\min}(\mSigma)/4$ and $C\rho_\ast^{m_1}\le\lambda_{\min}(\mSigma)/4$.

\emph{Log-determinant localization.} By~\eqref{eq:R_id}, $\hat{\mtx{S}}=\mtx{S}^0+\mtx{V}(\hat\vartheta)+\mtx{C}(\hat\vartheta)+\mtx{C}(\hat\vartheta)^{\top}$, so factoring out $\mtx{S}^{0\,1/2}$ on both sides,
\[
\log\det\hat{\mtx{S}}-\log\det\mtx{S}^0=\log\det(\mI+\mtx{E}_V+\mtx{E}_C),
\]
\[
\mtx{E}_V:=\mtx{S}^{0\,-1/2}\mtx{V}(\hat\vartheta)\mtx{S}^{0\,-1/2},
\qquad
\mtx{E}_C:=\mtx{S}^{0\,-1/2}\bigl(\mtx{C}(\hat\vartheta)+\mtx{C}(\hat\vartheta)^{\top}\bigr)\mtx{S}^{0\,-1/2}.
\]
The two normalized perturbations play different roles. The \emph{signal} term $\mtx{E}_V\succeq\mz$ can only increase the log-determinant, and its trace measures the prediction-error mass: since $\tr(\mtx{S}^{0\,-1}\mtx{V})\in[\tr\mtx{V}/\lambda_{\max}(\mtx{S}^0),\ \tr\mtx{V}/\lambda_{\min}(\mtx{S}^0)]$ for $\mtx{V}\succeq\mz$ and $\tr\mtx{V}=\hat Q$,
\[
\frac{\hat Q}{NC_\Sigma}\;\le\;\tr\mtx{E}_V\;\le\;\frac{\hat Q}{Nc_\Sigma}\;\le\;C,
\]
the last bound because $\hat Q\le CN$ by (U1). The \emph{cross} term $\mtx{E}_C$ collects the martingale interaction of $\vv$ with the innovations and its interaction with the truncation tail; on the event it is small in operator norm: by (U2) and (U4), $\|\mtx{C}(\hat\vartheta)\|_F\le C\sqrt{(1+\hat Q)\Lambda}+1+\sqrt{\hat Q\,Q_w}$, which with the crude bounds $\hat Q\le CN$ and $Q_w\le C\rho_\ast^{2m}N$ (U3) gives $\|\mtx{C}(\hat\vartheta)\|_F\le C(\sqrt{N\Lambda}+N\rho_\ast^{m})$, whence
\[
e:=\|\mtx{E}_C\|\;\le\;\frac{\|\mtx{C}(\hat\vartheta)+\mtx{C}(\hat\vartheta)^{\top}\|}{\lambda_{\min}(\mtx{S}^0)}\;\le\;\frac{2\|\mtx{C}(\hat\vartheta)\|_F}{Nc_\Sigma}\;\le\;C\bigl(\sqrt{\Lambda/N}+\rho_\ast^{m}\bigr)\;\le\;\tfrac12
\]
for $c$ small, $m\ge m_1$, $T\ge T_0$ (as before, $\sqrt{\Lambda/N}\le\sqrt{2c}$). The mechanism is now visible: \eqref{eq:jmap_basic} allows the log-determinant to grow by at most $2P_B/N$, while the signal term forces growth of order $\tr\mtx{E}_V\gtrsim\hat Q/N$ that the small cross term cannot cancel. We make this quantitative in three elementary steps. \emph{Step 1} (absorb the cross term): $\mtx{E}_C\succeq-e\mI$, so by Loewner monotonicity of $\log\det$ over positive-definite matrices,
\[
\log\det(\mI+\mtx{E}_V+\mtx{E}_C)\;\ge\;\log\det\bigl((1-e)\mI+\mtx{E}_V\bigr)\;=\;\sum_{i=1}^{d}\log(1-e+\lambda_i),
\]
where $\lambda_1,\dots,\lambda_d\ge0$ are the eigenvalues of $\mtx{E}_V$ (the shifted matrix is positive definite since $e\le\tfrac12$). \emph{Step 2} (separate the shift): $1-e+\lambda_i\ge(1-e)(1+\lambda_i)$ (expanding, this is just $e\lambda_i\ge0$) and $\log(1-e)\ge-2e$ on $[0,\tfrac12]$, so
\[
\sum_{i=1}^{d}\log(1-e+\lambda_i)\;\ge\;\sum_{i=1}^{d}\log(1+\lambda_i)\;-\;2de.
\]
\emph{Step 3} (curvature): by concavity, $\log(1+x)\ge x\log(1+b)/b\ge x/(1+b)$ for $0\le x\le b$; taking $b=\max_i\lambda_i\le\tr\mtx{E}_V$,
\[
\sum_{i=1}^{d}\log(1+\lambda_i)\;\ge\;\frac{\sum_i\lambda_i}{1+\max_i\lambda_i}\;\ge\;\frac{\tr\mtx{E}_V}{1+\tr\mtx{E}_V}\;\ge\;\frac{\tr\mtx{E}_V}{C}\;\ge\;\frac{\hat Q}{CN},
\]
the penultimate inequality because $\tr\mtx{E}_V\le C$ as shown above. Combining the three steps with~\eqref{eq:jmap_basic} and re-inserting the sharp form of $\|\mtx{C}(\hat\vartheta)\|_F$ into $2de\le4d\|\mtx{C}(\hat\vartheta)\|_F/(Nc_\Sigma)$,
\[
\frac{\hat Q}{CN}\;\le\;\frac{2P_B+2\eta}{N}+2de
\;\le\;\frac{C}{N}\Bigl(P_B+\eta+\sqrt{(1+\hat Q)\Lambda}+1+\sqrt{\hat Q\,\rho_\ast^{2m}N}\Bigr).
\]
Multiplying through by $CN$ and applying $\sqrt{ab}\le\tfrac{a}{4}+b$ (valid since $(\sqrt a/2-\sqrt b)^2\ge0$) to each square root (with $a=1+\hat Q$, $b=C^2\Lambda$ for the first and $a=\hat Q$, $b=C^2\rho_\ast^{2m}N$ for the second, so the $a$-contributions total $\tfrac12\hat Q+\tfrac14$ and are absorbed into the left-hand side) yields
\[
\hat Q\;\le\;C\bigl(\Lambda+\rho_\ast^{2m}N+\eta+P_B+1\bigr)\;\le\;C\bigl(\Lambda+\rho_\ast^{2m}N+\eta\bigr),
\]
which is~\eqref{eq:Qhat_bound}. The conclusion then follows exactly as in Appendix~\ref{sec:pf_rls}. \hfill$\square$

\subsection{Proof of Corollary~\ref{cor:pi_consistency}, and optimality}\label{sec:pf_cor}

For any $J\ge1$, by Cauchy--Schwarz on the first $J$ lags and Lemma~\ref{lem:uniform_geom}(i) on the tail (both $\hat\vartheta,\vartheta^0\in K_B$),
\begin{equation}\label{eq:ell1_split}
\sum_{j\ge1}\|\Delta_j(\hat\vartheta)\|
\;\le\;\sqrt J\Bigl(\sum_{j\ge1}\|\Delta_j(\hat\vartheta)\|_F^2\Bigr)^{1/2}+\frac{2\bar G\rho_\ast^{\,J+1}}{1-\rho_\ast}.
\end{equation}
\emph{(ii)} If $m\ge\log T/\log(1/\rho_\ast)$ then $\rho_\ast^{2m}\le T^{-2}\le\Lambda_\delta/T$, and on the event $E_T$ of the statement also $\eta/T\le C'\,n_{\mathrm{par}}\log T/T\le C'\Lambda_\delta/T$. On the intersection of $E_T$ with the theorem's event, which has probability at least $1-\delta-\delta'$, \eqref{eq:pi_rate} gives $\sum_j\|\Delta_j\|_F^2\le{}C\Lambda_\delta/T$; taking $J:=\lceil\log T/(2\log(1/\rho_\ast))\rceil$, so that $\rho_\ast^{J}\le T^{-1/2}$ and $\sqrt J\le C\sqrt{\log T}$, \eqref{eq:ell1_split} is at most $C\bigl(\sqrt{\Lambda_\delta\log T/T}+T^{-1/2}\bigr)\le C\sqrt{\Lambda_\delta\log T/T}$.
\emph{(i)} Fix $\delta\in(0,1/2)$, $\epsilon>0$ and $t>0$. Under $m\to\infty$ and $m\le T/2$ the standing conditions of the theorem hold for all large $T$ (for fixed $\delta$, $\Lambda_\delta=o(T)$), and $\beta_T:=C(\Lambda_\delta/T+\rho_\ast^{2m}+t)\to C\,t$. On the intersection of the theorem's event with $\{\eta/T\le t\}$, the bound~\eqref{eq:pi_rate} is at most $\beta_T$, and \eqref{eq:ell1_split} with $J=J_T:=\lceil\beta_T^{-1/2}\rceil$ is at most $C\beta_T^{1/4}+C\rho_\ast^{J_T}$, which is smaller than $\epsilon$ for all large $T$ once $t$ is chosen small enough (depending on $\epsilon$), since $\rho_\ast<1$ is fixed. Hence
\[
\limsup_T\;\PP\Bigl(\sum_j\|\Delta_j\|>\epsilon\Bigr)\;\le\;\delta+\limsup_T\;\PP(\eta/T>t)\;=\;\delta,
\]
using $\eta/T\overset{P}{\to}0$; since $\delta$ was arbitrary, the sum tends to $0$ in probability. \hfill$\square$

\begin{remark}[The $T^{-1/2}$ dependence is unavoidable]\label{rem:pi_lower}
The $T^{-1/2}$ dependence in Corollary~\ref{cor:pi_consistency}(ii) cannot be improved, already in a one-parameter Gaussian submodel; we make no claim of minimax optimality in $n_{\mathrm{par}}$ for the summed operator-norm loss, which would require a matching dimension-dependent packing bound. Take $\mTheta=\mz$, $\mSigma=\mI_d$, $\mPhi_1=\gamma\,\vct{e}_1\vct{e}_1^{\top}$ (so $\mPi_1=\mPhi_1$), and the two hypotheses $\gamma\in\{0,h\}$ with $h:=1/\sqrt T$, each observed from its stationary initial law. The Gaussian chain rule bounds the Kullback--Leibler divergence between the two path laws by $\tfrac12\sum_{t=2}^{T}\E_0[(hz_{t-1,1})^2]$ plus the initial-state term $\tfrac12(-h^2-\log(1-h^2))\le h^4$, hence by $Th^2/2+h^4\le1$. Pinsker's inequality gives total variation at most $\sqrt{1/2}$, so Le~Cam's two-point method yields a testing error of at least $(1-\sqrt{1/2})/2\ge0.14$. Since $\sum_{j\ge1}\|\hat\mPi_j-\mPi_j\|\ge\|\hat\mPi_1-\mPi_1\|$ and the two candidate values of $\mPi_1$ are $h$ apart, every estimator incurs $\sum_{j\ge1}\|\hat\mPi_j-\mPi_j\|\ge1/(2\sqrt T)$ with probability at least $0.14$ under one of the two models.
\end{remark}

\subsection{Extension to the multiplicative seasonal VARMA}\label{sec:pf_sarma}

\begin{proof}[Proof of Corollary~\ref{cor:sarma_rate}]
Only Lemma~\ref{lem:uniform_geom} and the well-specification argument interact with the parametrization; we verify these, after which every step of Appendices~\ref{sec:pf_prelim}--\ref{sec:pf_cor} applies verbatim with $n_{\mathrm{par}}=(p+q+P+Q)d^2$, which enters the proof only through the net cardinality and hence $\Lambda_\delta$.

\emph{(i) Expanded stability and uniform geometry.} For $\vartheta\in K_B$ write
\[
\bar{\mPhi}(z;\vartheta):=\mPhi^{(s)}(z^{s};\vartheta)\,\mPhi(z;\vartheta),
\qquad
\bar{\mTheta}(z;\vartheta):=\mTheta^{(s)}(z^{s};\vartheta)\,\mTheta(z;\vartheta)
\]
for the expanded operators of~\eqref{eq:sarma}, and $\mtx{A}(z;\vartheta):=\bar{\mTheta}(z;\vartheta)^{-1}\bar{\mPhi}(z;\vartheta)$. The four per-factor PAR maps are smooth, so, exactly as in the proof of Lemma~\ref{lem:uniform_geom},
\[
\bar\rho:=\max_{\vartheta\in K_B}\max\bigl\{\spr(\mathcal{C}_{\mPhi}),\spr(\mathcal{C}_{\mPhi^{(s)}}),\spr(\mathcal{C}_{\mTheta}),\spr(\mathcal{C}_{\mTheta^{(s)}})\bigr\}
\]
is attained and $<1$.
Factorwise stability implies expanded stability because determinants multiply, $\det\bar{\mTheta}(z;\vartheta)=\det\mTheta^{(s)}(z^{s};\vartheta)\,\det\mTheta(z;\vartheta)$, and the seasonal factor's zeros satisfy $|z|^{s}\ge1/\bar\rho$, while the regular factor's zeros have modulus at least $1/\bar\rho\ge(1/\bar\rho)^{1/s}$ (as $s\ge1$); hence every zero of $\det\bar{\mTheta}$ has modulus at least $(1/\bar\rho)^{1/s}>1$, and $\rho_\ast:=\bigl((1+\bar\rho)/2\bigr)^{1/s}=\tilde\rho^{\,1/s}$ gives a common analytic disk of radius $1/\rho_\ast>1$ for $(z,\vartheta)\mapsto\mtx{A}(z;\vartheta)$ on $K_B$. The four-factor PAR map is smooth, so the Cauchy-estimate proof of Lemma~\ref{lem:uniform_geom} applies unchanged with this $\rho_\ast$, the constants depending additionally on $s$ through the polynomial degrees.

\emph{(ii) Well-specification and true-model constants.} Zero-padding each factor to the fitted orders leaves the four factor polynomials, hence the expanded operators and the transfer function, unchanged, and each factor's PAR map is a bijection onto its stable set (Proposition~\ref{prop:par_bijection}, together with the padding observation at the end of Appendix~\ref{sec:proof_par}), so the padded true coordinates $\vartheta^0$ are well defined and lie in $K_B$ by assumption. The expanded operators are the ordered products $\bar{\mPhi}(z)=\mPhi^{(s)}(z^{s})\mPhi(z)$ and $\bar{\mTheta}(z)=\mTheta^{(s)}(z^{s})\mTheta(z)$ of~\eqref{eq:sarma}; the factors need not commute for $d>1$, so the reversed products are in general different matrix polynomials, but the determinants agree with either ordering, $\det\bar{\mPhi}(z)=\det\mPhi^{(s)}(z^{s})\,\det\mPhi(z)$, and $|z|\le1\Rightarrow|z^{s}|\le1$ shows that the expanded true operators (an ordinary VARMA whose AR and MA polynomials have degrees at most $\bar p+s\bar P$ and $\bar q+s\bar Q$; the leading coefficients $-\mPhi^{(s)}_{\bar P}\mPhi_{\bar p}$ and $-\mTheta^{(s)}_{\bar Q}\mTheta_{\bar q}$ can vanish for $d>1$) satisfy Assumption~\ref{ass:roots}, so the constants $\mu_{\min}$, $c_0$, $C_0$ of Lemma~\ref{lem:filtered} are defined for the true seasonal process exactly as before.

\emph{(iii) Everything else.} The seasonal RLS and JMAP losses are~\eqref{eq:rls_loss} and~\eqref{eq:marginalized_MAP} evaluated through the expanded $\mPi_{1:m}(\vartheta)$, with the group-ridge penalty on the four unconstrained stacks, again bounded by $P_B$ on $K_B$. The decomposition, the events, and the assemblies of Appendices~\ref{sec:pf_events}--\ref{sec:pf_cor} use the parametrization only through Lemma~\ref{lem:uniform_geom}, the dimension $n_{\mathrm{par}}$, and the true-model constants, and therefore apply verbatim. Finally, $\log(1/\rho_\ast)=\log(1/\tilde\rho)/s$, so the condition $m\ge\log T/\log(1/\rho_\ast)$ of Corollary~\ref{cor:pi_consistency}(ii) reads $m\ge s\log T/\log(1/\tilde\rho)$.
\end{proof}

\section{Computational cost}\label{sec:runtimes}
All timings below were measured on the runs reported in Section~\ref{sec:Experiments}, on an Apple M4 Pro (14 cores, 48\,GB). Every timed unit ran with nothing else computing on the machine; a unit is one model on the real-data panels and one (cell, seed, method) triple on the synthetic benchmark. The units run strictly one after another, and the parallelism sits inside each unit: the ridge-grid cross-validation runs one process per grid point, and the rolling-window refits one process per window. Each figure is wall-clock seconds, so the numbers are comparable across the estimators reported here but not across hardware or implementations. The complexity statements of Section~\ref{sec:complexity_S2} are the implementation-independent counterpart, and the comparisons in Section~\ref{sec:Experiments} rest on forecast accuracy rather than on these timings.

Table~\ref{tab:runtime_synth} reports the synthetic sweep: each entry is the mean over the eight seeds and the three training lengths of a dimension, and the standard deviations, a few tenths of a second, show how stable the per-seed cost is. Cost grows with the dimension and is nearly flat in $T$, which is the behaviour the Parseval evaluation of Section~\ref{sec:complexity_S2} predicts: the sufficient statistics are formed once per window, after which each optimizer iteration is independent of $T$. \texttt{bigtime} is fast on these small, short-memory problems, and its cost shows the opposite $T$-profile, roughly doubling from $T=800$ to $3200$ at $d=40$, as its least-squares path is refit over all $T$ observations.

\begin{table}[htbp]
\centering
\caption{Synthetic benchmark, wall-clock seconds per seed (mean $\pm$ standard deviation over the eight seeds and the three training lengths $T$ at each dimension).  A unit comprises the pilot, the eight-point ridge-grid cross-validation (one process per grid point, in parallel), and the refit at the selected $\sigma$, timed with nothing else running; \texttt{bigtime}'s unit is its own pipeline, the penalty-path fits on the cross-validation and full windows plus the selection.}\label{tab:runtime_synth}
\begin{tabular}{l ccc c ccc}
\toprule
& \multicolumn{3}{c}{dense} && \multicolumn{3}{c}{sparse}\\
\cmidrule{2-4}\cmidrule{6-8}
& $d{=}10$ & $d{=}20$ & $d{=}40$ && $d{=}10$ & $d{=}20$ & $d{=}40$\\
\midrule
RLS  & $2.6\pm0.4$ & $8.5\pm0.5$ & $36.6\pm0.6$ && $2.2\pm0.3$ & $8.2\pm0.2$ & $36.8\pm0.5$\\
JMAP & $2.6\pm0.0$ & $8.5\pm0.2$ & $37.9\pm0.5$ && $2.6\pm0.0$ & $8.6\pm0.2$ & $37.8\pm0.7$\\
\texttt{bigtime} & $0.4\pm0.2$ & $0.6\pm0.1$ & $1.7\pm0.4$ && $0.4\pm0.0$ & $0.6\pm0.1$ & $1.7\pm0.4$\\
\bottomrule
\end{tabular}
\end{table}

On the real-data panels the cost is reported per model: its cross-validation plus all rolling-window refits (Table~\ref{tab:runtime_real}). Dominick's is small enough that the entire suite of ten estimators takes under three minutes. The two hourly panels are heavier, mainly through the truncation $m=1024$: every objective evaluation forms $\mPi$ at $1024$ lags, so a seasonal VARMA there costs a few minutes where the synthetic fits at $m=20$ cost seconds.

\begin{table}[t]
\centering
\caption{Real-data panels, wall-clock seconds per model: its cross-validation plus all rolling-window refits ($4$ windows for Beijing, $6$ for Singapore, a single frozen window for Dominick's), each model timed with nothing else running.  Within a model the ridge-grid CV runs one process per grid point and the refits one process per window.  The \texttt{bigtime} entry covers its sparse-VAR and sparse-VARMA modes together, which share one set of fits and are therefore counted once in the total.}\label{tab:runtime_real}
\begin{tabular}{l rrr}
\toprule
& Dominick's & Beijing & Singapore\\
\midrule
seasonal VARMA RLS      & --   & $169$ & $201$\\
seasonal VARMA JMAP     & --   & $193$ & $225$\\
VARMAX RLS              & $43$ & --    & --\\
VARMAX JMAP             & $36$ & --    & --\\
dense VARMA RLS         & $29$ & $144$ & $200$\\
dense VARMA JMAP        & $28$ & $148$ & $221$\\
seasonal VAR RLS        & -- & $67$ & $108$\\
seasonal VAR JMAP       & -- & $77$ & $133$\\
ridge VAR(X)            & $3$ & $5$   & $6$\\
Bayesian VAR(X)         & $9$ & $320$ & $167$\\
componentwise AR        & --  & $4$   & $5$\\
componentwise ARMA(X)      & $8$ & $37$ & $37$\\
\texttt{bigtime}           & $5$ & $1514$ & $2383$\\
training mean, random walk & $<1$ & $3$ & $3$\\
\bottomrule
\end{tabular}
\end{table}

Two features of these numbers matter for practice. RLS and JMAP cost essentially the same, since they differ only in the scalar form of the loss and share the sufficient statistics, and the eight ridge-grid fits are mutually independent, so the selection parallelizes to one process per grid point. Cost grows with the dimension but is nearly flat in $T$, as the sufficient-statistics evaluation predicts.

The \texttt{bigtime} baseline is by far the most expensive model on the hourly panels: its penalty paths at lags up to $96$ are fit in \textsf{R} (each fit single-threaded, the candidate fits and windows running concurrently), and the sparse-VARMA fit at $p=48$ dominates each window.

\section{Connection with state-space models}\label{sec:statespace}

VARMA models are known to have a strong connection with state-space models. As explained in \cite[Section 14.6]{box2015time}, this model can be represented in an equivalent state-space form. Let $r=\max(p,q+1)$, and let $\mTheta_{q+1},\ldots,\mTheta_{r}:=\mz$ and $\mPsi_{p+1},\ldots, \mPsi_r:=\mz$.

As explained \cite[Section 14.5.2]{box2015time}, under Assumption \ref{ass:roots}, the VARMA process also has an infinite moving average representation that can be written as 
\[\vZ_t=\mPsi(B) \va_t:=\sum_{j=0}^{\infty}\mPsi_j \va_{t-j} \]
where $B$ is the backward shift operator. 
As explained in \cite[Section 14.4.1]{box2015time}, the matrices $\mPsi_j$ can be determined from the relation $\mPhi(B)\mPsi(B)=\mTheta(B)$, and satisfy the recursion
\begin{align}\label{eq:mPsirec}
\mPsi_j=\mPhi_1 \mPsi_{j-1}+\mPhi_2\mPsi_{j-2}+\ldots+\mPhi_p \mPsi_{j-p}-\mTheta_j, \quad j=1,2,\ldots,
\end{align}
where $\mPsi_0=\mI$, $\mPsi_j=\mz$ for $j<0$, and $\mTheta_j=\mz$ for $j>q$.

Let $\vY_t$ be an $r\cdot d$ dimensional (unobservable) vector called the state vector. Let us denote $\vY_t(1):=(\vY_t)_{1:d}, \ldots, \vY_t(r):=(\vY_t)_{(r-1)d+1: rd}$. Let $\mtx{H}=[\mtx{I},\mtx{0},\ldots,\mtx{0}]$ be projecting to the first $d$ components, and let $\vZ_t:=\mtx{H} \vY_t=\vY_t(1)$. 

It is explained in  \cite{box2015time} that if we define the stationary VAR process
\begin{align}\vY_t&=\mtx{F}\,\vY_{t-1}+\mPsi \va_t, \text{ where}\\
\label{eq:defmF}\mtx{F}&:=\left[\begin{matrix}\mz & \mI & \mz & \ldots & \mz\\
\mz & \mz & \mI & \ldots & \mz\\
\vdots & \vdots & \vdots & \ddots & \vdots\\
\mz & \mz & \mz & \ldots & \mI\\
\mPhi_r & \mPhi_{r-1} & \ldots  & \ldots &\mPhi_1
\end{matrix}\right] \quad \text{and} \quad \mPsi:=\left[\begin{matrix}\mI\\
\mPsi_1\\
\vdots\\
\mPsi_{r-1}\\
\mPsi_r
\end{matrix}\right],
\end{align}
then $\vZ_t=\vY_t(1)$ is a VARMA process satisfying \eqref{eq:VARMAdef}. The state-space transition matrix $\mtx{F}\in\R^{rd\times rd}$ has the AR coefficients $\mPhi_{1:r}$ in its bottom block-row; it is structurally similar to the $pd\times pd$ AR companion $\mathcal{C}_{\mPhi}$ of Section~\ref{sec:spectral_constraints}, but acts on a higher-dimensional state $\vY_t$ that absorbs the MA component into a recursive form. 

Let $\mGamma_l=\E(\vZ_{t-l} \vZ_{t}')$ denote the lag-$l$ covariance matrices of the VARMA process (by definition, $\mGamma_{-l}=\mGamma_l'$). As explained in Section 14.4.2, these satisfy that
\begin{align}\label{eq:Gammal1}
\mGamma_l&=\sum_{j=1}^{p} \mGamma_{l-j}\mPhi_j'-\sum_{j=l}^{q} \mPsi_{j-l}\mSigma \mTheta_j' \quad 
\text{for}\quad l=0,\ldots,q,\\
\label{eq:Gammal2}
\mGamma_l&=\sum_{j=1}^{p} \mGamma_{l-j}\mPhi_j' \quad \text{for}\quad l>q,
\end{align}
with the convention that $\mTheta_0=-\mI$.

\section{Online versions of the pointwise estimators}\label{sec:online}

In deployment the training window slides: at step $r=0,1,2,\ldots$ the model is refit on the window $\mathcal{W}_r:=\{r\Delta+1,\ldots,r\Delta+T\}$ of length $T$ with stride $\Delta\le T$, standardized per coordinate as $\vZ^{(r)}_t:=(\vY_t-\mu_r)\oslash\sigma_r$ with $(\mu_r,\sigma_r)$ computed from $\mathcal{W}_r$ and $\oslash$ denoting coordinate-wise division. Recomputing the sufficient-statistics pre-pass from scratch at every shift costs $O(T\,m\,d^2)$; the exact identities below reduce the per-shift cost to $O(\Delta\,m\,d^2)$, independent of $T$. Because the standardization changes with $r$, so that every entry of $\vZ^{(r)}$ re-used from the previous window takes a new value, the updates are maintained for the \emph{raw} series $\vY$, and the standardization is applied afterwards in closed form. The discussion covers any objective whose data-dependence is exhausted by $(\mS_0,\mS_1,\mS_2)$, hence both RLS and JMAP.

\paragraph{Raw updates and standardization.}
Every entry of the raw triple $(\mS^{\mathrm{raw}}_0,\mS^{\mathrm{raw}}_1,\mS^{\mathrm{raw}}_2)$ (Eq.~\eqref{eq:LmPiTheta:def} with $\vZ$ replaced by $\vY$) is a sum of outer products over the $N=T-m$ target rows of the window, so a shift by $\Delta$ drops the first $\Delta$ contributions and adds $\Delta$ new ones: with lag blocks $L_j(t):=\vY_{m-j+t}$, the indices aligned along the union segment of the two windows,
\begin{equation}\label{eq:rawupdate}
\begin{aligned}
\mS_0^{\mathrm{raw}}\bigl(\mathcal{W}_{r}\bigr)
&=\mS_0^{\mathrm{raw}}\bigl(\mathcal{W}_{r-1}\bigr)
+\sum_{t=N}^{N+\Delta-1}\!\!\vY_{m+t}\vY_{m+t}'
-\sum_{t=0}^{\Delta-1}\!\vY_{m+t}\vY_{m+t}',\\[2pt]
\mS_1^{\mathrm{raw}}(j;\mathcal{W}_{r})
&=\mS_1^{\mathrm{raw}}(j;\mathcal{W}_{r-1})
+\sum_{t=N}^{N+\Delta-1}\!\!L_j(t)\,\vY_{m+t}'
-\sum_{t=0}^{\Delta-1}\!L_j(t)\,\vY_{m+t}',\\[2pt]
\mS_2^{\mathrm{raw}}(i,j;\mathcal{W}_{r})
&=\mS_2^{\mathrm{raw}}(i,j;\mathcal{W}_{r-1})
+\sum_{t=N}^{N+\Delta-1}\!\!L_i(t)\,L_j(t)'
-\sum_{t=0}^{\Delta-1}\!L_i(t)\,L_j(t)',
\end{aligned}
\end{equation}
at $O(\Delta\,m\,d^2)$ flops and $O(m\,d^2)$ memory traffic per shift; the FFT-cached autocovariance summary of Section~\ref{sec:complexity_S2} updates by the same incremental sums. Maintaining in addition the first-moment sums $\vct{M}^{\mathrm{tgt}}_r:=\sum_{t<N}\vY_{m+t}$ and $\vct{M}^{\mathrm{lag}}_r(j):=\sum_{t<N}L_j(t)$, together with the within-window sums of $\vY_t$ and $\vY_t\odot\vY_t$ defining $(\mu_r,\sigma_r)$ --- all updated by the same drop-and-add pattern --- the normalized statistics of $\vZ^{(r)}$ follow in closed form by expanding the standardization; representatively,
\begin{equation}\label{eq:znorm_stats}
\mS_2(\vZ^{(r)};i,j)=\diag(\sigma_r)^{-1}\bigl[\mS^{\mathrm{raw}}_2(i,j)-\vct{M}^{\mathrm{lag}}_r(i)\,\mu_r'-\mu_r\,(\vct{M}^{\mathrm{lag}}_r(j))'+N\mu_r\mu_r'\bigr]\diag(\sigma_r)^{-1},
\end{equation}
and analogously for $\mS_0$ and $\mS_1$, with the target moments $\vct{M}^{\mathrm{tgt}}_r$ in place of the corresponding lag moments. No observation in the window is re-touched; the transformation costs $O(m^2d^2)$, or $O(m\,d^2)$ in the FFT representation, in which $\mS_2$ is never materialized.

\paragraph{Warm start, exactness and memory.}
The loss for window $r$ is then built exactly as in Sections~\ref{sec:RLS}--\ref{sec:MAP} and minimized by the same unconstrained L-BFGS, initialized at the previous window's optimum in the unconstrained PAR coordinates; feasibility is automatic through the map. Equations~\eqref{eq:rawupdate}--\eqref{eq:znorm_stats} are exact in real arithmetic, so the incrementally maintained statistics define the same mathematical objective as a from-scratch pre-pass; floating-point results can differ slightly, the two computations summing in different orders. Warm-starting from the preceding fit is empirically effective, typically terminating in a few iterations, but we claim no uniform iteration complexity and no equality of the returned local optima; the only statistical approximation, the use of a finite window, is the usual sliding-window modelling choice. The persistent state is $O(m\,d^2)$ floats in the FFT representation, independent of $T$. The experiments of Section~\ref{sec:Experiments} use few, long windows and refit each one afresh; the identities above are the incremental alternative when refits are frequent.

\section{Cost of the seasonal and exogenous extensions}\label{sec:cost_appendix}

Section~\ref{sec:complexity_S2} gives the leading $O(m\log m\,d^2+m\,d^3)$ cost of the base VARMA loss; this appendix accounts for the full per-gradient cost, including the multiplicative-seasonal factors of Section~\ref{sec:seasonal_varma} and the exogenous regressors of the VARMAX extension. Write $\ell_{\mPhi}=p+sP$ and $\ell_{\mTheta}=q+sQ$ for the orders of the expanded operators $(\mPhi_{1:\ell_{\mPhi}},\mTheta_{1:\ell_{\mTheta}})$, and let $n_{\mathrm{fft}}=\Theta(m)$ be the FFT length (the smallest power of two with $n_{\mathrm{fft}}\ge 2m$). The data enter only through a one-time pre-pass that forms the sufficient statistics; this is the sole $T$-dependent step. Direct summation costs $O(T\,m\,d^2)$ flops. The implementation instead evaluates the lagged cross-products by blocked (overlap--save) FFT correlation with length-$\Theta(m)$ blocks (distinct from the length-$n_{\mathrm{fft}}$ transforms below): the forward transforms of each block are shared across the $d^2$ channel pairs, the per-block cross-spectra are accumulated, and a single length-$\Theta(m)$ inverse transform per pair then recovers all $m$ lags, giving $O(T\log m\,d + T\,d^2 + m\log m\,d^2) = O(T\log m\,d^2)$ for the pre-pass. Each subsequent value-and-gradient evaluation is independent of $T$ and costs
\begin{equation}\label{eq:sarma_cost}
\begin{aligned}
&\underbrace{O\!\bigl(m\log m\,d^2 + m\,d^3\bigr)}_{\text{(i) FFT}}
\;+\;\underbrace{O\!\bigl((p^2{+}P^2{+}q^2{+}Q^2)\,d^3\bigr)}_{\text{(ii) PAR maps}}
\;+\;\underbrace{O\!\bigl((P(p{+}1){+}Q(q{+}1))\,d^3\bigr)}_{\text{(iii) factor convolution}}\\
&\qquad\;+\;\underbrace{O\!\bigl((\ell_{\mPhi}{+}\ell_{\mTheta})\,d^2\bigr)}_{\text{(iv) assembly}}
\;+\;\underbrace{O\!\bigl(d^3\bigr)}_{\text{(v) Cholesky}}.
\end{aligned}
\end{equation}
Term~(i) forms the truncated infinite-VAR coefficients $\mPi_{1:m}=\bigl[\mI-\mTheta(z)^{-1}\mPhi(z)\bigr]_{1:m}$ by FFT inversion and evaluates the $\mS_2$ quadratic form by a Parseval (frequency-domain) identity, both at length $n_{\mathrm{fft}}$. Term~(ii) is the four Heaps/Ansley--Kohn PAR maps, each a Durbin--Levinson recursion costing $O(k^2 d^3)$ for a factor of order $k$. Term~(iii) convolves the regular and seasonal factors into the expanded operators ($K(k{+}1)$ block products for a $k\times K$ factor pair). Term~(iv) assembles and zero-pads those expanded coefficients and evaluates the per-factor penalties. Term~(v) is the Cholesky log-determinant of the $\Sigma$-marginalized loss~\eqref{eq:marginalized_MAP}, absent for the RLS loss of Section~\ref{sec:RLS}. The plain VARMA is the case $P=Q=0$, for which (iii) vanishes and (ii) reduces to $O((p^2{+}q^2)d^3)$.

The orders $(p,q,P,Q)$ and the seasonal period $s$ enter only the subdominant terms (ii)--(iv): the leading term~(i) is independent of all of them, because the degree-$\ell_{\mTheta}$ expanded MA polynomial is zero-padded to length $n_{\mathrm{fft}}$ before transformation, so its FFT costs the same regardless of how many of its coefficients are nonzero. A seasonal VARMA therefore evaluates at the same leading $O(m\log m\,d^2+m\,d^3)$ per-gradient cost as a plain VARMA of equal dimension; the seasonal structure adds only the $O(s(P{+}Q)\,d^2)$ growth of~(iv) and the $O((P^2{+}Q^2)d^3)$ of~(ii)--(iii). These subdominant terms are dominated by~(i) whenever $m\gtrsim\ell_{\mPhi}$ (already required for the truncation bound~\eqref{eq:gamma_bias} to be tight), but \eqref{eq:sarma_cost} holds for any $(p,q,P,Q,m,d)$, including high-order regimes (e.g.\ a full-diurnal regular factor $p=s$) where term~(ii) can rival~(i). Reverse-mode automatic differentiation returns the gradient at the same asymptotic cost and stores no recursion intermediates, the FFT inversion having replaced the sequential Box--Jenkins scan. The optimisation itself runs over the $(p{+}P{+}q{+}Q)d^2$ unconstrained PAR coordinates, an $s$-fold smaller vector than the $(\ell_{\mPhi}{+}\ell_{\mTheta})d^2$ of the equivalent unrestricted VARMA$(\ell_{\mPhi},\ell_{\mTheta})$.

\paragraph{Exogenous regressors (VARMAX).} The VARMAX and seasonal-VARMAX extensions of Section~\ref{sec:seasonal_varma} add only lower-order work. The covariates enter the sufficient statistics through the cross-covariances of $\vZ$ with $\vct{x}$, formed in the same $T$-dependent pre-pass, and the exogenous coefficients $\mPi^{\mathrm{x}}(z)=[\mTheta^{(s)}(z^{s})\,\mTheta(z)]^{-1}\mtx{\Xi}(z)$ are obtained by passing $\mtx{\Xi}$ through the same FFT inversion already used to form $\mPi_{1:m}$. The per-gradient cost is therefore that of~\eqref{eq:sarma_cost} at the same leading order, the exogenous block adding only an $O(m\log m\,d\,d_x + m\,d^2 d_x)$ FFT-and-contraction term; in particular it remains independent of $T$ and near-linear in $m$.

\section{Extension to VARMAX models}\label{sec:VARMAX_appendix}

The framework of Sections~\ref{sec:VARMAintro}--\ref{sec:complexity_S2} and the online updates of Appendix~\ref{sec:online} extend to models with eXogenous inputs (VARMAX) by a single augmentation: the exogenous regressors enter the residual, the sufficient statistics, and the Fourier evaluation alongside the lagged outputs, while the PAR map $\mathcal{T}$, the $\Sigma$-marginalisation, the four $(\mM^{\mPhi},\mM^{\mTheta})$ ridge penalties, the Box--Jenkins recursion~\eqref{eq:mPirec}--\eqref{eq:mPirec2} for $\mPi_{1:m}$, and the unconstrained L-BFGS optimisation all carry over verbatim. We record only what changes.

\subsection{The truncated VARMAX$(p,q,s,m)$ model}\label{sec:VARMAX_model}

Let $\vY_t\in\R^d$ be the target series and $\vX_t\in\R^{d_x}$ a vector of exogenous covariates observed in synchrony with $\vY_t$. The VARMAX$(p,q,s)$ model is
\begin{equation}\label{eq:VARMAXdef}
\vY_t \;=\; \sum_{j=1}^p \mPhi_j\,\vY_{t-j} \,+\, \sum_{l=0}^s \mB_l\,\vX_{t-l} \,+\, \va_t \,-\, \sum_{k=1}^q \mTheta_k\,\va_{t-k},
\end{equation}
with $\va_t\overset{\mathrm{iid}}{\sim}N(\bm{0},\mSigma)$, $\mPhi_{1:p}\in\R^{p\times d\times d}$, $\mTheta_{1:q}\in\R^{q\times d\times d}$, and exogenous regression coefficients $\mB_{0:s}\in\R^{(s+1)\times d\times d_x}$. Stationarity is again controlled by Assumption~\ref{ass:roots} on $(\mPhi,\mTheta)$ only; the regression block $\mB_{0:s}$ plays no role in the spectral-radius conditions.

Pre-multiplying~\eqref{eq:VARMAXdef} by $\mTheta(B)^{-1}$ gives the VAR$(\infty)$ form
\[
\vY_t=\sum_{j\ge1}\mPi_j\vY_{t-j}+\sum_{l\ge0}\widetilde{\mB}_l\vX_{t-l}+\va_t,
\]
in which $\mPi_{1:\infty}$ are exactly the pure-VARMA coefficients of~\eqref{eq:mPirec}--\eqref{eq:mPirec2} and the exogenous weights satisfy the analogous Box--Jenkins recursion
\begin{equation}\label{eq:Btilde_rec}
\widetilde{\mB}_l \;=\; \mB_l \,+\, \sum_{k=1}^{\min(l,q)}\mTheta_k\,\widetilde{\mB}_{l-k},\qquad \mB_l=\mz\ (l>s),\ \ \widetilde{\mB}_l=\mz\ (l<0).
\end{equation}
Truncating both sums at lag $m$ gives the working model
\begin{equation}\label{eq:VARMAXtrunc}
\vY_t \;\approx\; \sum_{j=1}^m \mPi_j\,\vY_{t-j} \,+\, \sum_{l=0}^{m}\widetilde{\mB}_l\,\vX_{t-l} \,+\, \va_t.
\end{equation}
The endogenous truncation bound~\eqref{eq:gamma_bias} applies verbatim to $\mPi_{1:m}$ (Proposition~\ref{prop:trunc_bias}). The exogenous weights inherit the same geometric coefficient decay ($\widetilde{\mB}(z)=\mTheta(z)^{-1}\mB(z)$ has the MA inverse as its only non-polynomial factor, so $\|\widetilde{\mB}_l\|\le C'_\rho\,\rho^{l}$ for every $\rho>\spr(\mathcal{C}_{\mTheta})$), but we do not claim a stationary autocovariance bound, since $\{\vX_t\}$ may be deterministic, nonstationary, or merely observed; the dropped tail instead perturbs the one-step mean by $\sum_{l>m}\widetilde{\mB}_l\vX_{t-l}$, negligible at the $m\sim10^3$ we use whenever the covariates do not grow faster than $\rho^{-l}$ (in particular for bounded $\{\vX_t\}$). The regression block $\mB_{0:s}$ enters~\eqref{eq:VARMAXdef} linearly and is unconstrained.

\subsection{Augmented sufficient statistics and residual scatter}\label{sec:VARMAX_stats}

In addition to the VARMA sufficient statistics $\mS_0,\mS_1,\mS_2$ of Section~\ref{sec:LSloss}, the VARMAX residual scatter requires three exogenous cross-statistics computed over the same $N=T-m$ target rows:
\begin{equation}\label{eq:VARMAX_suffstats}
\begin{aligned}
\mS_1^{xy}(l) &\;:=\;\textstyle \sum_t \vX_{t-l}\,\vY_t', & l&=0,\ldots,m,\\
\mS_2^{yx}(j,l) &\;:=\;\textstyle \sum_t \vY_{t-j}\,\vX_{t-l}', & j&=1,\ldots,m,\quad l=0,\ldots,m,\\
\mS_2^{xx}(l,l') &\;:=\;\textstyle \sum_t \vX_{t-l}\,\vX_{t-l'}', & l,l'&=0,\ldots,m.
\end{aligned}
\end{equation}
With $\vr_t:=\vY_t-\sum_j\mPi_j\vY_{t-j}-\sum_l\widetilde{\mB}_l\vX_{t-l}$ the truncated residual of~\eqref{eq:VARMAXtrunc}, the residual scatter $\mR^{(X)}:=\sum_t\vr_t\vr_t'$ extends~\eqref{eq:R_residual_cov} additively:
\begin{equation}\label{eq:R_VARMAX}
\begin{aligned}
\mR^{(X)} \;=\; \mR \;&-\; \sum_{l}\widetilde{\mB}_l\,\mS_1^{xy}(l) \;-\; \sum_{l}\mS_1^{xy}(l)'\widetilde{\mB}_l'\\
&+\; \sum_{j,l} \bigl(\mPi_j\,\mS_2^{yx}(j,l)\,\widetilde{\mB}_l' + \widetilde{\mB}_l\,\mS_2^{yx}(j,l)'\,\mPi_j'\bigr) \;+\; \sum_{l,l'}\widetilde{\mB}_l\,\mS_2^{xx}(l,l')\,\widetilde{\mB}_{l'}',
\end{aligned}
\end{equation}
where $\mR$ is the VARMA scatter~\eqref{eq:R_residual_cov} unchanged. The truncated least-squares criterion is $L_m^{(X)}=\tr\mR^{(X)}$, which depends on the data only through $\{\mS_0,\mS_1,\mS_2,\mS_1^{xy},\mS_2^{yx},\mS_2^{xx}\}$.

\subsection{Regularised least squares and $\Sigma$-marginalised MAP}\label{sec:VARMAX_RLS}

The regularisation of Sections~\ref{sec:RLS}--\ref{sec:MAP} carries over with one addition: fixed-variance ridge penalties on the exogenous coefficients,
\begin{equation}\label{eq:B_prior}
(\mB_l)_{ab} \;\overset{\mathrm{iid}}{\sim}\; N\bigl(0,\sigma^2_{\mB,\,g(a,b)}\bigr),\qquad l=0,\ldots,s,
\end{equation}
with two group variances $\sigma^2_{\mB,\mathrm{direct}},\sigma^2_{\mB,\mathrm{indirect}}$ separating own-channel from cross-channel regressions (tied to a single scalar, or grouped by lag, if preferred). The RLS and $\Sigma$-marginalised losses are then those of Eqs.~\eqref{eq:rls_loss} and~\eqref{eq:marginalized_MAP} with $\mR\to\mR^{(X)}$ of~\eqref{eq:R_VARMAX} and the two $\mB$-ridges appended:
\begin{equation}\label{eq:rls_VARMAX}
\begin{aligned}
\mathcal{L}_m^{\mathrm{RLS},X} &= \tfrac{1}{2}L_m^{(X)} + (\text{4 }\mM\text{-ridges}) + \sum_{g}\tfrac{S_{\mB,g}}{2\sigma^2_{\mB,g}},\\
\mathcal{L}_m^{\mathrm{marg},X} &= \tfrac{\nu_0+N}{2}\,\log\det\!\bigl(\mPsi_0+\mR^{(X)}\bigr) + (\text{the same six ridges}),
\end{aligned}
\end{equation}
with $S_{\mB,g}:=\sum_{l=0}^{s}\sum_{a,b:\,g(a,b)=g}(\mB_l)_{ab}^{2}$ over the two groups $g\in\{\mathrm{direct},\mathrm{indirect}\}$ and $L_m^{(X)}=\tr\mR^{(X)}$. The L-BFGS state is $(\mM^{\mPhi}_{1:p},\mM^{\mTheta}_{1:q},\mB_{0:s})\in\R^{(p+q)d^2+(s+1)d\,d_x}$, unconstrained in every coordinate; the optimisation of Sections~\ref{sec:RLS}--\ref{sec:MAP} is otherwise unchanged.

\subsection{Fourier evaluation and cost}\label{sec:VARMAX_fourier}

The block-Toeplitz Parseval evaluation of Section~\ref{sec:complexity_S2} applies to each new second-moment block of~\eqref{eq:VARMAX_suffstats}: the symmetric exogenous quadratic $\mS_2^{xx}$ exactly as for $\mS_2$ (caching $\widehat{\mC}^{xx}_{\tau}=\sum_t \vX_t\vX_{t+\tau}'$, $\tau=0,\ldots,m-1$), and the non-symmetric cross block $\mS_2^{yx}$ likewise but caching the cross-autocovariance $\widehat{\mC}^{yx}_{\tau}=\sum_t\vY_t\vX_{t+\tau}'$ for both signs of $\tau$; the linear cross term $\mS_1^{xy}$ is already $O(m\,d\,d_x)$ and needs no acceleration. The per-evaluation cost of $\mathcal{L}_m^{\mathrm{RLS},X}$ or $\mathcal{L}_m^{\mathrm{marg},X}$ is then
\begin{equation}\label{eq:flopcount_VARMAX}
O\bigl(m\log m\,(d+d_x)^2 \,+\, m\,(d+d_x)^3\bigr),
\end{equation}
reducing to~\eqref{eq:flopcount} when $d_x=0$, with a single $O(T\log m\,(d+d_x)^2)$ pre-pass (caching $\widehat{\mC}^{yy}$, $\widehat{\mC}^{xx}$, $\widehat{\mC}^{yx}$) amortised across all L-BFGS iterations and rolling-window refits.

\subsection{Online estimation}\label{sec:VARMAX_online}

The rolling-window machinery of Appendix~\ref{sec:online} extends with no structural change. The raw bundle gains the three exogenous blocks $\mS_1^{xy,\mathrm{raw}},\mS_2^{yx,\mathrm{raw}},\mS_2^{xx,\mathrm{raw}}$, each a sum of outer products to which the drop-and-add identity~\eqref{eq:rawupdate} applies verbatim, at per-shift cost $O(\Delta\,m\,(d+d_x)^2)$ independent of $T$; the closed-form standardisation~\eqref{eq:znorm_stats} applies separately to the $\vY$ and $\vX$ blocks via the same conjugate identity~\eqref{eq:znorm_stats}. Warm-started L-BFGS (Appendix~\ref{sec:online}) is unchanged, the warm start now carrying the unconstrained regression block $\mB_{0:s}$ with no projection, at the Fourier cost~\eqref{eq:flopcount_VARMAX} per refit.

\end{document}